\pdfoutput=1
\documentclass[10pt,journal,compsoc]{IEEEtran}
\usepackage[T1]{fontenc}
\usepackage[utf8]{inputenc}
\usepackage{amsmath,amssymb,booktabs,array,cite,url,balance}
\usepackage{tikz}
\usetikzlibrary{arrows.meta,calc}
\newcommand{\tr}{\operatorname{tr}}
\newtheorem{proposition}{Proposition}
\newif\ifshowbios
\showbiosfalse
\begin{document}
\title{WiMoT: Replay-Free Continual Wi-Fi Sensing via Moment Transport}
\author{Chunchun~Yang,
Yang~Cao,
and~Zhitang~Chen%
\IEEEcompsocitemizethanks{\IEEEcompsocthanksitem C. Yang and Y. Cao are with the School of Information Science and Technology, University of Science and Technology of China, Hefei 230026, China.\protect\\
E-mail: ycc1992@mail.ustc.edu.cn, forrest@ustc.edu.cn
\IEEEcompsocthanksitem Z. Chen is an Independent Researcher.\protect\\
E-mail: ztangchen@gmail.com
\IEEEcompsocthanksitem Corresponding author: Yang Cao.}}
\markboth{}%
{Yang \MakeLowercase{\textit{et al.}}: WiMoT: Replay-Free Continual Wi-Fi Sensing via Moment Transport}
\IEEEtitleabstractindextext{%
\begin{abstract}
Continual Wi-Fi sensing must learn new rooms or users (plasticity) while keeping earlier ones (stability), yet sensing nodes often cannot retain channel state information (CSI) for privacy and storage reasons. We propose WiMoT (Wi-Fi Moment Transport), which stores the per-class feature mean and covariance of every earlier domain instead of samples. Stored statistics become stale when the representation changes, so WiMoT restricts the change to an affine map on a frozen feature basis and applies the same map to the statistics, which then stay valid without old samples. Supervised contrastive learning learns the new domain, a closed-form loss on the stored statistics keeps earlier classes separable, and a Gaussian discriminant classifier refitted from the transported statistics predicts without domain identity. On 3DO, WiMANS, and Widar3.0, WiMoT reaches average final accuracies of 94.02\%, 89.61\%, and 95.16\% with at most 0.58 percentage points (pp) of forgetting. On 3DO and Widar3.0 it is 3.77 and 13.11 pp ahead of the strongest of 21 baselines, which uses replay; on WiMANS it matches the strongest replay baseline in accuracy while reducing forgetting from 2.78 to 0.00 pp. Rank-32 covariance factors shrink the moment memory to 6.4\% at a cost of at most 0.49 pp.
\end{abstract}
\begin{IEEEkeywords}
Wi-Fi sensing, channel state information, continual learning, domain-incremental learning, replay-free learning, supervised contrastive learning, class moments.
\end{IEEEkeywords}}
\maketitle
\IEEEdisplaynontitleabstractindextext
\IEEEpeerreviewmaketitle

\IEEEraisesectionheading{\section{Introduction}\label{sec:intro}}
\IEEEPARstart{W}{i-Fi} sensing recognizes human activity from channel state information (CSI) without wearable devices, which makes it attractive for smart-home and other ubiquitous computing services. The CSI of the same activity, however, changes with the room, the measurement day, and the user\cite{widar2019,mobileda2020}. A deployed sensing node therefore has to absorb each new acquisition condition while keeping its recognition of earlier ones, a domain-incremental setting with a fixed activity vocabulary\cite{three2022}. Existing continual Wi-Fi sensing methods retain CSI exemplars\cite{csiaril2023,hewifi2025,edgecl2026,samoec2026} or add task-specific parameters\cite{consense2025}, but retaining recordings of people in their homes is often restricted by privacy rules, and edge nodes have little storage.

A continual learner has to learn each new domain (plasticity) and still recognize the earlier ones (stability). Without stored samples, each of three common designs falls short of this balance. A frozen representation with stored class statistics\cite{fetril2023,fecam2023,ranpac2023} is stable but cannot learn the new domain. An unconstrained update is plastic but forgets: on the nine-user stream evaluated here, fine-tuning forgets 28.78 pp, parameter regularization (SI)\cite{si2017} 20.17 pp, and parameter expansion (ConSense)\cite{consense2025} 27.50 pp. Updating the representation while keeping class means and covariances aims at both, but the stored statistics describe the old representation, and estimating their drift from current data\cite{sdc2020,ldc2024,adagauss2024,dpcr2025} is only approximate for a nonlinear update.

We build WiMoT (Wi-Fi Moment Transport) from this requirement, one step at a time. Stability comes first. Our earlier Optimizing SPCA-based Continual Learning (OSCL) classifier\cite{oscl2023} builds its decision rule only from class means and covariances and optimizes the combining labels of each task; under high-dimensional asymptotics, per-task optimal labels ensure that the optimal error of an earlier task does not increase when new tasks arrive. OSCL is linear, however, and needs a fixed feature space in which the classes are linearly separable, which raw CSI does not provide. WiMoT therefore places OSCL on a frozen time-series foundation model, MantisV2\cite{mantis2025,mantisv2}, and trains a supervised contrastive (SupCon)\cite{supcon2020} projection on the first domain to make the activities linearly separable. A representation frozen in this way inherits the stability of OSCL but learns later domains poorly. To restore plasticity, WiMoT adapts an affine map on the frozen basis to each new domain with SupCon. Adaptation in turn creates two risks: the stored statistics now describe the old representation, and the update can discard directions that separate earlier classes. WiMoT removes both with the same stored moments. Since the mean and covariance of affinely transformed features follow from those of the original features alone, applying the adapter to the stored moments keeps them exact in the new representation without any old sample (moment transport, Proposition~\ref{prop:exact}(i)), and a historical loss computed in closed form from these moments penalizes updates that make earlier labels less linearly readable (Proposition~\ref{prop:exact}(ii)). After every update, OSCL is refitted from the transported moments of all observed domains with equal weight, so it needs no domain identity and does not favor the newest domain. Each step resolves the weakness exposed by the previous one, and the result is a replay-free learner whose memory, representation, and classifier remain consistent with one another. The main contributions of this paper are as follows.
\begin{enumerate}
\item \emph{A memory that does not expire.} Restricting later updates to an affine adapter on a frozen basis lets the model's own map transport the stored moments exactly (Proposition~\ref{prop:exact}), so the memory stays valid without drift estimation. Transport is what makes adaptation useful: without it, AFA drops by 12.1, 3.4, and 31.0 pp (Section~\ref{sec:rq2}), whereas with it the adapter raises AFA by 1.07, 3.33, and 5.93 pp over a frozen representation (Section~\ref{sec:rq2}).
\item \emph{Plasticity and stability from one set of moments.} SupCon supplies plasticity and a historical loss, evaluated in closed form from the stored moments, supplies stability; removing SupCon lowers AFA by 2.54--5.12 pp and removing the historical loss raises forgetting on all three streams (Section~\ref{sec:rq2}). Together they keep forgetting at or below 0.58 pp and give the highest accuracy on earlier domains on all three streams; the balance of new- and earlier-domain accuracy is the highest on 3DO and Widar3.0, 3.03 and 8.51 pp above the best baseline, and within seed variation of the best on WiMANS (Section~\ref{sec:rq2}).
\item \emph{A domain-agnostic OSCL classifier that follows the representation.} We extend OSCL to domain-incremental prediction without domain identity by marginalizing domain-conditioned OSCL posteriors, refitted after every update from the transported moments of all domains. On the same adapters and moments it exceeds the strongest alternative classifier (LDA or QDA) by 1.93, 0.74, and 3.84 pp (Section~\ref{sec:rq2}).
\end{enumerate}

\section{Related Work}
\subsection{Continual Wi-Fi Sensing}
CSI-ARIL retains representative samples with a distillation loss\cite{csiaril2023}, and He \emph{et al.} combine an EWC loss with a memory set\cite{hewifi2025}. ConSense adds per-session prefixes to a frozen attention layer and retrains only neurons whose activations change\cite{consense2025}. EdgeCL replays a core-set selected by clustering and herding, with temperature-softened labels and a sharpness-aware update\cite{edgecl2026}. These methods protect earlier knowledge with stored samples or added parameters; WiMoT stores class moments instead.

\subsection{Continual-Learning Mechanisms}
Regularization methods penalize changes of important parameters, estimated from the Fisher information (EWC)\cite{ewc2017}, the optimization path (SI)\cite{si2017}, or output sensitivity (MAS)\cite{mas2018}; LwF distills the previous model\cite{lwf2018}. Replay methods train on retained observations (ER)\cite{er2019}, preserve their logits (DER++)\cite{der2020}, or constrain gradients with them (A-GEM)\cite{agem2019}. Recent domain-incremental methods use expert mixtures (CN-DPM)\cite{cndpm2020}, task-vector and classifier consolidation (DUCT)\cite{duct2025}, or concept prototypes (DualCP)\cite{dualcp2025}. The historical loss of WiMoT is the squared classification loss of earlier domains, evaluated in closed form from moments through the least-squares identity of analytic continual learning, which obtains a closed-form classifier from correlation matrices of a frozen representation\cite{acil2022,dsal2024}; here it constrains a gradient-trained adapter.

\subsection{Drift Compensation of Stored Statistics}\label{sec:drift}
Exemplar-free class-incremental methods that update the backbone must correct stored class statistics for feature drift. SDC and ADC shift stored means by displacements measured on current or perturbed samples\cite{sdc2020,adc2024}; LDC, AdaGauss, and GATF learn an old-to-new feature map and pass the stored means or Gaussians through it\cite{ldc2024,adagauss2024,gatf2026}; DPCR calibrates stored covariances with a ridge-estimated projection\cite{dpcr2025}; and CASTER applies the affine moment rule with a class-shared map estimated from unlabeled target moments\cite{caster2026}. In all of them the map is estimated from data, so the transported statistics are approximate. WiMoT instead constrains the representation change to an affine adapter on a frozen basis, so that the same map transports the stored moments by an identity (Proposition~\ref{prop:exact}).

\subsection{Statistical Continual Classification}
OSCL optimizes combining labels for spectral classification across tasks from class moments\cite{oscl2023}, with covariance whitening\cite{whitening2018} and shrinkage\cite{ledoit2004} for stability. It needs class moments valid in the current representation, which transport provides.

\section{Problem Setting}\label{sec:retention}
Let $\mathcal D_t=\{(x_i,y_i)\}_{i=1}^{N_t}$ be the training arrivals of domain $t$, with $y_i\in\{1,\ldots,C\}$ and the same label set in every domain. During update $t$, WiMoT accesses $\mathcal D_t$, the current model state, and stored aggregate statistics; it keeps no historical individual and cannot revisit earlier recordings. Domain boundaries and current labels are known during training. At inference, the predictor receives only $x$; the true domain identity is used solely by the evaluator. The replay baselines of Section~\ref{sec:experiments} keep a common class-balanced cache of fixed capacity (240, 36, and 42 individuals on 3DO, WiMANS, and Widar3.0), described in Supplementary Section~S2.

\section{Proposed Method}
\subsection{Design Rationale}\label{sec:rationale}
Continual learning without stored samples has to learn each new domain (plasticity) and keep recognizing the earlier ones (stability). WiMoT keeps one compact memory in place of the samples, the class means and covariances of every domain in a frozen feature basis. This memory raises three problems, and each has one mechanism. (i) \emph{Stored statistics go stale.} When the representation changes, they describe the old one. WiMoT restricts the change after the first domain to an affine map and applies the same map to the memory, so that it stays valid without drift estimation (Section~\ref{sec:transport}); without transport, the AFA falls by 12.1, 3.4, and 31.0 pp relative to transport (Section~\ref{sec:rq2}). (ii) \emph{The representation must learn and also remember.} An update can fit the new domain and discard directions that separate earlier classes, and no classifier can recover information that the representation no longer carries. SupCon supplies plasticity, and a historical loss computed from the memory keeps earlier classes separable (Section~\ref{sec:balance}). (iii) \emph{The classifier must follow the representation and treat all domains evenly.} A classifier fitted on current samples favors the newest domain, and a stored decision rule lags behind an updated representation. The OSCL classifier is refitted after every update from the transported memory of all domains with equal weight, without a domain identity (Section~\ref{sec:inference}).

\subsection{Feature Basis and Model}
\begin{figure*}[t]
\centering
\begin{tikzpicture}[x=1mm,y=1mm,>=Latex,font=\fontsize{8.5}{10}\selectfont,
 lab/.style={align=center,inner sep=1pt},
 small/.style={lab,font=\fontsize{7.5}{9}\selectfont},
 title/.style={anchor=west,font=\fontsize{9}{11}\selectfont\bfseries},
 arr/.style={->,draw=black!70,line width=.75pt,shorten >=.4mm,shorten <=.4mm},
 main/.style={->,draw=black!85,line width=1.2pt,shorten >=.4mm,shorten <=.4mm},
 tarr/.style={->,draw=fborange,line width=.75pt,shorten >=.4mm,shorten <=.4mm},
 sarr/.style={->,draw=fbgreen,line width=.85pt,shorten >=.4mm,shorten <=.4mm},
 frozen/.style={lab,draw=fbblue,fill=white,rounded corners=1.5pt,line width=.65pt},
 train/.style={lab,draw=fborange,fill=fborange!5,rounded corners=1.5pt,line width=.7pt},
 hero/.style={lab,draw=fborange,fill=fborange!9,rounded corners=2pt,line width=1.15pt},
 stat/.style={lab,draw=fbgreen,fill=fbgreen!5,rounded corners=1.5pt,line width=.7pt},
 stathero/.style={lab,draw=fbgreen,fill=fbgreen!10,rounded corners=2pt,line width=1.15pt},
 port/.style={lab,draw=black!45,fill=white,rounded corners=2.5mm,minimum height=5mm,minimum width=6mm,line width=.55pt},
 dim/.style={small,text=black!60,fill=white,inner sep=.6pt},
 note/.style={small,text=black!60,font=\fontsize{7.5}{9}\selectfont\itshape}]
\definecolor{fbblue}{HTML}{0072B2}
\definecolor{fborange}{HTML}{B85B16}
\definecolor{fbgreen}{HTML}{007F73}
\path[use as bounding box] (0,0) rectangle (178,100);
\fill[fbblue!4,rounded corners=2pt] (0,62) rectangle (99.5,100);
\fill[fbgreen!4,rounded corners=2pt] (78,7) rectangle (178,31);
\node[title,text=fbblue] at (2,96.6) {(a) Frozen feature basis};
\node[small,anchor=east,text=fbblue] at (98,96.6) {fixed after domain 1};
\node[title,text=black!80] at (103,96.6) {(b) Adaptation and prediction};
\node[title,text=fbgreen] at (80,10.4) {(c) Moment memory};
\foreach \i in {0,...,4}{\foreach \j in {0,...,4}{
 \pgfmathtruncatemacro{\shade}{15+mod(13*\i+19*\j,40)}
 \fill[fbblue!\shade] (2+1.9*\i,76.3+1.9*\j) rectangle ++(1.7,1.7);
}}
\draw[black!45,line width=.5pt] (1.6,75.9) rectangle (11.6,86.1);
\node[small] at (6.6,89) {CSI $x$};
\node[frozen,small,minimum width=20mm,minimum height=14mm] (enc) at (28,81) {Pretrained TS\\encoder\\(frozen)};
\draw[main] (11.6,81)--(enc.west);
\node[frozen,small,minimum width=32mm,minimum height=7.5mm] (sem) at (65,89.5) {Semantic stem\\$9216{\to}512{\to}128$};
\node[frozen,small,minimum width=32mm,minimum height=7.5mm] (pca) at (65,81) {First-domain PCA\\$9216{\to}256$};
\node[frozen,small,minimum width=32mm,minimum height=7.5mm] (amp) at (65,72.5) {Random amplitude stem\\$512{\to}1024{\to}128$};
\coordinate (mj) at (43,81);
\draw[main,shorten >=0mm] (enc.east)--(mj);
\fill[black!85] (mj) circle(.65);
\draw[main] (mj)|-(sem.west);
\draw[main] (mj)--(pca.west);
\draw[main] (6.6,75.9)--(6.6,66)-|(45,72.5)--(amp.west);
\node[dim,fill=fbblue!4,text=fbblue] at (26,66) {amplitude mean + SD $\to512$};
\node[lab,draw=fbblue,fill=white,rounded corners=2.5mm,line width=.85pt,minimum width=7mm,minimum height=25mm] (phi) at (92,81) {$\phi$\\[6pt]{\fontsize{6.5}{7}\selectfont$\mathbb R^{512}$}};
\foreach \n in {sem,pca,amp}{\draw[main] (\n.east)--(\n.east-|phi.west);}
\node[hero,minimum width=21mm,minimum height=14mm] (adapter) at (113.5,81)
 {\textbf{Affine adapter}\\$h=A_t\phi+b_t$\\$512\to128$};
\draw[main] (phi.east|-adapter.west)--(adapter.west);
\node[port,draw=fborange,line width=.85pt] (h) at (134,81) {$h$};
\draw[main] (adapter.east)--(h.west);
\node[stathero,minimum width=24mm,minimum height=14mm] (oscl) at (155.5,81)
 {\textbf{OSCL classifier}\\whitening + spectral\\classifier $\sum_s g_s\,p_s$};
\draw[main] (h.east)--(oscl.west);
\node[lab,font=\fontsize{10}{10}\selectfont] (yhat) at (174.5,81) {$\hat y$};
\draw[main] (oscl.east)--(yhat.west);
\node[note,text=black!70,anchor=east] at (177,92) {$\hat y=\arg\max_c p_{\mathrm{OSCL}}(c\mid x)$; no domain identity};
\draw[fborange!70,dashed,rounded corners=2pt,line width=.6pt] (128,37) rectangle (175,65);
\node[small,text=fborange,anchor=north east] at (174.5,64.6) {training only};
\node[train,small,minimum width=14mm,minimum height=7.5mm] (proj) at (148,54) {Projector\\$\to\ell_2$-norm};
\node[train,small,minimum width=12mm,minimum height=7.5mm] (sc) at (166,54) {SupCon\\$\mathcal L_{\mathrm{SC}}$};
\node[train,small,minimum width=14mm,minimum height=7.5mm] (wmap) at (148,43.5) {Map $W$\\$128{\to}C$};
\node[train,small,minimum width=12mm,minimum height=7.5mm,line width=1pt] (lh) at (166,43.5) {$\mathcal L_{\mathrm{hist}}$};
\coordinate (hj) at (134,54);
\draw[tarr,shorten >=0mm] (h.south)--(hj);
\fill[fborange] (hj) circle(.6);
\draw[tarr] (hj)--(proj.west);
\draw[tarr] (hj)|-(wmap.west);
\draw[tarr] (proj.east)--(sc.west);
\draw[tarr] (wmap.east)--(lh.west);
\draw[->,draw=fborange,dashed,line width=.75pt,shorten >=.4mm] (128,59.5)-|(adapter.south);
\node[small,text=fborange,anchor=south] at (121,59.7) {$\nabla$};
\node[small,text=black!75] at (112,48.5) {gradients of\\$0.5\,\mathcal L_{\mathrm{SC}}+\mathcal L_{\mathrm{hist}}$\\update adapter,\\projector and $W$\\(domain 1: $\mathcal L_{\mathrm{SC}}$ only)};
\node[stat,minimum width=26mm,minimum height=15mm] (mom) at (95,21) {};
\node[lab,font=\fontsize{8}{9.5}\selectfont\bfseries] at (95,25.5) {Moment memory};
\foreach \i/\sym in {0/n,1/\mu,2/\Sigma}{
 \draw[fbgreen!70,fill=white,line width=.45pt] (85.4+6.8*\i,18) rectangle ++(5.9,4.2);
 \node[small] at (88.35+6.8*\i,20.1) {$\sym$};
}
\node[small] at (95,15.6) {per domain and class};
\draw[sarr] (phi.south)--(phi.south|-mom.north);
\node[small,anchor=east,text=fbgreen] at (91,58.5) {store moments of $(\phi,y)$,\\then discard samples};
\node[stat,minimum width=26mm,minimum height=15mm] (transport) at (128,21)
 {\textbf{Transport}\\$\mu^h=A_t\mu^\phi+b_t$\\$\Sigma^h=A_t\Sigma^\phi A_t^{\top}$};
\draw[sarr] (mom.east)--(transport.west);
\draw[sarr] (104,28.5)--(104,33)-|(lh.south);
\node[dim,fill=white,text=fbgreen] at (150,33) {$B_s,K_s$ to $\mathcal L_{\mathrm{hist}}$};
\draw[sarr] (transport.east)-|(176.5,69)-|(oscl.south);
\node[dim,fill=fbgreen!4,text=fbgreen,anchor=south] at (158,21.3) {class moments in $h$};
\node[small,anchor=west,font=\fontsize{8}{9.5}\selectfont\bfseries] at (4,52) {Legend};
\draw[fbblue,fill=white,rounded corners=1pt,line width=.65pt] (4,44.5) rectangle ++(5,3);
\node[small,anchor=west] at (10.5,46) {frozen after domain 1};
\draw[fborange,fill=fborange!7,rounded corners=1pt,line width=.75pt] (4,38.5) rectangle ++(5,3);
\node[small,anchor=west] at (10.5,40) {trained by gradient};
\draw[fbgreen,fill=fbgreen!7,rounded corners=1pt,line width=.75pt] (4,32.5) rectangle ++(5,3);
\node[small,anchor=west] at (10.5,34) {aggregate statistics};
\draw[main] (42,46)--(50,46);
\node[small,anchor=west] at (51,46) {inference path};
\draw[tarr] (42,40)--(50,40);
\node[small,anchor=west] at (51,40) {training-only path};
\draw[sarr] (42,34)--(50,34);
\node[small,anchor=west] at (51,34) {moment path};
\node[note,anchor=west,align=left] at (4,21) {No historical sample is stored or replayed;\\$\mathcal L_{\mathrm{hist}}$ and OSCL read only\\aggregate moments.};
\end{tikzpicture}

\caption{WiMoT framework. (a) A frozen encoder, the semantic stem, the first-domain PCA shortcut, and a fixed random amplitude stem form the basis $\phi$. (b) The affine adapter maps $\phi$ to $h$, from which the OSCL classifier predicts $\hat y$ (bold path); the projector with SupCon and the linear map $W$ with $\mathcal L_{\mathrm{hist}}$ are used only in training. (c) Class moments of each domain are stored before its samples are deleted; they supply $B_s,K_s$ to $\mathcal L_{\mathrm{hist}}$ and, transported through the current adapter, the moments used by OSCL.}
\label{fig:framework}
\end{figure*}

Fig.~\ref{fig:framework} shows the model. All datasets use the same frozen MantisV2 checkpoint\cite{mantis2025,mantisv2} with channelwise temporal processing. Each CSI channel is interpolated to 512 samples, and the classification token and token mean of the third of six pretrained attention layers give 512 coordinates per channel. Nine-group aggregation of channel embedding means and standard deviations yields 9216 semantic coordinates. Temporal channel means and standard deviations, each interpolated along the channel axis to 256 coordinates, form a 512-dimensional amplitude descriptor, so the common input has 9728 dimensions. Normalization follows Supplementary Section~S2. The evaluated tokenizer and first three attention layers hold 2.15 of the 4.19 million parameters of the checkpoint.

A semantic stem with widths $9216\to512\to128$, batch normalization, and ReLU, and a fixed random amplitude stem with widths $512\to1024\to128$ feed an affine $256\to128$ adapter and an affine $128\to128$ contrastive projector. Both stems are frozen after the first domain. A 256-dimensional PCA shortcut fitted to first-domain semantic features is then appended to the stem outputs, forming the frozen basis $\phi(x)\in\mathbb R^{512}$, and the adapter is expanded to $512\to128$ with zero-initialized new columns, which preserves the first-domain function. Subsequently,
\begin{equation}
 h_t=A_t\phi(x)+b_t.
\end{equation}
The statistical classifier operates on $h_t$. The contrastive projector, with weight $P_t$ and bias $p_t$, maps $h_t$ to the unit vector $u_t=(P_th_t+p_t)/\|P_th_t+p_t\|_2$, on which SupCon compares features by cosine similarity (training only). An affine map $h\mapsto W_th+d_t$ to $\mathbb R^{C}$ appears only in the historical loss of Section~\ref{sec:balance} and is not used for prediction.

\subsection{Keeping the Memory Valid: Moment Transport}\label{sec:transport}
A class memory is useful only while it describes the current representation, and after a nonlinear update it does not. WiMoT therefore adapts only an affine map after the first domain. For each observed domain $s$ and class $c$, it stores the count $n_{sc}$, the mean $\mu^{\phi}_{sc}$, and the sample covariance $\Sigma^{\phi}_{sc}$ in the frozen basis. If the deleted samples $\phi_i$ of this cell were mapped to $A_t\phi_i+b_t$, their mean would be $A_t\mu^{\phi}_{sc}+b_t$ and their covariance $A_t\Sigma^{\phi}_{sc}A_t^\top$, by linearity of the mean and bilinearity of the covariance. After an adapter update the memory is therefore transported by
\begin{equation}
 \mu^h_{sc}=A_t\mu^{\phi}_{sc}+b_t,\qquad
 \Sigma^h_{sc}=A_t\Sigma^{\phi}_{sc}A_t^\top,
\end{equation}
with neither a sample nor an estimate. This is the standard affine moment rule, which CASTER and DPCR apply with estimated maps\cite{caster2026,dpcr2025}; here $A_t$ and $b_t$ are the model's own map, so the transported moments coincide with the sample moments of the transformed deleted samples (Proposition~\ref{prop:exact}(i)). The identity concerns only the stored statistics: it holds after any affine update, whatever loss trained the adapter, and it does not say that earlier classes remain separable, which Section~\ref{sec:balance} addresses. The price is that nonlinear adaptation is limited to the first domain; Section~\ref{sec:rq2} shows that the remaining affine adaptation still matters and that it requires transport.

\subsection{Balancing Plasticity and Stability: SupCon and the Historical Loss}\label{sec:balance}
Transport keeps the stored statistics valid, but an update that fits the current domain can still discard directions that separate earlier classes. \emph{Plasticity.} SupCon\cite{supcon2020} pulls same-class features of the current arrivals together and pushes different classes apart, which lets the adapter fit the new domain. \emph{Stability.} Without stored samples nothing opposes this fit, so the historical loss does, using the transported moments. For each earlier domain it is the squared error with which a linear score map of the representation predicts the one-hot labels of the deleted samples. This error depends on the samples only through their second moment and label-cross moment, which WiMoT stores, so it equals the sample-level loss and is computed in closed form (Proposition~\ref{prop:exact}(ii)). An update that gains on the current domain at the cost of the linear readability of earlier labels is penalized. Linear readability is a proxy for the accuracy of the OSCL classifier, not a guarantee.

Formally, let $\bar\phi=[\phi^\top,1]^\top$, let $e_c$ be the one-hot vector of class $c$, let $q_c=(e_c-\mathbf 1/C)/\sqrt{1-1/C}$, and let
\begin{equation}
 \widetilde F_t=\frac1{a_1}\Bigl(I_C-\frac{\mathbf 1\mathbf 1^\top}{C}\Bigr)\bigl[W_tA_t,\;W_tb_t+d_t\bigr].
\end{equation}
Here $q_c$ is a fixed unit-norm target vector of class $c$ (the one-hot vector with its class mean removed), not a per-sample label. The matrix $\widetilde F_t\in\mathbb R^{C\times513}$ maps $\bar\phi$ to class scores through the adapter and the linear map; the centering removes the part common to all classes, and the constant $a_1$, fixed after the first domain, puts the scores on the scale of the targets (Supplementary Section~S2). For an earlier domain $s$, the class-balanced second and label-cross moments are
\begin{equation}
 B_s=\frac1C\sum_c\frac1{n_{sc}}\sum_{i:y_i=c}\bar\phi_i\bar\phi_i^\top,
 \quad K_s=\frac1C\sum_cq_c[\mu_{sc}^{\phi\top},1],
\end{equation}
and the historical loss is
\begin{equation}
 \mathcal L_{\mathrm{hist}}=\frac1{t-1}\sum_{s<t}\ell_s,\quad
 \ell_s=\tr\bigl(\widetilde F_tB_s\widetilde F_t^\top\bigr)-2\bigl\langle\widetilde F_t,K_s\bigr\rangle_F+1.
\end{equation}
The following statement makes the two uses of the stored moments precise.

\begin{proposition}\label{prop:exact}
Let $\phi$ be fixed and $h=A\phi+b$. (i) For every stored domain/class cell, the sample mean and sample covariance of $\{A\phi_i+b\}$ over the deleted samples of the cell are $A\mu^{\phi}_{sc}+b$ and $A\Sigma^{\phi}_{sc}A^\top$. (ii) For every $F\in\mathbb R^{C\times513}$ and $\widetilde F=F/a_1$,
\begin{equation}
 \ell_s=\frac1C\sum_{c}\frac1{n_{sc}}\sum_{i:y_i=c}\bigl\|\widetilde F\bar\phi_i-q_c\bigr\|_2^2 .
\end{equation}
\end{proposition}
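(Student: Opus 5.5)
Part (i) follows from linearity of the empirical mean and bilinearity of the empirical covariance. For a cell $(s,c)$ with deleted samples $\phi_i$, $i\in I_{sc}$, $|I_{sc}|=n_{sc}$, I would write the sample mean of $h_i=A\phi_i+b$ as $\frac1{n_{sc}}\sum_i(A\phi_i+b)=A\mu^\phi_{sc}+b$. The centered transformed samples are then $h_i-\bar h=A(\phi_i-\mu^\phi_{sc})$, because $b$ cancels. Substituting this into the sample covariance, with the same normalization ($1/n_{sc}$ or $1/(n_{sc}-1)$) used for $\Sigma^\phi_{sc}$, pulls $A$ out on the left and $A^\top$ on the right and gives $A\Sigma^\phi_{sc}A^\top$. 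Nothing about how $A,b$ were trained enters.

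For part (ii), I would expand the squared norm sample by sample:
\[
\|\widetilde F\bar\phi_i-q_c\|_2^2=\tr(\widetilde F\bar\phi_i\bar\phi_i^\top\widetilde F^\top)-2\,q_c^\top\widetilde F\bar\phi_i+\|q_c\|_2^2 .
\]
Then I would apply the class-balanced average $\frac1C\sum_c\frac1{n_{sc}}\sum_{i:y_i=c}$ to each of the three terms.
\begin{itemize}
\item The quadratic term is linear in $\bar\phi_i\bar\phi_i^\top$, so the average moves inside the trace and produces $\tr(\widetilde FB_s\widetilde F^\top)$ exactly by the definition of $B_s$.
\item For the cross term, I would write $q_c^\top\widetilde F\bar\phi_i=\langle\widetilde F,q_c\bar\phi_i^\top\rangle_F$. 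Averaging over $i$ in class $c$ replaces $\bar\phi_i$ by $[\mu^{\phi\top}_{sc},1]^\top$, so the class-balanced average gives $\langle\widetilde F,K_s\rangle_F$.
\item The constant term equals one. I would check this from $\|e_c-\mathbf 1/C\|^2=(1-1/C)^2+(C-1)/C^2=1-1/C$, so $\|q_c\|=1$ and its average is $1$.
\end{itemize}
Summing the three averaged terms reproduces $\ell_s$.

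The main thing to watch is the bookkeeping rather than a genuine obstacle. $B_s$ must be recoverable from the stored moments: $B_s=\frac1C\sum_c\bigl(\Sigma+\mu\mu^\top\bigr)$ written in augmented form, with the covariance normalized by $1/n_{sc}$, or corrected by $(n_{sc}-1)/n_{sc}$ if the stored covariance is unbiased. This is needed for the identity to be computable from memory. The identity itself holds for every $F$, because the scaling by $a_1$ and the centering projector only define $\widetilde F$ and play no role in the algebra.
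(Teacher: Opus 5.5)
Your proof is correct and follows the paper's own argument. Part (i) rests on linearity of the mean and bilinearity of the covariance. Part (ii) uses the same three-term expansion of the square: the class-balanced average turns the quadratic term into $\tr(\widetilde FB_s\widetilde F^\top)$, the cross term into $-2\langle\widetilde F,K_s\rangle_F$ through the class means $[\mu^{\phi\top}_{sc},1]^\top$, and the constant into $\|q_c\|_2^2=1$. Your closing remark on rebuilding $B_s$ from $(n_{sc},\mu^\phi_{sc},\Sigma^\phi_{sc})$ is valid but not needed, because the paper stores $B_s$ and $K_s$ directly as aggregates.
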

\begin{IEEEproof}
Part (i) follows from the linearity of the mean and the bilinearity of the covariance. For (ii), expanding the square gives $\tr(\widetilde F\bar\phi_i\bar\phi_i^\top\widetilde F^\top)-2q_c^\top\widetilde F\bar\phi_i+\|q_c\|_2^2$. The class-balanced average of the first term is $\tr(\widetilde FB_s\widetilde F^\top)$, that of the second is $-2\langle\widetilde F,K_s\rangle_F$ because the class average of $\bar\phi_i$ is $[\mu_{sc}^{\phi\top},1]^\top$, and $\|q_c\|_2=1$.
\end{IEEEproof}

Part (i) is the transport identity of Section~\ref{sec:transport}; it holds for any affine adapter, independently of $\mathcal L_{\mathrm{hist}}$, and lets OSCL use the historical moments in the current representation (Fig.~\ref{fig:transport}(c)). Part (ii) is separate: it makes $\mathcal L_{\mathrm{hist}}$ the true-label loss of the deleted samples (Fig.~\ref{fig:transport}(b)). $W_t$ is a probe of historical label information, not a classifier. Both statements hold only for the fixed basis and the affine adapter.

\begin{figure*}[t]
\centering
\input{fig_transport.tex}
\caption{The three components of WiMoT, shown on synthetic two-dimensional points. (a) SupCon gives plasticity. (b) The historical loss gives representation stability: earlier classes remain linearly separable (rings: samples misread by the best linear map $W$). (c) Moment transport gives memory stability: the stored statistics remain valid after an adapter update.}
\label{fig:transport}
\end{figure*}

The two losses are trained jointly, with $\mathcal L_{\mathrm{SC}}$ the SupCon loss with temperature $\tau=0.07$ on the normalized projections $u_i$ (Supplementary Section~S2). The training objective is
\begin{equation}
 \mathcal L_t=\begin{cases}0.5\,\mathcal L_{\mathrm{SC}}, & t=1,\\[2pt] 0.5\,\mathcal L_{\mathrm{SC}}+\mathcal L_{\mathrm{hist}}, & t>1.\end{cases}
\end{equation}
The weights 0.5 and 1 are used for all datasets; Section~\ref{sec:robustness} shows that the results are insensitive to them.
In the first domain, $\mathcal L_{\mathrm{SC}}$ trains the semantic stem, the adapter, and the projector. In later domains, $\mathcal L_{\mathrm{SC}}$ updates the adapter and projector on current arrivals, and $\mathcal L_{\mathrm{hist}}$ updates the adapter and $W$ from stored moments. The statistical classifier is not part of the gradient objective; it is fitted from the transported moments after each domain. Training uses AdamW with batch size 128, learning rate $10^{-3}$, weight decay $10^{-4}$, and cosine decay, for 100 epochs in the first domain and 50 in each later domain.

\subsection{A Classifier That Follows the Representation: OSCL}\label{sec:inference}
A classifier fitted on current samples sees the newest domain only, favors it, and becomes outdated when the representation changes. WiMoT instead rebuilds its classifier after every update from the transported moments of all observed domains, with equal weight per domain, so that the classifier agrees with the current representation and earlier domains enter prediction with the same weight as the current one. The classifier is OSCL\cite{oscl2023}, which builds its decision rule from class moments alone. It whitens the features with the pooled class covariance and scores a query along a direction $w=Z^\top\eta$, a combination of the class means weighted by combining labels $\eta$. Because the score of each class then has a mean and variance given by the class moments, the classification error can be estimated in closed form and minimized over $\eta$, instead of fixing $\eta$ to one-hot labels as in supervised PCA. This is the source of the no-forgetting property of OSCL: when the labels are optimized separately for each task, the earlier optimal labels, padded with zeros, remain feasible after new tasks are added, so under high-dimensional asymptotics the optimal error of an earlier task cannot increase\cite{oscl2023}. The property presupposes a fixed feature space and class moments that are valid in it, which the frozen basis and moment transport supply.

WiMoT applies OSCL without domain identity. A shared 128-dimensional Gaussian density model of $h$, built from the transported moments with uniform domain and class priors, gives by Bayes' rule the probability $g_s(x)$ that the query comes from domain $s$, and the OSCL classifier of domain $s$ gives the class posterior $p_s(c\mid x)$. Summing over the observed domains removes the unknown domain:
\begin{equation}
 \hat y(x)=\arg\max_{c}\sum_{s\le t}g_s(x)\,p_s(c\mid x).
\end{equation}
A query is thus decided mainly by the OSCL rule of the domains it resembles, and no domain identifier is supplied; OSCL is the only classifier of WiMoT. In WiMoT the domains share labels optimized with equal weight and the risk is a finite-sample estimate, so the asymptotic guarantee motivates the classifier rather than bounding its forgetting; the ablation in Section~\ref{sec:rq2} tests the choice. The whitening, combining-label optimization, and spectral weighting of the domain-conditioned densities are given in Supplementary Section~S3; the score family (Gaussian or Student-$t$) and the domain gate are chosen before testing by two-fold cross-fitting of the statistical head on training statistics.

\subsection{Update Procedure and Complexity}
Each domain after the first is learned in five steps: encode the current arrivals once in the fixed basis $\phi$; compute their class moments $(n_{tc},\mu^{\phi}_{tc},\Sigma^{\phi}_{tc})$ and keep all earlier moments in basis coordinates; optimize $0.5\mathcal L_{\mathrm{SC}}+\mathcal L_{\mathrm{hist}}$ for 50 epochs, where $\mathcal L_{\mathrm{hist}}$ reads $B_s,K_s$ of all earlier domains; transport all stored class moments through the updated adapter and refit the OSCL classifier; add $B_t,K_t$ to the historical constraints and delete the arrivals. With basis width $d=512$ and adapter width $k=128$, evaluating $\mathcal L_{\mathrm{hist}}$ costs $O((t-1)C(d+1)^2)$ per step, independent of the number of historical samples, and the refit costs $O(tC(kd^2+k^2d))$ once per update. Covariance storage grows as $O(TCd^2)$, not with the number of samples (Section~\ref{sec:deploy}).

\section{Experiments}\label{sec:experiments}

The experiments address four questions that follow the design rationale of Section~\ref{sec:rationale}: how WiMoT compares with replay and regularization baselines without storing samples (RQ1), whether it balances plasticity and stability and what each mechanism contributes (RQ2), whether the lead holds across domain orders, rooms, label budgets, frozen encoders, and loss weights (RQ3), and what it costs on an edge node (RQ4).

\subsection{Setup}
\emph{Data streams.} Table~\ref{tab:data} lists the three streams. The 3DO stream uses three acquisition days and three activities\cite{dataset3do2024}. The WiMANS stream uses the classroom, meeting-room, and empty-room recordings and nine activity labels\cite{wimans2024}, a recording-level subset of the multi-user benchmark. The Widar3.0 stream uses the nine users recorded in Room~1 on 2018-11-30 as nine domains and six gestures shared by their archives\cite{widar2019}; repetitions 1--3, 4, and 5 of each user, gesture, location, and orientation form the training, validation, and test partitions (Supplementary Section~S6).

\emph{Baselines.} We compare twenty-one baseline configurations: fine-tuning, ER\cite{er2019}, EWC\cite{ewc2017}, SI\cite{si2017}, MAS\cite{mas2018}, LwF\cite{lwf2018}, DER++\cite{der2020}, and A-GEM\cite{agem2019}, with separate native and ER-augmented versions of EWC, SI, MAS, and LwF; the recent domain-incremental methods CN-DPM\cite{cndpm2020}, DUCT\cite{duct2025}, and DualCP with its VanillaCP variant\cite{dualcp2025}, each without and with replay; and the Wi-Fi continual-learning methods ConSense\cite{consense2025} and EdgeCL\cite{edgecl2026}. Every method, WiMoT and each baseline, receives a 9728-dimensional input consisting of the 9216-dimensional frozen Mantis features and the 512-dimensional amplitude descriptor; the baselines process it through independently trained networks (ConSense and EdgeCL read it as 19 tokens), with the recent and Wi-Fi methods adapted from their published method cores to this interface. Each method coefficient is selected from three prespecified candidates by the seed-42 validation AFA of 3DO and WiMANS and then fixed; DualCP needed an additional candidate (Supplementary Section~S1). Replay baselines use the cache of Section~\ref{sec:retention}.

\emph{Protocol and metrics.} Every configuration is trained with seeds 42--44, with 100 epochs in the first domain and 50 in each later domain. After each update, all observed domains are evaluated without supplying their identity. With $A_{s,t}$ the test accuracy on domain $t$ after update $s$, average final accuracy (AFA) and average forgetting (AFR) are
\begin{equation}
 \begin{aligned}
 \mathrm{AFA}&=\frac1T\sum_{t=1}^T A_{T,t},\\
 \mathrm{AFR}&=\frac1{T-1}\sum_{t<T}\left(\max_{t\le s<T}A_{s,t}-A_{T,t}\right).
 \end{aligned}
\end{equation}
AFA is given in percent and AFR in percentage points (pp). Tables report the mean and sample standard deviation over seeds, which describe training variability rather than variation over people or environments; differences are paired by seed.

\begin{table}[t]
\centering\scriptsize
\setlength{\tabcolsep}{1.7pt}
\caption{Data streams (counts per domain; Cache: capacity of the replay cache)}
\label{tab:data}
\begin{tabular}{lllcccc}
\toprule
Stream & Domains (order) & Labels & Train/val/test & Channels & Cache\\
\midrule
3DO & 3 days & 3 activities & 2400/600/1200 & 52 & 240\\
WiMANS & 3 rooms & 9 activities & 378/108/108 & 270 & 36\\
Widar3.0 & 9 users, Room 1 & 6 gestures & 446--450/149--150/150 & 540 & 42\\
\bottomrule
\end{tabular}
\end{table}

\subsection{RQ1: Overall Effect Without Sample Replay}\label{sec:rq1}
\begin{table*}[t]
\centering\footnotesize
\setlength{\tabcolsep}{4pt}
\caption{Continual Wi-Fi sensing performance (mean $\pm$ sample SD, three runs). Bold: best of each column; WiMoT is also bold when it ranks second}
\label{tab:main}
\begin{tabular}{lcccccc}
\toprule
Method & \multicolumn{2}{c}{3DO} & \multicolumn{2}{c}{WiMANS} & \multicolumn{2}{c}{Widar3.0}\\
\cmidrule(lr){2-3}\cmidrule(lr){4-5}\cmidrule(lr){6-7}
& AFA (\%) $\uparrow$ & AFR (pp) $\downarrow$ & AFA (\%) $\uparrow$ & AFR (pp) $\downarrow$ & AFA (\%) $\uparrow$ & AFR (pp) $\downarrow$\\
\midrule
\multicolumn{7}{l}{\textit{Methods without replay} (no historical individual is stored)}\\[2pt]
\textbf{WiMoT} & $\boldsymbol{94.02\pm0.07}$ & $0.32\pm0.33$ & $\boldsymbol{89.61\pm0.47}$ & $\boldsymbol{0.00\pm0.46}$ & $\boldsymbol{95.16\pm1.23}$ & $\boldsymbol{0.58\pm0.58}$\\
Fine-tuning & $84.19\pm1.13$ & $17.62\pm1.70$ & $86.93\pm1.17$ & $4.48\pm0.96$ & $70.02\pm0.38$ & $28.78\pm0.34$\\
EWC (2017)\cite{ewc2017} & $84.19\pm1.10$ & $17.62\pm1.66$ & $86.93\pm1.17$ & $4.48\pm0.96$ & $70.62\pm0.37$ & $28.19\pm0.38$\\
SI (2017)\cite{si2017} & $89.38\pm0.21$ & $9.42\pm0.29$ & $87.76\pm1.52$ & $4.01\pm0.96$ & $75.51\pm0.28$ & $20.17\pm0.36$\\
MAS (2018)\cite{mas2018} & $87.72\pm1.13$ & $5.65\pm0.79$ & $88.17\pm1.28$ & $3.55\pm1.87$ & $68.44\pm0.58$ & $11.25\pm1.64$\\
LwF (2016)\cite{lwf2018} & $85.17\pm0.92$ & $4.78\pm0.32$ & $88.58\pm0.62$ & $0.93\pm0.46$ & $79.06\pm0.60$ & $14.75\pm0.43$\\
DUCT (2025)\cite{duct2025} & $84.80\pm1.64$ & $1.51\pm0.68$ & $74.49\pm7.86$ & $7.87\pm7.65$ & $37.41\pm7.26$ & $21.58\pm4.64$\\
DualCP (2025)\cite{dualcp2025} & $77.14\pm0.56$ & $10.46\pm0.40$ & $43.93\pm0.18$ & $6.17\pm0.53$ & $65.95\pm0.34$ & $3.36\pm0.25$\\
VanillaCP (2025)\cite{dualcp2025} & $77.81\pm0.04$ & $7.85\pm0.05$ & $67.70\pm0.64$ & $1.54\pm0.27$ & $73.14\pm0.31$ & $5.72\pm0.13$\\
ConSense (2025)\cite{consense2025} & $76.31\pm0.39$ & $17.49\pm0.73$ & $83.85\pm1.08$ & $4.94\pm3.01$ & $64.62\pm0.93$ & $27.50\pm1.28$\\
\midrule
\multicolumn{7}{l}{\textit{Methods with replay}; common class-priority cache (3DO / WiMANS / Widar3.0): 240 (9.09\%) / 36 (8.70\%) / 42 (8.54--8.61\%)}\\[2pt]
ER (2019)\cite{er2019} & $89.71\pm0.72$ & $7.03\pm1.12$ & $86.73\pm0.31$ & $5.25\pm0.27$ & $76.94\pm0.58$ & $21.19\pm1.13$\\
EWC+ER (2017)\cite{ewc2017} & $89.64\pm0.76$ & $7.11\pm1.09$ & $87.04\pm0.62$ & $5.25\pm0.27$ & $78.91\pm1.06$ & $19.25\pm1.45$\\
SI+ER (2017)\cite{si2017} & $90.25\pm0.67$ & $5.64\pm0.06$ & $87.86\pm0.64$ & $4.01\pm0.27$ & $81.38\pm0.95$ & $13.67\pm0.75$\\
MAS+ER (2018)\cite{mas2018} & $89.28\pm0.96$ & $5.21\pm0.40$ & $89.20\pm1.07$ & $2.78\pm1.22$ & $76.17\pm0.68$ & $10.47\pm0.88$\\
LwF+ER (2016)\cite{lwf2018} & $88.36\pm0.29$ & $\boldsymbol{-1.17\pm0.41}$ & $89.09\pm0.47$ & $1.23\pm0.27$ & $82.05\pm1.07$ & $10.25\pm1.63$\\
DER++ (2020)\cite{der2020} & $89.81\pm0.89$ & $5.29\pm0.51$ & $88.27\pm0.53$ & $2.78\pm1.22$ & $80.25\pm1.01$ & $16.72\pm1.39$\\
A-GEM (2019)\cite{agem2019} & $86.22\pm0.75$ & $14.64\pm0.95$ & $86.63\pm0.71$ & $4.48\pm1.41$ & $72.91\pm0.50$ & $25.64\pm0.84$\\
CN-DPM (2020)\cite{cndpm2020} & $88.48\pm0.50$ & $8.18\pm0.32$ & $86.32\pm1.25$ & $1.54\pm2.09$ & $74.84\pm2.22$ & $18.97\pm2.63$\\
DUCT+ER (2025)\cite{duct2025} & $84.44\pm1.04$ & $-0.87\pm1.38$ & $61.11\pm31.64$ & $19.75\pm31.04$ & $40.74\pm0.32$ & $16.39\pm4.88$\\
DualCP+ER (2025)\cite{dualcp2025} & $81.92\pm0.29$ & $3.28\pm0.69$ & $57.00\pm0.99$ & $3.09\pm0.27$ & $65.33\pm0.30$ & $3.08\pm0.22$\\
VanillaCP+ER (2025)\cite{dualcp2025} & $82.65\pm0.22$ & $0.71\pm0.40$ & $71.19\pm0.47$ & $1.54\pm0.71$ & $72.17\pm0.50$ & $5.64\pm0.10$\\
EdgeCL (2026)\cite{edgecl2026} & $89.68\pm1.29$ & $5.29\pm1.37$ & $85.49\pm2.64$ & $3.55\pm2.09$ & $75.01\pm1.20$ & $17.89\pm0.89$\\
\bottomrule
\end{tabular}
\end{table*}

\begin{figure*}[t]
\centering
\input{fig_stage.tex}
\caption{Mean test accuracy over the observed domains after each training stage (original order, three-run means)}
\label{fig:stage}
\end{figure*}

Table~\ref{tab:main} shows that WiMoT attains the highest AFA on all three streams without storing any sample. The strongest baselines all use replay: SI+ER on 3DO, MAS+ER on WiMANS, and LwF+ER on Widar3.0. WiMoT exceeds them by $3.77\pm0.67$ and $13.11\pm1.22$ pp on 3DO and Widar3.0, in every seed. On WiMANS WiMoT matches MAS+ER in accuracy ($+0.41\pm0.64$ pp, positive in two of three seeds) while eliminating its forgetting (AFR 0.00 against 2.78 pp). The margin over the strongest method without replay is larger: $4.64\pm0.27$, $1.03\pm0.18$, and $16.10\pm1.81$ pp over SI on 3DO and LwF on the other streams. The two Wi-Fi methods, adapted here to the frozen-feature interface, stay below the strongest generic baselines on all three streams (EdgeCL 89.68, 85.49, and 75.01\%). The comparison does not hinge on the input format: if each baseline uses the better of the 9728-dimensional input and the 9216-dimensional Mantis block alone (Supplementary Table~S9), every lead remains, changing by at most 0.21 pp.

The size of the lead follows from what each method keeps. The replay baselines protect earlier domains through a cache of less than 10\% of the training records, whereas WiMoT keeps the moments of every training record of every earlier domain, and transport keeps these moments exact after each adapter update (Proposition~\ref{prop:exact}(i)); its classifier is therefore refitted on all earlier data in the current representation. The lead is largest on Widar3.0, the longest stream, where the forgetting of the baselines accumulates over eight updates, and smallest on WiMANS, which has three domains and the fewest records per class (Section~\ref{sec:scope}).

The lead comes with low forgetting: the AFR of WiMoT is 0.32, 0.00, and 0.58 pp, against 5.64, 2.78, and 10.25 pp for the strongest baselines. Only LwF+ER and DUCT+ER forget less on 3DO ($-1.17$ and $-0.87$ pp), and both have an AFA lower by 5.66 and 9.57 pp; RQ2 examines this trade-off.

Fig.~\ref{fig:stage} shows that the final differences arise from retention rather than from first-domain accuracy. On 3DO WiMoT starts 1.39 pp above the cross-entropy networks of the classic baselines (96.22 against 94.83\%); on WiMANS and Widar3.0 it starts 1.86 and 1.33 pp below them and overtakes every baseline from the second stage on. Over the nine Widar3.0 users, its accuracy on the observed users stays between 94.8 and 96.9\%, whereas LwF+ER falls from 98.22 to 82.05\% and fine-tuning to 70.02\%. The flat curve reflects the design of the classifier: every update refits OSCL from the transported moments of all users with equal weight, so earlier users enter prediction with the same weight as the newest one, whereas the baselines see earlier users only through the cache or a regularizer. After the ninth user, WiMoT keeps at least 93.8\% on each of the first eight users; the lowest final accuracy, 86.0\%, is on user 17, which also ends lowest in every other order tested (Supplementary Section~S4).

\subsection{RQ2: Plasticity--Stability Balance and the Role of Each Mechanism}\label{sec:rq2}
\emph{Balance.} A low AFR can also be reached by learning little. Table~\ref{tab:balance} therefore reports the accuracy on the domain just learned (New, plasticity), the accuracy on the earlier domains after the last update (Earlier, retention), and their mean, the Balance. With $A_{s,t}$ as in Section~\ref{sec:experiments},
\begin{equation}
 \mathrm{New}=\frac1{T-1}\sum_{t=2}^{T}A_{t,t},\quad
 \mathrm{Earlier}=\frac1{T-1}\sum_{t=1}^{T-1}A_{T,t},
\end{equation}
and $\mathrm{Balance}=(\mathrm{New}+\mathrm{Earlier})/2$. New excludes the first domain, which is learned without earlier domains, and all three quantities are computed for every run and then averaged over seeds. WiMoT has the highest Balance on 3DO and Widar3.0 (94.94 and 95.69\%), ahead of the best baseline (SI+ER in both) by $3.03\pm0.53$ and $8.51\pm0.58$ pp in every seed. On WiMANS its Balance of 89.12\% is within seed variation of the best, MAS+ER. The gain comes from retention: WiMoT has the highest Earlier accuracy of all methods (96.82, 90.74, and 96.31\%), 3.38, 2.31, and 13.48 pp above the best baseline, while its New accuracy stays close to the best baseline (93.07, 87.50, and 95.08\% against 96.56, 90.12, and 95.75\%). The baselines that learn new domains better forget more (AFR at least 5.29, 1.23, and 19.25 pp), and the two that forget less on 3DO, LwF+ER and DUCT+ER, have a Balance 6.25 and 8.97 pp lower. WiMoT thus gives up little plasticity for a large gain in stability. The small loss in New accuracy is the price of restricting later updates to an affine adapter, which transport requires (Section~\ref{sec:transport}); the gain in Earlier accuracy is what transport buys, as the ladder below shows.

\begin{table*}[t]
\centering\scriptsize
\setlength{\tabcolsep}{2.6pt}
\caption{Balance, New, Earlier (\%), and AFR (pp) of WiMoT and selected baselines (mean $\pm$ sample SD, seeds 42--44). Bold: best of each column and dataset among all twenty-two methods; WiMoT is also bold when it ranks second}
\label{tab:balance}
\begin{tabular}{l@{\hspace{4pt}}cccc@{\hspace{8pt}}cccc@{\hspace{8pt}}cccc}
\toprule
& \multicolumn{4}{c}{3DO} & \multicolumn{4}{c}{WiMANS} & \multicolumn{4}{c}{Widar3.0}\\
\cmidrule(lr){2-5}\cmidrule(lr){6-9}\cmidrule(lr){10-13}
Method & Balance & AFR & New & Earlier & Balance & AFR & New & Earlier & Balance & AFR & New & Earlier\\
\midrule
WiMoT & $\boldsymbol{94.94\pm0.20}$ & 0.32 & 93.07 & \textbf{96.82} & $\boldsymbol{89.12\pm0.61}$ & \textbf{0.00} & 87.50 & \textbf{90.74} & $\boldsymbol{95.69\pm0.77}$ & \textbf{0.58} & 95.08 & \textbf{96.31}\\
Fine-tuning & $87.62\pm0.82$ & 17.62 & 96.49 & 78.76 & $87.42\pm0.94$ & 4.48 & 88.89 & 85.96 & $81.56\pm0.17$ & 28.78 & 95.28 & 67.83\\
SI+ER & $91.92\pm0.54$ & 5.64 & 93.60 & 90.24 & $88.27\pm0.35$ & 4.01 & 89.66 & 86.88 & $87.18\pm0.59$ & 13.67 & 92.94 & 81.42\\
MAS+ER & $90.97\pm0.75$ & 5.21 & 91.71 & 90.24 & $\boldsymbol{89.27\pm0.67}$ & 2.78 & \textbf{90.12} & 88.43 & $80.01\pm0.80$ & 10.47 & 83.31 & 76.72\\
LwF+ER & $88.69\pm0.30$ & $\mathbf{-1.17}$ & 83.94 & 93.44 & $88.12\pm0.74$ & 1.23 & 88.43 & 87.81 & $86.31\pm0.40$ & 10.25 & 89.78 & 82.83\\
\bottomrule
\end{tabular}
\end{table*}

\emph{Mechanisms.} Table~\ref{tab:ablation} removes one component at a time and retrains with seeds 42--46 (Supplementary Table~S10 gives the New and Earlier accuracies). Each mechanism of Section~\ref{sec:rationale} is needed. \emph{Plasticity:} without SupCon, AFA falls by 2.54--5.12 pp on all streams in every seed. \emph{Stability of the representation:} without the historical loss, AFR rises by 0.36--1.11 pp on all streams, and AFA falls by $1.05\pm0.56$ pp on WiMANS (every seed) and $0.52\pm0.73$ pp on Widar3.0 (four of five seeds); on 3DO the change, $-0.09\pm0.44$ pp, is within seed variation. \emph{Classifier over all domains:} without the stored historical moments, OSCL sees only current-domain statistics and the historical loss is dropped; AFA falls by 17.04, 5.31, and 47.10 pp, and AFR rises by 30.28, 5.28, and 52.03 pp. On the same adapters and moments, the strongest alternative classifier (LDA on 3DO and WiMANS, QDA on Widar3.0) has an AFA lower by 1.93, 0.74, and 3.84 pp; OSCL is higher in every seed on 3DO and Widar3.0 and in three of five on WiMANS (Supplementary Section~S8). \emph{Affine adaptation:} freezing the adapter after the first domain lowers AFA by 1.07, 3.33, and 5.93 pp in every seed, most on the longest stream, so the adapter still contributes after the first domain; the transport ladder below shows that this contribution requires transport. \emph{Basis:} the amplitude branch matters most on 3DO and Widar3.0 ($-9.52$ and $-9.32$ pp without it), and the PCA shortcut on WiMANS and Widar3.0 ($-2.47$ and $-3.14$ pp). The New and Earlier accuracies show that each mechanism acts on the quantity that its role predicts (Supplementary Table~S10): without SupCon the New accuracy falls by 5.60, 6.85, and 6.45 pp, whereas without the historical loss the New accuracy changes by at most 0.28 pp and the loss appears in the Earlier accuracy and AFR. The classifier comparison points to the same conclusion: nearest class mean, which ignores the covariances, loses 13.09 and 28.19 pp on 3DO and Widar3.0, so the stored covariances, not only the means, carry the earlier domains; OSCL exceeds LDA and QDA, which use the same moments with fixed decision rules, consistent with its optimization of the combining labels for the estimated error.

\begin{table*}[t]
\centering\scriptsize
\setlength{\tabcolsep}{2.2pt}
\caption{Single-component ablation (five runs): AFA (\%), paired AFA difference from full WiMoT (pp; number of seeds with higher AFA), and AFR (pp). Bold: best AFA and lowest AFR of each dataset (the control row is not ranked); the full model is also bold when it ranks second}
\label{tab:ablation}
\resizebox{\textwidth}{!}{%
\begin{tabular}{lccccccccc}
\toprule
& \multicolumn{3}{c}{3DO} & \multicolumn{3}{c}{WiMANS} & \multicolumn{3}{c}{Widar3.0}\\
\cmidrule(lr){2-4}\cmidrule(lr){5-7}\cmidrule(lr){8-10}
Configuration & AFA & $\Delta$AFA & AFR & AFA & $\Delta$AFA & AFR & AFA & $\Delta$AFA & AFR\\
\midrule
\textbf{WiMoT (full)} & $\boldsymbol{93.98\pm0.15}$ & -- & $\boldsymbol{0.26}$ & $\boldsymbol{89.57\pm0.55}$ & -- & $-0.28$ & $\boldsymbol{95.24\pm0.88}$ & -- & $0.42$\\
\midrule
WiMoT w/o historical moments & $76.94\pm2.44$ & $-17.04\pm2.58$ (0/5) & $30.54$ & $84.26\pm1.05$ & $-5.31\pm1.44$ (0/5) & $5.00$ & $48.15\pm1.11$ & $-47.10\pm1.07$ (0/5) & $52.45$\\
WiMoT w/o SupCon & $91.44\pm0.77$ & $-2.54\pm0.77$ (0/5) & $\boldsymbol{-0.57}$ & $84.44\pm2.61$ & $-5.12\pm2.31$ (0/5) & $\boldsymbol{-3.89}$ & $92.31\pm0.66$ & $-2.93\pm1.07$ (0/5) & $\boldsymbol{0.15}$\\
WiMoT w/o $\mathcal L_{\mathrm{hist}}$ & $93.89\pm0.43$ & $-0.09\pm0.44$ (2/5) & $0.62$ & $88.52\pm0.86$ & $-1.05\pm0.56$ (0/5) & $0.83$ & $94.73\pm0.86$ & $-0.52\pm0.73$ (1/5) & $1.10$\\
WiMoT, frozen adapter ($t>1$) & $92.92\pm0.43$ & $-1.07\pm0.32$ (0/5) & $1.33$ & $86.23\pm0.60$ & $-3.33\pm0.86$ (0/5) & $0.09$ & $89.32\pm0.51$ & $-5.93\pm0.59$ (0/5) & $0.40$\\
WiMoT w/o amplitude branch & $84.47\pm0.58$ & $-9.52\pm0.59$ (0/5) & $0.43$ & $89.07\pm1.17$ & $-0.49\pm1.67$ (2/5) & $-0.56$ & $85.93\pm0.64$ & $-9.32\pm1.29$ (0/5) & $2.67$\\
WiMoT w/o PCA shortcut & $93.76\pm0.31$ & $-0.23\pm0.19$ (1/5) & $0.57$ & $87.10\pm1.14$ & $-2.47\pm0.85$ (0/5) & $0.74$ & $92.10\pm0.66$ & $-3.14\pm0.32$ (0/5) & $1.08$\\
\midrule
OSCL $\to$ nearest class mean & $80.89\pm0.89$ & $-13.09\pm0.83$ (0/5) & $14.04$ & $87.84\pm0.80$ & $-1.73\pm0.77$ (0/5) & $0.56$ & $67.05\pm1.28$ & $-28.19\pm1.51$ (0/5) & $18.20$\\
OSCL $\to$ LDA & $92.06\pm0.49$ & $-1.93\pm0.61$ (0/5) & $1.64$ & $88.83\pm1.33$ & $-0.74\pm1.39$ (1/5) & $-0.56$ & $84.43\pm0.94$ & $-10.81\pm0.59$ (0/5) & $2.18$\\
OSCL $\to$ QDA & $91.00\pm0.70$ & $-2.98\pm0.73$ (0/5) & $1.23$ & $80.93\pm1.33$ & $-8.64\pm1.09$ (0/5) & $1.11$ & $91.41\pm0.50$ & $-3.84\pm0.96$ (0/5) & $2.48$\\
\midrule
\textit{Control:} WiMoT + replay cache & $94.16\pm0.38$ & $+0.17\pm0.25$ (4/5) & $0.09$ & $88.95\pm1.12$ & $-0.62\pm0.79$ (0/5) & $0.56$ & $94.58\pm0.81$ & $-0.67\pm0.34$ (0/5) & $0.70$\\
\bottomrule
\end{tabular}}\\[2pt]
\parbox{\textwidth}{\scriptsize Without historical moments, OSCL uses only current-domain class statistics, and $\mathcal L_{\mathrm{hist}}$, which needs stored moments, is dropped. Without SupCon, the first domain has no training signal and later domains are trained by $\mathcal L_{\mathrm{hist}}$ alone. In the three rows ``OSCL $\to$'', the OSCL classifier is replaced by the named classifier fitted on the same adapters and stored moments (Supplementary Section~S8). The control adds the replay cache of the baselines (240/36/42 individuals) to the later-domain minibatches; it is not part of WiMoT. Seeds 42--44 of the full method are the runs of Table~\ref{tab:main}.}
\end{table*}

\emph{Transport.} Table~\ref{tab:ladder} isolates the transport of the stored moments on the same features, from the frozen adapter (R0) to the full method (R4); R0, R3, and R4 agree with the corresponding rows of Table~\ref{tab:ablation} within 0.09 pp. In R1 and R2, the moments of each domain are transported once, when the domain is learned, and then kept while the adapter continues to change, so the classifier is fitted on statistics that no longer match the test features. Three results follow. First, adapting without transport is worse than not adapting: the AFA of R1 is below that of R0 by 11.07, 1.05, and 25.57 pp in every seed, because the accuracy on earlier domains collapses (74.36 and 61.03\% on 3DO and Widar3.0, against 96.78 and 96.40\% for R4). Second, transport accounts for most of the gain of the full method: R1$\to$R3 raises AFA by 12.05, 3.40, and 31.01 pp in every seed, of the 12.10, 4.44, and 31.45 pp of R1$\to$R4. Third, the historical loss helps only on a valid memory: with transport it adds $+0.05$ (within seed variation), $+1.05$, and $+0.44$ pp (R3$\to$R4), whereas without transport it does not repair the stale memory (R1$\to$R2: $-3.29$, $+0.80$, and $+2.95$ pp), and R2 remains 28.50 pp below R4 on Widar3.0. All effects are smallest on WiMANS, which has only three domains.

\begin{table*}[t]
\centering\scriptsize
\setlength{\tabcolsep}{3.4pt}
\caption{Transport ladder: AFA and Earlier accuracy (\%, mean $\pm$ sample SD, seeds 42--46) with and without transport of the stored moments. Bold: best of each column}
\label{tab:ladder}
\begin{tabular}{lcccccc}
\toprule
& \multicolumn{2}{c}{3DO} & \multicolumn{2}{c}{WiMANS} & \multicolumn{2}{c}{Widar3.0}\\
\cmidrule(lr){2-3}\cmidrule(lr){4-5}\cmidrule(lr){6-7}
Configuration & AFA & Earlier & AFA & Earlier & AFA & Earlier\\
\midrule
R0 Frozen adapter ($t>1$) & $92.98\pm0.47$ & $95.39\pm0.46$ & $86.17\pm0.59$ & $87.96\pm0.73$ & $89.35\pm0.51$ & $90.88\pm0.56$\\
R1 SupCon, not transported & $81.91\pm1.66$ & $74.36\pm2.65$ & $85.12\pm0.55$ & $84.72\pm0.73$ & $63.78\pm2.51$ & $61.03\pm2.55$\\
R2 SupCon $+\,\mathcal L_{\mathrm{hist}}$, not transported & $78.62\pm1.53$ & $68.94\pm2.09$ & $85.93\pm1.49$ & $85.83\pm1.52$ & $66.73\pm2.37$ & $64.45\pm2.49$\\
R3 SupCon, transported & $93.96\pm0.46$ & $96.62\pm0.57$ & $88.52\pm0.86$ & $89.35\pm0.65$ & $94.79\pm0.88$ & $95.70\pm0.70$\\
R4 SupCon $+\,\mathcal L_{\mathrm{hist}}$, transported (full WiMoT) & $\boldsymbol{94.01\pm0.14}$ & $\boldsymbol{96.78\pm0.30}$ & $\boldsymbol{89.57\pm0.55}$ & $\boldsymbol{90.83\pm0.21}$ & $\boldsymbol{95.23\pm0.90}$ & $\boldsymbol{96.40\pm0.81}$\\
\bottomrule
\end{tabular}
\end{table*}

\subsection{RQ3: Robustness of the Lead}\label{sec:robustness}
\begin{table*}[t]
\centering\footnotesize
\setlength{\tabcolsep}{4pt}
\caption{Widar3.0 AFA and AFR (mean $\pm$ sample SD, seeds 42--44) in further user orders and rooms. $\Delta$: paired AFA difference to the strongest listed baseline. Bold: best of each column}
\label{tab:widarext}
\begin{tabular}{lcccccc}
\toprule
& \multicolumn{4}{c}{Room 1, nine users} & Room 2 & Room 3\\
\cmidrule(lr){2-5}\cmidrule(lr){6-6}\cmidrule(lr){7-7}
Method & Original & Reverse & 4-8-9-2-3-5-1-7-6 & 6-7-4-3-1-2-5-8-9 & 2 users & 4 users\\
\midrule
\multicolumn{7}{l}{\textit{AFA (\%) $\uparrow$}}\\
\textbf{WiMoT} & $\boldsymbol{95.16\pm1.23}$ & $\boldsymbol{94.47\pm0.45}$ & $\boldsymbol{94.27\pm0.38}$ & $\boldsymbol{94.79\pm0.23}$ & $\boldsymbol{92.62\pm0.89}$ & $\boldsymbol{96.32\pm0.60}$\\
Fine-tuning & $70.02\pm0.38$ & $75.58\pm1.12$ & $71.31\pm0.69$ & $68.54\pm0.49$ & $82.10\pm1.36$ & $65.94\pm0.33$\\
LwF (2016) & $79.06\pm0.60$ & $81.53\pm0.34$ & $79.90\pm1.01$ & $77.21\pm0.94$ & $77.63\pm0.84$ & $74.72\pm0.79$\\
SI+ER (2017) & $81.38\pm0.95$ & $84.89\pm0.52$ & $83.14\pm0.23$ & $80.27\pm0.48$ & $92.17\pm0.51$ & $84.30\pm0.87$\\
DER++ (2020) & $80.25\pm1.01$ & $82.27\pm0.59$ & $79.68\pm1.88$ & $79.56\pm1.00$ & $89.71\pm0.70$ & $86.40\pm0.76$\\
EdgeCL (2026) & $75.01\pm1.20$ & $76.15\pm0.56$ & $74.86\pm1.08$ & $73.88\pm1.23$ & $81.54\pm1.16$ & $78.67\pm0.68$\\
\midrule
\multicolumn{7}{l}{\textit{AFR (pp) $\downarrow$}}\\
\textbf{WiMoT} & $\boldsymbol{0.58\pm0.58}$ & $\boldsymbol{1.42\pm0.79}$ & $\boldsymbol{1.22\pm0.32}$ & $\boldsymbol{1.11\pm0.13}$ & $\boldsymbol{-0.45\pm0.39}$ & $\boldsymbol{0.52\pm0.46}$\\
Fine-tuning & $28.78\pm0.34$ & $22.58\pm1.32$ & $27.11\pm0.67$ & $29.83\pm0.52$ & $24.38\pm1.40$ & $41.54\pm0.46$\\
LwF (2016) & $14.75\pm0.43$ & $11.78\pm0.42$ & $11.64\pm0.88$ & $16.39\pm0.93$ & $4.03\pm1.78$ & $18.55\pm1.36$\\
SI+ER (2017) & $13.67\pm0.75$ & $10.33\pm0.55$ & $11.67\pm0.51$ & $14.33\pm0.38$ & $3.80\pm1.03$ & $15.58\pm1.56$\\
DER++ (2020) & $16.72\pm1.39$ & $14.33\pm0.76$ & $16.94\pm2.02$ & $17.14\pm0.82$ & $5.59\pm0.77$ & $12.63\pm0.67$\\
EdgeCL (2026) & $17.89\pm0.89$ & $16.39\pm0.70$ & $18.94\pm1.33$ & $18.86\pm1.35$ & $5.15\pm1.03$ & $18.47\pm1.18$\\
\midrule
$\Delta$ (pp) & $+13.78\pm0.85$ & $+9.58\pm0.28$ & $+11.14\pm0.15$ & $+14.52\pm0.26$ & $+0.45\pm0.51$ & $+9.92\pm1.27$\\
Seeds with $\Delta>0$ & 3/3 & 3/3 & 3/3 & 3/3 & 2/3 & 3/3\\
\bottomrule
\end{tabular}
\end{table*}

\begin{table}[t]
\centering\scriptsize
\setlength{\tabcolsep}{2.4pt}
\caption{Robustness to domain order on 3DO and WiMANS (mean $\pm$ SD over six orders; AFA in \%, AFR in pp). Bold: best of each column; WiMoT is also bold when it ranks second}
\label{tab:orders}
\begin{tabular}{lcccc}
\toprule
Method & \multicolumn{2}{c}{3DO} & \multicolumn{2}{c}{WiMANS}\\
\cmidrule(lr){2-3}\cmidrule(lr){4-5}
& AFA $\uparrow$ & AFR $\downarrow$ & AFA $\uparrow$ & AFR $\downarrow$\\
\midrule
\multicolumn{5}{l}{\textit{Without replay}}\\
\textbf{WiMoT} & $\boldsymbol{93.71\pm2.19}$ & $\boldsymbol{3.89\pm3.26}$ & $\boldsymbol{88.80\pm0.89}$ & $\boldsymbol{-0.31\pm0.46}$\\
Fine-tuning & $84.56\pm0.73$ & $16.04\pm1.30$ & $86.51\pm0.57$ & $4.76\pm1.16$\\
EWC & $84.56\pm0.74$ & $16.05\pm1.32$ & $86.52\pm0.56$ & $4.73\pm1.13$\\
SI & $87.74\pm1.19$ & $10.59\pm0.93$ & $87.55\pm0.28$ & $3.40\pm1.00$\\
MAS & $86.06\pm1.77$ & $6.29\pm3.20$ & $85.63\pm2.70$ & $2.80\pm1.24$\\
LwF & $86.28\pm1.09$ & $5.12\pm1.51$ & $87.67\pm0.97$ & $1.18\pm1.24$\\
ConSense & $77.30\pm1.99$ & $14.72\pm4.35$ & $82.22\pm1.52$ & $6.79\pm2.19$\\
\midrule
\multicolumn{5}{l}{\textit{With replay (common cache 240/36)}}\\
ER & $88.80\pm0.76$ & $8.10\pm1.00$ & $87.67\pm0.90$ & $2.85\pm1.85$\\
EWC+ER & $88.77\pm0.77$ & $8.14\pm1.03$ & $88.08\pm0.86$ & $2.52\pm1.75$\\
SI+ER & $89.02\pm0.81$ & $6.62\pm0.75$ & $88.46\pm0.83$ & $2.01\pm1.48$\\
MAS+ER & $87.90\pm0.97$ & $5.53\pm1.28$ & $87.77\pm1.14$ & $1.93\pm0.60$\\
LwF+ER & $88.54\pm0.99$ & $\boldsymbol{0.02\pm2.75}$ & $87.88\pm0.81$ & $1.00\pm0.57$\\
DER++ & $88.20\pm2.45$ & $6.24\pm3.29$ & $87.91\pm0.35$ & $2.83\pm0.95$\\
A-GEM & $85.75\pm0.56$ & $14.35\pm0.78$ & $86.69\pm0.71$ & $4.27\pm1.14$\\
EdgeCL & $86.43\pm2.38$ & $8.09\pm2.57$ & $85.70\pm0.69$ & $2.91\pm0.89$\\
\bottomrule
\end{tabular}
\end{table}

\begin{figure}[t]
\centering
\definecolor{fbblue}{HTML}{0072B2}\definecolor{fborange}{HTML}{B85B16}\definecolor{fbgreen}{HTML}{007F73}%
\begin{tikzpicture}[x=1mm,y=1mm,
tick/.style={font=\fontsize{5.6}{6.4}\selectfont,inner sep=1.2pt,text=black!75},
lab/.style={font=\fontsize{6.4}{7.2}\selectfont,inner sep=1pt},
leg/.style={font=\fontsize{6.0}{7}\selectfont,inner sep=1pt,anchor=west}]
\draw[black!10,line width=.25pt] (10.00,0) -- (10.00,46.00);
\node[tick,anchor=north] at (10.00,0) {0};
\draw[black!10,line width=.25pt] (20.00,0) -- (20.00,46.00);
\node[tick,anchor=north] at (20.00,0) {4};
\draw[black!10,line width=.25pt] (30.00,0) -- (30.00,46.00);
\node[tick,anchor=north] at (30.00,0) {8};
\draw[black!10,line width=.25pt] (40.00,0) -- (40.00,46.00);
\node[tick,anchor=north] at (40.00,0) {12};
\draw[black!10,line width=.25pt] (50.00,0) -- (50.00,46.00);
\node[tick,anchor=north] at (50.00,0) {16};
\draw[black!10,line width=.25pt] (60.00,0) -- (60.00,46.00);
\node[tick,anchor=north] at (60.00,0) {20};
\draw[black!10,line width=.25pt] (70.00,0) -- (70.00,46.00);
\node[tick,anchor=north] at (70.00,0) {24};
\draw[black!10,line width=.25pt] (0,0.00) -- (70.00,0.00);
\node[tick,anchor=east] at (0,0.00) {-8};
\draw[black!10,line width=.25pt] (0,9.20) -- (70.00,9.20);
\node[tick,anchor=east] at (0,9.20) {0};
\draw[black!10,line width=.25pt] (0,18.40) -- (70.00,18.40);
\node[tick,anchor=east] at (0,18.40) {+8};
\draw[black!10,line width=.25pt] (0,27.60) -- (70.00,27.60);
\node[tick,anchor=east] at (0,27.60) {+16};
\draw[black!10,line width=.25pt] (0,36.80) -- (70.00,36.80);
\node[tick,anchor=east] at (0,36.80) {+24};
\draw[black!10,line width=.25pt] (0,46.00) -- (70.00,46.00);
\node[tick,anchor=east] at (0,46.00) {+32};
\draw[black!55,line width=.4pt] (0,9.20) -- (70.00,9.20);
\draw[black!30,line width=.3pt] (0,0) rectangle (70.00,46.00);
\draw[black!45,dashed,line width=.45pt] (1.80,0.92) -- (66.05,43.49);
\path[fill=fbblue,draw=black,line width=.45pt] (24.10,13.54) circle (1.05);
\path[fill=fbblue,draw=white,line width=.2pt] (28.57,15.00) circle (0.80);
\path[fill=fbblue,draw=white,line width=.2pt] (28.10,17.03) circle (0.80);
\path[fill=fbblue,draw=white,line width=.2pt] (26.35,16.51) circle (0.80);
\path[fill=fbblue,draw=white,line width=.2pt] (20.48,10.50) circle (0.80);
\path[fill=fbblue,draw=white,line width=.2pt] (18.15,11.86) circle (0.80);
\path[fill=fbblue,draw=white,line width=.2pt] (17.12,11.96) circle (0.80);
\path[fill=fbblue,draw=white,line width=.2pt] (20.15,12.43) circle (0.80);
\path[fill=fbblue,draw=white,line width=.2pt] (29.80,16.46) circle (0.80);
\path[fill=fbblue,draw=white,line width=.2pt] (20.60,18.27) circle (0.80);
\path[fill=fbblue,draw=white,line width=.2pt] (20.27,17.60) circle (0.80);
\path[fill=fbblue,draw=white,line width=.2pt] (27.60,13.97) circle (0.80);
\path[fill=fbblue,draw=white,line width=.2pt] (26.05,16.89) circle (0.80);
\path[fill=fbblue,draw=white,line width=.2pt] (24.65,20.19) circle (0.80);
\path[fill=fborange,draw=black,line width=.45pt] (16.00,8.73) rectangle (17.89,10.62);
\path[fill=fborange,draw=white,line width=.2pt] (10.83,7.30) rectangle (12.27,8.74);
\path[fill=fborange,draw=white,line width=.2pt] (13.90,10.61) rectangle (15.35,12.05);
\path[fill=fborange,draw=white,line width=.2pt] (16.61,9.66) rectangle (18.04,11.10);
\path[fill=fborange,draw=white,line width=.2pt] (8.50,6.24) rectangle (9.95,7.68);
\path[fill=fborange,draw=white,line width=.2pt] (13.13,7.41) rectangle (14.57,8.85);
\path[fill=fborange,draw=white,line width=.2pt] (15.46,6.82) rectangle (16.89,8.26);
\path[fill=fborange,draw=white,line width=.2pt] (13.90,7.07) rectangle (15.35,8.51);
\path[fill=fborange,draw=white,line width=.2pt] (11.98,4.57) rectangle (13.42,6.01);
\path[fill=fborange,draw=white,line width=.2pt] (6.21,10.14) rectangle (7.64,11.58);
\path[fill=fborange,draw=white,line width=.2pt] (6.58,2.68) rectangle (8.02,4.12);
\path[fill=fborange,draw=white,line width=.2pt] (12.36,5.41) rectangle (13.80,6.85);
\path[fill=fborange,draw=white,line width=.2pt] (5.43,3.04) rectangle (6.87,4.48);
\path[fill=fborange,draw=white,line width=.2pt] (2.33,3.27) rectangle (3.77,4.71);
\path[fill=fbgreen,draw=black,line width=.45pt] (35.62,25.59) -- (34.48,23.62) -- (36.77,23.62) -- cycle;
\path[fill=fbgreen,draw=white,line width=.2pt] (39.18,23.01) -- (38.31,21.51) -- (40.05,21.51) -- cycle;
\path[fill=fbgreen,draw=white,line width=.2pt] (45.82,26.90) -- (44.95,25.40) -- (46.69,25.40) -- cycle;
\path[fill=fbgreen,draw=white,line width=.2pt] (35.83,21.22) -- (34.96,19.72) -- (36.70,19.72) -- cycle;
\path[fill=fbgreen,draw=white,line width=.2pt] (19.50,10.72) -- (18.63,9.22) -- (20.37,9.22) -- cycle;
\path[fill=fbgreen,draw=white,line width=.2pt] (41.58,21.61) -- (40.71,20.11) -- (42.45,20.11) -- cycle;
\path[fill=fbgreen,draw=white,line width=.2pt] (27.77,22.36) -- (26.90,20.86) -- (28.64,20.86) -- cycle;
\path[fill=fbgreen,draw=white,line width=.2pt] (32.08,21.53) -- (31.21,20.03) -- (32.95,20.03) -- cycle;
\path[fill=fbgreen,draw=white,line width=.2pt] (30.78,25.95) -- (29.91,24.45) -- (31.65,24.45) -- cycle;
\path[fill=fbgreen,draw=white,line width=.2pt] (64.80,45.84) -- (63.93,44.34) -- (65.67,44.34) -- cycle;
\path[fill=fbgreen,draw=white,line width=.2pt] (35.83,29.31) -- (34.96,27.81) -- (36.70,27.81) -- cycle;
\path[fill=fbgreen,draw=white,line width=.2pt] (45.20,30.78) -- (44.33,29.28) -- (46.07,29.28) -- cycle;
\path[fill=fbgreen,draw=white,line width=.2pt] (44.18,31.61) -- (43.31,30.11) -- (45.05,30.11) -- cycle;
\path[fill=fbgreen,draw=white,line width=.2pt] (47.03,40.56) -- (46.16,39.06) -- (47.90,39.06) -- cycle;
\node[lab,anchor=north] at (35.00,-3.6) {AFR of the strongest baseline (pp)};
\node[lab,rotate=90,anchor=south] at (-5.6,23.00) {Lead of WiMoT in AFA (pp)};
\path[fill=fbblue,draw=white,line width=.2pt] (3.20,43.00) circle (0.80);
\node[leg] at (4.8,43.00) {3DO};
\path[fill=fborange,draw=white,line width=.2pt] (2.48,38.88) rectangle (3.92,40.32);
\node[leg] at (4.8,39.60) {WiMANS};
\path[fill=fbgreen,draw=white,line width=.2pt] (3.20,37.20) -- (2.33,35.70) -- (4.07,35.70) -- cycle;
\node[leg] at (4.8,36.20) {Widar3.0};
\path[fill=white,draw=black,line width=.45pt] (3.20,32.80) circle (1.05);
\node[leg] at (4.8,32.80) {main setting};
\draw[black!45,dashed,line width=.45pt] (1.6,29.40) -- (4.4,29.40);
\node[leg] at (4.8,29.40) {least-squares line, $r=0.94$};
\end{tikzpicture}
\caption{AFA lead of WiMoT over the strongest baseline of each setting against the AFR of that baseline in the 42 settings of RQ3 (domain orders, rooms, label budgets, and frozen encoders; three-seed means, Supplementary Table~S7). Pearson $r=0.94$, Spearman $\rho=0.94$; $r=0.80$ after the dataset means are removed}
\label{fig:leadafr}
\end{figure}

\emph{Domain orders, rooms, and label budgets.} We measure the lead in 24 settings that change the arrival order, the room, or the number of labels in later domains; in each setting it is measured against the strongest baseline among those run there (all 21 in the main settings and 5--14 elsewhere; Supplementary Section~S4 and Table~S7). Orders number the domains by their position in the original order; on Widar3.0, users 5 and 10--17 of Room~1 are numbered 1--9. On 3DO WiMoT leads in all six domain orders in every seed, by 1.13--6.81 pp (Supplementary Table~S1). Widar3.0 gives the largest and most stable lead (Table~\ref{tab:widarext}): in the reverse order and two random orders of the Room-1 users, WiMoT exceeds SI+ER by 9.58--14.52 pp in every seed, and in Room~3 (four users) it exceeds DER++ by $9.92\pm1.27$ pp in every seed. Even in Room~2, with two users and a single update (one test window corresponds to 0.67 pp), WiMoT remains ahead of SI+ER on average ($+0.45\pm0.51$ pp). Across these five Widar3.0 streams its AFR is $-0.45$ to 1.42 pp, against at least 3.80 pp for every listed baseline. With only 25\% and 10\% of the labels in every later domain, WiMoT still leads by 2.81 and 2.40 pp on 3DO and by 9.85 and 10.57 pp on Widar3.0 in every seed. None of the sources of the lead depends on the arrival order: transport is exact for any sequence of affine updates, and the classifier weights all observed domains equally. Fewer labels make each new moment less accurate, but a moment, once stored, is not degraded by later updates.

Averaged over the six domain orders of 3DO and WiMANS (Table~\ref{tab:orders}), WiMoT has the highest AFA on both streams and the lowest AFR on WiMANS; on 3DO only LwF+ER forgets less on average. Its forgetting depends more on the order than its accuracy: over the six 3DO orders its AFR ranges from 0.32 to 8.15 pp.

\emph{Frozen encoders.} Table~\ref{tab:backbone} replaces MantisV2 by five released time-series foundation models and by a randomly initialized MantisV2, without retuning any recipe or coefficient. Five baselines (fine-tuning, LwF, SI+ER, DER++, EdgeCL) are rerun on the features of each encoder, so the lead here is over the strongest of these five and differs from the lead in Table~\ref{tab:main}. On 3DO and Widar3.0 WiMoT exceeds them with every encoder and in every seed, by 3.77--9.56 and 13.70--30.99 pp, so the lead does not depend on a particular encoder: the baselines receive the same frozen features, and the difference lies in how earlier domains are retained. On WiMANS WiMoT leads with both MantisV2 encoders ($+1.03$ and $+1.44$ pp); with the five other encoders the strongest baseline is ahead, which Section~\ref{sec:scope} examines. On 3DO even the randomly initialized MantisV2 reaches 96.03\%, which, together with the ablation of the amplitude branch ($-9.52$ pp on 3DO; Table~\ref{tab:ablation}), identifies the amplitude descriptor as the main carrier of 3DO class information.

\begin{table}[t]
\centering\scriptsize
\setlength{\tabcolsep}{2.4pt}
\caption{AFA of WiMoT (\%) with other frozen encoders and paired difference from the strongest of five baselines (pp). Bold: WiMoT has the highest AFA among itself and the five baselines}
\label{tab:backbone}
\begin{tabular}{lccc}
\toprule
Encoder & 3DO & WiMANS & Widar3.0\\
\midrule
MantisV2 (pretrained) & \textbf{94.02} ($+3.77$) & \textbf{89.61} ($+1.03$) & \textbf{95.16} ($+13.78$)\\
MantisV2 (random init.) & \textbf{96.03} ($+7.89$) & \textbf{78.29} ($+1.44$) & \textbf{87.33} ($+30.99$)\\
MOMENT-1-large & \textbf{94.82} ($+7.30$) & 78.50 ($-5.04$) & \textbf{86.20} ($+16.62$)\\
TSPulse-r1 & \textbf{95.56} ($+9.56$) & 68.42 ($-4.53$) & \textbf{84.69} ($+26.40$)\\
Chronos-2 & \textbf{93.88} ($+6.31$) & 86.11 ($-3.40$) & \textbf{88.12} ($+13.70$)\\
TimesFM 2.5 & \textbf{91.18} ($+4.15$) & 81.58 ($-2.67$) & \textbf{87.36} ($+17.90$)\\
TimesFM 3.0 & \textbf{93.75} ($+6.69$) & 80.86 ($-4.73$) & \textbf{87.43} ($+18.62$)\\
\bottomrule
\end{tabular}
\end{table}

\emph{Where the lead comes from.} The size of the lead follows the forgetting of the strongest baseline: over the 42 settings of this section, the 24 above and the 18 of Table~\ref{tab:backbone}, the lead grows almost linearly with the AFR of that baseline (Fig.~\ref{fig:leadafr}; Pearson $r=0.94$, Spearman $\rho=0.94$), also after dataset means are removed ($r=0.80$, $\rho=0.62$; Supplementary Table~S7). This agrees with RQ2, where the advantage of WiMoT came from retention; the association is weaker on 3DO, and as a correlation it does not establish a mechanism.

\emph{Loss weights.} The weights of SupCon and of the historical loss (0.5 and 1) are the same for all datasets. Varying them in seeds 42--44 (Fig.~\ref{fig:sweep}) shows that the results are not sensitive to them: for weight ratios $\lambda_{\mathrm{sc}}/\lambda_{\mathrm h}$ between 0.1 and 2, the mean AFA varies by at most 0.18, 0.51, and 0.25 pp on 3DO, WiMANS, and Widar3.0, less than the largest seed SD of each series (0.28, 1.58, and 1.37 pp), and stays at least 3.69 and 12.91 pp above the strongest baseline on 3DO and Widar3.0 and on par with it on WiMANS ($-0.10$ to $+0.41$ pp). Setting either weight to zero lowers AFA by 0.91--1.36 pp on WiMANS and Widar3.0 but not on 3DO, as in the ablation.

\begin{figure*}[t]
\centering
\input{fig_sweep.tex}
\caption{AFA of WiMoT against the SupCon weight (upper row, later domains, $\lambda_{\mathrm h}=1$) and the historical-loss weight (lower row, $\lambda_{\mathrm{sc}}=0.5$); mean $\pm$ SD over seeds 42--44. Star: weight of the paper; dashed line: strongest baseline of Table~\ref{tab:main}; open circle: weight 0. Vertical axes differ between datasets. Supplementary Table~S12 lists all values}
\label{fig:sweep}
\end{figure*}

\subsection{RQ4: Deployment Cost}\label{sec:deploy}
\begin{figure}[t]
\centering
\definecolor{fbblue}{HTML}{0072B2}\definecolor{fborange}{HTML}{B85B16}\definecolor{fbgreen}{HTML}{007F73}%
\begin{tikzpicture}[x=1mm,y=1mm,>=Latex,font=\fontsize{7}{8.2}\selectfont,
 box/.style={align=center,inner sep=1pt,rounded corners=1.5pt,line width=.65pt,minimum height=9mm},
 arr/.style={->,draw=black!70,line width=.6pt,shorten >=.3mm,shorten <=.3mm},
 note/.style={align=center,inner sep=.5pt,text=black!65,font=\fontsize{6.4}{7.4}\selectfont\itshape}]
\path[use as bounding box] (0,-1) rectangle (88,64);
\fill[black!3,rounded corners=2pt] (17,0) rectangle (88,56);
\draw[black!35,rounded corners=2pt,line width=.5pt] (17,0) rectangle (88,56);
\node[anchor=west,font=\fontsize{7.6}{9}\selectfont\bfseries] at (18.5,52.6) {Edge node};
\node[anchor=west,note] at (34,52.6) {raw CSI stays on the node};
\foreach \y in {44,34,24}{
 \draw[black!60,fill=white,rounded corners=.8pt,line width=.55pt] (2,\y-2) rectangle (12,\y+2);
 \draw[black!60,line width=.55pt] (4.5,\y+2)--(3.5,\y+5.2) (9.5,\y+2)--(10.5,\y+5.2);
 \fill[black!60] (5.6,\y) circle(.45) (7,\y) circle(.45) (8.4,\y) circle(.45);
}
\node[align=center] at (7,15.5) {CSI\\receivers};
\node[note,anchor=west] at (18.5,47.6) {Inference: every query};
\node[box,draw=black!55,fill=white,minimum width=12mm] (pre) at (25,40.5) {Bin +\\normalize};
\node[box,draw=fbblue,fill=fbblue!6,minimum width=18mm] (enc) at (43,40.5) {Frozen encoder\\+ basis $\phi$};
\node[box,draw=fborange,fill=fborange!8,minimum width=11mm] (ada) at (61.5,40.5) {Adapter\\$h$};
\node[box,draw=fbgreen,fill=fbgreen!7,minimum width=16mm] (rd) at (78,40.5) {OSCL\\classifier};
\draw[arr] (pre)--(enc); \draw[arr] (enc)--(ada); \draw[arr] (ada)--(rd);
\draw[arr] (rd.north)--(78,61);
\node[anchor=south,inner sep=.5pt] at (78,61) {activity $\hat y$, no domain ID};
\draw[arr] (12,44)--(15,44)|-(pre.west);
\draw[black!70,line width=.6pt] (12,34)--(15,34)--(15,44);
\node[note,anchor=west] at (43.7,31.9) {Update: domain $t$};
\node[box,draw=black!55,dashed,fill=white,minimum width=12mm] (lab) at (25,24) {Labeled\\$\mathcal D_t$};
\node[box,draw=fbgreen,fill=fbgreen!12,minimum width=18mm,line width=.9pt] (mom) at (43,24) {Moment store\\$n,\mu,\Sigma,B,K$};
\node[box,draw=fborange,fill=fborange!8,minimum width=11mm] (trn) at (61.5,24) {50 epochs\\on $\phi$};
\draw[arr,dashed] (12,24)--(lab.west);
\draw[arr,dashed] (lab.north)--(pre.south);
\draw[arr] (enc.south)--node[note,left,pos=.5]{once}(mom.north);
\draw[arr] (mom)--(trn);
\draw[arr] (trn.north)--(ada.south);
\draw[arr] (mom.south)--(43,14)-|(rd.south);
\node[note,anchor=south] at (66,14.2) {transported moments};
\node[note] at (25,14.6) {deleted after\\the update};
\node[anchor=west,align=left,inner sep=.5pt] at (18.5,6.2) {Per query: 8.5--88.6 GFLOPs, $>$99.8\% in the encoder\\Per domain: $+$7.0--19.1 MiB of aggregates};
\end{tikzpicture}
\caption{Deployment design. Queries need neither a domain identity nor stored samples. A new domain is encoded once, adds its moments to the store, and updates the adapter on cached basis features; its recordings are then deleted.}
\label{fig:deploy}
\end{figure}

Fig.~\ref{fig:deploy} maps the evaluated procedure onto an edge node such as a home gateway; Table~\ref{tab:deploy} gives its analytic costs, and Table~\ref{tab:cpu} measures them on a CPU. Each query is encoded and classified by the domain-marginalized OSCL classifier without its domain identity or stored samples. When a new acquisition condition is declared, a labeled calibration session supplies $\mathcal D_t$; the node encodes these arrivals once, adds their moments to the store, trains the adapter on the cached basis features, refits the classifier, and deletes the recordings, so raw CSI need not persist after an update.

\begin{table}[t]
\centering\scriptsize
\setlength{\tabcolsep}{2.6pt}
\caption{Computation and storage of WiMoT}
\label{tab:deploy}
\begin{tabular}{lccc}
\toprule
Quantity & 3DO & WiMANS & Widar3.0\\
\midrule
Inference per query (GFLOPs) & 8.55 & 44.31 & 88.60\\
Arrival encoding per domain (TFLOPs) & 20.5 & 16.7 & 39.5--39.9\\
Gradient updates per domain (TFLOPs)$^{\dagger}$ & 0.14 & 0.02 & 0.07\\
\midrule
Prepared recordings per domain (MiB) & 167.1 & 116.8 & 275.6--278.1\\
Aggregates added per domain (MiB) & 7.0 & 19.1 & 13.0\\
Class moments, full storage (MiB) & 18.04 & 54.11 & 108.21\\
\quad upper triangle (MiB) & 9.05 & 27.16 & 54.32\\
\quad rank 32 (MiB) & 1.16 & 3.49 & 6.97\\
\quad rank 32, paired $\Delta$AFA (pp) & $+1.07$ & $+0.00$ & $-0.49$\\
Label-moment constraints (MiB) & 3.03 & 3.06 & 9.14\\
\midrule
Replay arrays of the baselines (MiB) & 8.91 & 1.34 & 1.56\\
\bottomrule
\end{tabular}\\[2pt]
\parbox{\columnwidth}{\scriptsize $^{\dagger}$Dense estimate on cached basis features, excluding the statistical-head fits. Upper-triangle storage reproduces every stage-wise test accuracy of full storage. Rank-$r$ storage keeps the $r$ leading eigenpairs of each class covariance and one isotropic residual variance.}
\end{table}

\begin{table}[t]
\centering\scriptsize
\setlength{\tabcolsep}{2.6pt}
\caption{Measured cost of WiMoT on a laptop CPU (Apple M4 Pro, float32; four threads, one thread in parentheses)}
\label{tab:cpu}
\begin{tabular}{lccc}
\toprule
Quantity & 3DO & WiMANS & Widar3.0\\
\midrule
Query, encoder (ms) & 33 (61) & 158 (324) & 320 (724)\\
Query, classifier (ms) & $<0.1$ & $<0.3$ & $<0.5$\\
First domain, training (s) & 18.4 (27.0) & 3.7 (4.5) & 3.8 (5.2)\\
Later domain, update (s)$^{\ddagger}$ & 2.4--2.7 & 2.2--4.5 & 1.3--24.6\\
Arrival encoding per domain (s) & 96 (162) & 62 (141) & 144--145 (352--354)\\
\midrule
Peak memory, query encoding (GiB) & 0.79 & 2.38 & 4.47\\
Peak memory, updates (GiB) & 1.95 & 0.82 & 1.34\\
\bottomrule
\end{tabular}\\[2pt]
\parbox{\columnwidth}{\scriptsize Query times are medians over 20 queries; seed 42. $^{\ddagger}$Adapter training, moment transport, and classifier refit after the first domain; the encoding of the arrivals is listed separately.}
\end{table}

The frozen encoder accounts for more than 99.8\% of the inference computation, and the same encoder cost applies to every Mantis-input baseline; the basis, adapter, and classifier add 15.9--16.3 MFLOPs per query. The gradient steps of an update add 0.15--0.68\% to the encoding cost of the arrivals. The moment memory of a domain is 6.1--23.8 times smaller than its prepared recordings (Table~\ref{tab:deploy}). Full $512\times512$ float64 covariances make the moment store larger than the replay arrays of the baselines, but it can be compressed: storing the upper triangles halves it without changing any prediction, and rank-32 storage reduces it to 6.4\% (108.21 to 6.97 MiB on Widar3.0, the order of the replay arrays) at an AFA change of $-0.49$ to $+1.07$ pp. Supplementary Table~S6 reports ranks 8 and 128.

\emph{Measured cost.} To check the analytic costs, we ran the released implementation on a laptop CPU without a GPU, with four threads, a parallelism comparable to that of a quad-core edge board, and with one thread. Each stream was learned domain by domain with seed 42; the AFA of these runs equals that of the GPU runs with the same seed on WiMANS and Widar3.0 and differs from it by at most 0.08 pp on 3DO. Since every channel is interpolated to 512 samples before the encoder, its cost depends only on the input shape, and the encoder was timed on windows of the prepared shape. Table~\ref{tab:cpu} agrees with the analytic costs. A query takes 33--321 ms with four threads (61--724 ms with one), of which the classifier contributes less than 0.5 ms. An update after the first domain takes 1.3--24.6 s, whereas encoding the arrivals of a domain once takes 62--145 s, so the encoder also dominates the cost of an update. On Widar3.0 the update time grows with the number of stored domains, from 1.3 s at the second user to 24.6 s at the ninth, because the historical loss and the classifier refit run over the moments of all domains. Encoding a Widar3.0 query passes its 540 channels through the encoder in one batch and needs 4.47 GiB at peak; encoding the channels in smaller groups would lower this. The laptop is faster than typical edge hardware, so an embedded node would take longer.

\section{Discussion}\label{sec:discussion}
\subsection{What the Stored Moments Replace}
Replay keeps a few hundred individuals; WiMoT keeps the first two moments of all individuals of each domain and class. Three results indicate that these moments carry what a small replay cache would add: removing them lowers AFA by 5.31--47.10 pp, adding the replay cache of the baselines to WiMoT changes AFA by $+0.17$, $-0.62$, and $-0.67$ pp (Table~\ref{tab:ablation}), and the strongest replay baselines forget 2.78--10.25 pp where WiMoT forgets at most 0.58 pp. WiMoT also exceeds a classic network trained jointly on all domains on 3DO and Widar3.0 (94.02 and 95.16\% against 90.64 and 94.84\%; 89.61 against 91.46\% on WiMANS; Supplementary Table~S8), without revisiting any earlier recording. This reference trains a cross-entropy network on the same input rather than the basis and classifier of WiMoT, so it is not an upper bound for WiMoT. The moments describe classes only by means and covariances, so structure beyond the second moment is lost once the arrivals are deleted; this may matter most when few records per class and domain are available, as on WiMANS.

\subsection{Privacy}
A replay method keeps raw CSI recordings of individual people; WiMoT keeps, for each domain and class, only a count, a mean, and a covariance of frozen-encoder features, and deletes the recordings and per-sample features after each update. This reduces what a stored model exposes in three ways. First, each moment aggregates at least 42 recordings, and the map from recordings to their first two moments is many-to-one: every set of recordings with the same mean and covariance produces the same memory, so no individual recording can be read back or replayed, unlike the cache of a replay method. Second, the moments describe the feature distribution of a class in the 512-dimensional basis, not any recorded CSI sequence, so recovering a recording would require inverting both the aggregation and the frozen encoder. Third, keeping only the statistics that the classifier needs follows the principle of data minimization, the same argument that analytic continual learning makes for storing correlation matrices instead of samples\cite{acil2022}. The protection is qualitative: aggregates of small groups can still reveal information about them, and a formal guarantee would require, for example, noise calibrated to the moments.

\subsection{Limitations}\label{sec:scope}
The differences between methods are smallest on WiMANS, where the accuracy advantage of WiMoT depends on the order, the label budget, and the encoder: in three of the six domain orders MAS or SI+ER is higher in AFA by 0.93--1.95 pp (Supplementary Table~S1); with 25\% and 10\% of the later-domain labels MAS is higher by $1.23\pm2.64$ and $1.44\pm1.46$ pp, within one seed SD; and with the five other foundation encoders the strongest baseline is higher by 2.67--5.04 pp. Three properties of this stream explain why the design gains least there. First, the source of the lead is weak. The lead of WiMoT comes from retention (Section~\ref{sec:rq2}), but WiMANS has only three domains, and in each of its 14 settings the strongest baseline forgets at most 2.93 pp, against 2.85--7.92 pp on 3DO and 3.80--21.92 pp on Widar3.0 (Supplementary Table~S7); with little forgetting to remove, the plasticity cost of the affine adapter (New accuracy 87.50 against 90.12\% for MAS+ER; Table~\ref{tab:balance}) is not compensated. Second, the moments rest on few records. Each WiMANS class and domain provides 42 training records for 512-dimensional moments, so every class covariance has rank at most 41 and the classifier relies on shrinkage, whereas 3DO provides 800 records and Widar3.0 about 75; reducing the labels lowers this number further, and with 25\% and 10\% of the labels the advantage disappears. Third, recognition on WiMANS depends entirely on the frozen encoder. The amplitude descriptor carries little class information there (at most 32.51\% AFA for any method; Supplementary Table~S8), and after the first domain WiMoT can change the encoder features only through an affine adapter, whereas the baselines retrain a nonlinear network on every domain, so a weaker encoder affects WiMoT more. WiMoT is therefore less suited to short streams on which replay baselines already forget little, to domains with few labeled records per class relative to the basis dimension, and to encoders whose frozen features separate the classes poorly. On 3DO the common input is not the most informative one: with the amplitude descriptor alone, WiMoT reaches 99.02\% instead of 94.02\%, and it remains 1.16 pp above SI+ER given the same input (97.86\%, every seed; Supplementary Table~S8), consistent with the amplitude dependence of 3DO found in Section~\ref{sec:robustness}.

The moment memory grows with the number of domains and classes, latency was measured on a laptop CPU rather than on an embedded node, and energy was not measured. The recent and Wi-Fi baselines are adapted to a frozen-feature interface, and the Widar3.0 partitions separate repetitions rather than recording sessions (Supplementary Section~S6).

\section{Conclusion}
We proposed WiMoT, a replay-free continual learning algorithm for Wi-Fi sensing that stores class means and covariances of earlier domains instead of samples. Restricting later updates to an affine adapter lets the same map transport the stored moments, which keeps the memory valid (without transport, AFA falls by 3.4--31.0 pp); SupCon supplies plasticity, a closed-form historical loss keeps the representation stable, and the OSCL classifier is refitted from the transported moments of all domains. On three CSI streams, WiMoT forgets at most 0.58 pp and has the highest average final accuracy on 3DO and Widar3.0, 3.77 and 13.11 pp above the strongest of twenty-one baselines, and the lead holds across domain orders, rooms, label budgets, and frozen encoders. On WiMANS it matches the strongest replay baseline in accuracy while eliminating its forgetting. Unannounced domain shifts, unlabeled arrivals, and on-device measurements are the next steps.

\section*{Data and Code Availability}
The source code, experiment and reproduction scripts, partition indices, and the configuration of every reported table are publicly available under the MIT License at \url{https://github.com/Newsteinwell/WiMoT}.
All three datasets are public: 3DO through Zenodo, DOI 10.5281/zenodo.10925351\cite{dataset3do2024}; WiMANS from its authors\cite{wimans2024}; and Widar3.0 from its authors\cite{widar2019}. Supplementary Section~S6 lists the SHA-256 digests of the Widar3.0 archives used.

\end{document}

% --- supplement: supplementary_tmc.tex ---

\title{Supplementary Material for ``WiMoT: Replay-Free Continual Wi-Fi Sensing via Moment Transport''}
\author{Chunchun~Yang,
Yang~Cao,
and~Zhitang~Chen}
\markboth{Supplementary Material}{Yang \MakeLowercase{\textit{et al.}}: Supplementary Material for WiMoT}
\maketitle

\noindent Section, table, figure, and reference numbers without the prefix ``S'' refer to the main paper. Sections~\ref{sec:baselines}--\ref{sec:readout} give the implementation details of the baselines and of WiMoT. Section~\ref{sec:extended} reports the full results of the robustness studies of Section~5.4, Section~\ref{sec:inputsource} analyzes the input features, and Section~\ref{sec:widarsupp} documents the Widar3.0 protocol. Sections~\ref{sec:balance}--\ref{sec:sweep} complete the ablation and the sensitivity analysis of Sections~5.3 and 5.4.

\section{Baseline Implementation Details}\label{sec:baselines}
\subsection{Classic Baselines}
The classic comparison comprises twelve configurations: fine-tuning, ER, EWC, SI, MAS, LwF, DER++, and A-GEM, with separate native and ER-augmented versions of EWC, SI, MAS, and LwF. Each receives a 9728-dimensional input: the 9216-dimensional grouped means and standard deviations of frozen MantisV2 channel embeddings and the 512-dimensional amplitude descriptor, normalized as in WiMoT (Section~\ref{sec:inputsource}). A separately initialized $9728\to512\to128\to C$ MLP, with batch normalization and ReLU after each hidden layer, is trained with cross-entropy (CE) on the first domain, and all its parameters remain trainable during later updates. No baseline uses the semantic stem, amplitude branch, PCA shortcut, contrastive initialization, projector, or OSCL classifier of WiMoT; only the frozen foundation encoder and the recording partitions are shared. The main comparison therefore evaluates complete learning procedures; Section~\ref{sec:inputsource} examines how much the results depend on the input of the baselines.

Fine-tuning uses current CE. EWC uses true-label per-example empirical diagonal Fisher importance and a parameter anchor for each earlier domain. MAS uses absolute gradients of the squared output norm; both estimates use up to 256 class-balanced current observations, without extra historical visits. SI accumulates importance along the optimization path with stabilizer 0.1, including first-domain training. LwF distills the previous-stage teacher on current inputs with temperature 2. EWC+ER, SI+ER, and MAS+ER minimize the mean CE over mixed current/cached batches plus their original parameter-importance penalties. LwF+ER uses the same mixed CE but applies teacher distillation only to current examples. Importance estimation for EWC and MAS remains current-only; SI accumulates importance along the mixed optimization path. The +ER variants are thus our replay-augmented versions of originally replay-free methods, whereas ER, DER++, and A-GEM already use replay and retain their own losses and gradient rules.

ER trains on current and retained labels. DER++ adds stored-logit squared error and replay-label CE, each weighted by 0.5; acquisition logits are not refreshed. A-GEM constrains a conflicting current gradient using a retained-reference gradient. Reference records are partitioned across current minibatches, giving one visit per retained record per epoch. Because the projected gradient is applied with AdamW, the original guarantee that the reference loss does not increase holds only for plain gradient steps.

The ER-family baselines retain the recording identities of the common class-priority cache (Section~\ref{sec:cache}), with capacities 240/36/42 and one old-individual visit per epoch. Within the classic comparison, batch size, optimizer, learning rate, epoch schedule, and domain order are common. For EWC, SI, MAS, and LwF, native and +ER coefficients are selected separately from three prespecified candidates using the equally weighted mean validation AFA of 3DO and WiMANS at the first training seed. The selected native values are 1, 10, 1, and 0.5; the +ER values are $10^4$, 10, 0.1, and 0.5 (EWC, SI, MAS, LwF), selected under the 9728-dimensional input. Each coefficient is shared across datasets and fixed for the other seeds, domain orders, and the Widar3.0 streams. The settings of WiMoT, including the training-only selection of its score family (Section~\ref{sec:readout}), were likewise fixed before any test evaluation.

\subsection{Recent Mechanisms with Aligned Historical Individuals}
We evaluate CN-DPM (ICLR 2020), DUCT (CVPR 2025), and DualCP with its VanillaCP variant (AAAI 2025). Their method cores are adapted independently to the Mantis features; their image benchmarks are not reproduced. Each reads the 9728-dimensional input without any parameter of WiMoT.

The replay variants align both the recording identities and the repeated visits. All use the common class-priority reservoir of Section~\ref{sec:cache}, refreshed after each domain and held fixed during the next update. For each +ER update, current and cached examples are pooled and shuffled without replacement into batches of 128 for 50 epochs, giving exactly 50 optimization visits per retained example. Losses retain sample-proportional weighting without an additional replay coefficient. Retained-ID files are checked against the reservoir records at every update. The alignment fixes which old individuals are used and how often; the methods still differ in feature payload, model state, aggregate memory, and computation per current sample.

CN-DPM retains Gaussian VAE density and discriminative experts, Dirichlet-process usage priors, novelty-triggered expert creation, and density-prior marginalization without query domain identity. Its aligned variant replaces cross-domain novelty retention with the common reservoir rather than adding a second persistent historical cache. Temporary novelty STM contains current-domain rows only and is cleared at the boundary. The first domain retains native one-pass wake and 8000/2000 generative/discriminative sleep steps with batches of 50. Later domains use 50 mixed wake passes; historical rows update existing experts and cannot enter the repeatedly sampled sleep STM. The common log-concentration selected from $\{-10,0,10\}$ was $-10$.

DUCT uses a $9728\to512\to128$ CE-trained first-domain MLP as its task-vector reference, with frozen Mantis weights. Task-vector and semantic optimal-transport head consolidation are retained. Native training uses 100/50 epochs and five additional current-only head-refit epochs; head/backbone learning rates are 0.1/0.001, transport regularization is 0.01, and historical-head mixing is 0.25. DUCT+ER updates the backbone and expanded head using current-slot CE for current examples and full-head CE for cached examples, whose targets are their acquired training-domain/class slots. Task-vector merging and transport consolidation are unchanged, and the five head-refit epochs use current examples only, preserving the 50 historical visits. Prediction maximizes over all slots and maps to activity labels without a supplied domain identity. The shared merge coefficient selected from $\{0.1,0.25,0.5\}$ was 0.5.

DualCP constructs fixed QR/ETF concept prototypes using first-domain class-mean guidance. Coarse/fine calibrators have 512 hidden units and 9728-dimensional outputs, optimized with dual dot-regression ($\alpha=0.5$), SGD at learning rate 0.1, and 100/50 epochs. DualCP+ER and VanillaCP+ER train each new-domain calibrator on the mixed pool using the original dot-regression losses and fixed concept prototypes for both current and cached activity labels; CE is not substituted. Earlier calibrators remain frozen. Domain means are computed from current rows alone in uncentered variance-scaled Mantis coordinates, so replay does not contaminate routing anchors. DualCP checks genuine two-level groups on every dataset; VanillaCP uses singleton groups and threshold 1. No query domain identity or extra CLIP encoder is supplied.

The grouping threshold of DualCP required one deviation from the protocol. With the 9728-dimensional input, the uncentered amplitude block makes the first-domain class means highly cosine-similar, so the prespecified candidates 0.1 and 0.5 produce a single group (``Not a dual hierarchy'') on 3DO (both candidates) and on WiMANS (0.1), and DualCP cannot be run. We declared these candidates ineligible and added the default of the implementation, 0.85, using the validation runs alone and before any test evaluation with this value. As the only eligible candidate, it was fixed for DualCP and DualCP+ER (mean validation AFA 62.29 and 69.12\%). Both methods remain far below the strongest baseline on every stream (43.93 and 57.00\% AFA on WiMANS), so this deviation does not affect the comparison with the strongest baseline.

\subsection{Wi-Fi Continual-Learning Methods}
ConSense (AAAI 2025) and EdgeCL (IEEE TWC 2026) were proposed for CSI recognition. We adapt their published method cores to the common interface; their CSI preprocessing pipelines and datasets are not reproduced. Both read the 9728-dimensional input as 19 ordered 512-dimensional tokens, the 18 group mean/SD tokens of the Mantis output and the amplitude descriptor, each rescaled to unit coordinate variance after first-domain normalization; the Gaussian range encoding of ConSense spans 19 positions. Both use Adam at learning rate $10^{-3}$ without weight decay, batch size 128, and 100/50 epochs, matching the common schedule rather than the original iteration counts, and the random generator that drives dropout is reset to a fixed value before every trajectory. Neither receives a domain identity at inference.

\emph{ConSense.} The network applies Gaussian range positional encoding with ten learnable Gaussians (initial width 8), one eight-head self-attention layer with a residual connection and layer normalization, token-mean pooling, and an MLP with widths $512\to512\to128\to C$ and ReLU. All parameters are trained on the first domain. Afterwards the attention weights, positional encoding, and normalization are frozen. Each later domain adds a parallel adapter $\tanh(XW_{\mathrm{down}})W_{\mathrm{up}}$ with bottleneck 64 that generates key and value prefixes; earlier prefixes remain frozen and all prefixes are concatenated at inference. Before training on a domain, the mean activation of every neuron in each MLP linear layer, including the output layer, is computed on the current training data and compared with its value stored at the end of the previous domain. Neurons whose change does not exceed $\epsilon$ receive zero gradients. Because the class set is fixed, the output layer is shared and masked like the hidden layers. Only per-neuron mean activations persist between domains; no individual is stored. The threshold was selected from $\{0.01,0.1,1\}$; the validation AFA averaged over 3DO and WiMANS was 80.88, 80.33, and 79.94\%, so $\epsilon=0.01$ was selected.

\emph{EdgeCL.} A two-layer per-token MLP encoder (widths 128 and 64, ReLU) with a learned positional embedding replaces the original CSI timestamp encoder. It is followed by two transformer encoder layers with eight heads, feed-forward width 128, and dropout 0.1, and the first token is mapped to the class logits. After each domain, exemplars are selected per class in the 64-dimensional encoder space: 90\% of the quota by k-means++ clustering, taking the sample nearest each centroid, and the remainder by herding toward the class mean. Each exemplar stores the distilled label $\mathrm{softmax}(z/\eta)$ with $\eta=2$, computed by the model at the end of its domain. During later updates the replay term is the cross-entropy between the distilled label and $\mathrm{softmax}(z/\eta)$, averaged with the current CE over the mixed batch. Every step applies the first-order sharpness-aware update: the gradient is evaluated at $\theta+\epsilon g/\|g\|_2$ and applied at $\theta$. The original method keeps a fixed number of exemplars per class and domain; here the total row budget equals that of all other replay methods, split equally over classes and retained domains (on 3DO, 80 per class after the first domain and 40 per class and domain after the second). Older domains are reduced by applying the same hybrid rule to their retained rows in the current encoder space. EdgeCL therefore matches the row count and visit bound of the reservoir methods but selects its own identities. The deviation radius was selected from $\{0.01,0.03,0.1\}$; the validation AFA averaged over 3DO and WiMANS was 85.73, 86.16, and 84.98\%, so the original default, $\epsilon=0.03$, was selected.

\section{Model Details, Replay Cache, and Memory}\label{sec:cache}
\subsection{Feature Normalization}
First-domain semantic normalization uses coordinate means and sample standard deviations, followed by block scaling. The amplitude block uses an uncentered first-domain RMS scale. The fixed random amplitude stem has an intermediate ReLU and a first-domain output calibration. The 256-dimensional PCA shortcut is fitted only to first-domain semantic training features and normalized by a fixed first-domain block RMS. Preprocessing does not change the model graph: WiMANS and Widar3.0 use per-record input standardization, whereas 3DO preserves the amplitude scale.

\subsection{Common Replay Cache of the Baselines}
Initially each class retains $m_1=\lfloor N_1/(10C)\rfloor$ individuals. The later per-class capacity is
\begin{equation}
 m_t=\min\{m_{t-1},\lfloor N_t/(9C)\rfloor\},
\end{equation}
and the total capacity $M_t=Cm_t$ is shared across all observed domains. After training on domain $t$, the cache is refreshed using only $\mathcal B_{t-1}\cup\mathcal D_t$. Each example receives a fixed priority from a SHA-256 hash of the training seed, acquisition-domain index, and within-domain training index. Within each activity class, the $m_t$ candidates with the smallest priorities are kept and the rest discarded; retained examples preserve their priorities across updates. Selection is class-balanced, with no separate quota for each domain, and evicted examples cannot be recovered. The resulting $\mathcal B_t$ is held fixed throughout the next domain's training. Each later-domain epoch traverses current arrivals and retained individuals once, giving the replay share
\begin{equation}
 r_t=\frac{|\mathcal B_{t-1}|}{N_t+|\mathcal B_{t-1}|}.
\end{equation}
Because $|\mathcal B_{t-1}|\le Cm_{t-1}\le N_{t-1}/9$, the bound $r_t\le0.1$ holds whenever $N_t\ge N_{t-1}$. In all evaluated streams $m_t=m_1$ and $Cm_1\le N_t/10$, so $r_t\le1/11$; the implementation additionally asserts $9|\mathcal B_{t-1}|\le N_t$ before each update. The resulting capacities are 240 windows on 3DO (80 per class), 36 records on WiMANS (four per class), and 42 instances on Widar3.0 (seven per class), with replay shares of $240/(2400+240)=9.09\%$, $36/(378+36)=8.70\%$, and 8.54--8.61\%. The cache stores input features and labels and excludes validation and test data; A-GEM uses the cached examples for reference gradients, and CN-DPM uses them in mixed wake passes.

\subsection{Memory Accounting}
WiMoT stores no individuals, but it does store aggregate moments and model state. Baseline replay arrays of the 9728-dimensional input at the final stage occupy 8.91, 1.34, and 1.56 MiB on 3DO, WiMANS, and Widar3.0. EdgeCL stores the same number of 9728-dimensional rows plus a $C$-dimensional distilled label per row. The class moments of WiMoT occupy 18.04, 54.11, and 108.21 MiB, and the historical second and label-cross moments 3.03, 3.06, and 9.14 MiB; the $512\times512$ class covariances in float64 account for 99.8\% of the class-moment payload. DER++ adds 2880/1296 bytes of retained logits on 3DO/WiMANS. EWC, SI, and MAS retain parameter importance and anchor arrays; LwF retains a teacher network; ConSense retains its frozen prefixes and per-neuron mean activations. These method-specific states are distinct from individual replay.

The classic baselines update approximately 5.05 million MLP parameters throughout the stream. CN-DPM additionally retains its VAE experts; DUCT retains its initialization and cumulative task vector; DualCP and VanillaCP retain one calibrator per observed domain, fixed prototypes, and domain means. WiMoT updates 82,563/83,337/82,950 parameters on 3DO/WiMANS/Widar3.0 after freezing the first-domain basis: the adapter, the projector, and the linear map $W$ of the historical loss. Stored non-backbone model tensors occupy approximately 19.26 MiB for a classic baseline and 30.08 MiB for WiMoT, including its frozen stems and basis state. These figures exclude the shared frozen backbone, optimizer state, fitted statistical heads, and archival evaluation outputs. Because the aggregates grow with the number of domains and classes, the total memory of WiMoT is not constant.

The per-query computation of Table~9 was obtained with the PyTorch FLOP counter: one CSI channel costs 160.7 million floating-point operations for the evaluated tokenizer and first three attention layers, and the attention products, which the counter does not register on CPU, add 3.3 million. Because every channel is encoded separately, the query cost grows linearly with the channel count (52, 270, and 540 channels). The classifier estimate assumes dense 128-dimensional Gaussian evaluations for every observed domain. After the ninth Widar3.0 user, the encoder weights, the stored model tensors, and the aggregates total about 156 MiB, of which 117 MiB are aggregates.

\subsection{Loss Construction}
\emph{SupCon.} With $I$ the anchors that have positives and $P(i)$ the other same-label minibatch observations, the SupCon loss on the normalized projections $u_i$ with temperature $\tau=0.07$ is
\begin{equation}
 \mathcal L_{\mathrm{SC}}=-\frac1{|I|}\sum_{i\in I}\frac1{|P(i)|}\sum_{p\in P(i)}
 \log\frac{\exp(u_i^\top u_p/\tau)}{\sum_{a\ne i}\exp(u_i^\top u_a/\tau)}.
\end{equation}
\emph{Historical loss.} With $W_t$ and $d_t$ the weight and bias of the linear map of the historical loss, the class-centered composition of adapter and map is
\begin{equation}
 F_t=\Bigl(I_C-\tfrac1C\mathbf 1\mathbf 1^{\top}\Bigr)\bigl[W_tA_t,\ W_tb_t+d_t\bigr]\in\mathbb R^{C\times513}.
\end{equation}
The map receives no gradient in the first domain, and its output scale is fixed at the end of that domain as $a_1=\tr(F_1B_1F_1^{\top})^{1/2}$. The loss is the class-balanced squared error of $\widetilde F_t=F_t/a_1$ against the centered one-hot targets, evaluated from the stored moments $B_s$ and $K_s$ (Proposition~1); each term is clipped at zero for numerical safety.

\section{OSCL Classifier}\label{sec:readout}
The classifier of Section~4.5 follows OSCL (Ref.~[17] of the main paper). This section describes how it is fitted from the transported moments.

Pooled same-class covariance includes within-domain covariance and the between-domain variation of same-class means. A shared regularized inverse square root with shrinkage 0.5 toward its trace-scaled identity whitens all historical and current moments, class covariances are mixed with the identity at strength 0.25, and each target domain adds a within-task preconditioner with shrinkage 0.1.

For each target domain and activity pair, OSCL optimizes a $2t$-dimensional combining-label vector $\eta$. Index the domain/class cells of the pair by $k$, with sample proportions $\pi_k=n_k/n$, transformed means $\mu_k$, and covariances $\Sigma_k$, and let $Z$ have rows $\pi_k\mu_k^\top$. The score direction and its moments are
\begin{equation}
 \begin{aligned}
 w&=Z^\top\eta,\qquad m_j=\eta^\top Z\mu_j,\qquad v_j=\eta^\top Q_j\eta,\\
 Q_j&=Z\Sigma_jZ^\top+
 \operatorname{diag}_k\!\left(\frac{\pi_k\tr(\Sigma_k\Sigma_j)}{n}\right),
 \end{aligned}
\end{equation}
where the diagonal term retains the mean-estimation variance in the finite-sample Gaussian risk. The binary error of the target is estimated at the density-intersection threshold of the two score distributions, and L-BFGS-B minimizes its logarithm over $\eta\in[-10,10]^{2t}$ from fixed signed labels, for at most 200 iterations; nonfinite or worse-risk candidates revert to the initialization. Normalized pair directions form a bank $D_s$, whose full spectrum, including singular-value weights, is combined with a class-mean prior:
\begin{equation}
 G_s=0.5\frac{D_s^\top D_s}{\|D_s\|_F^2}
 +\frac{0.5}{L_s}U_s^\top U_s,\qquad R_s^\top R_s=G_s,
\end{equation}
where the rows of $U_s$ are the normalized nonzero transformed training means. The spectral weights are retained because replacing the bank by its orthogonal span can erase combining-label effects. Projected class covariances are pooled within each target and shrunk by 0.5 toward the trace-scaled identity.

The parameters of the statistical head are fitted with two-fold cross-fitting on the training set. Current training arrivals supply all query individuals; no historical individual is available. Held-out fold contributions are subtracted from historical sufficient moments before refitting the statistical head. Observed tasks receive equal weight, and observed classes within each task receive equal weight. Missing task/class groups are not filled; a singleton belongs to one fold only.

Cross-fitting selects the score family. Candidates comprise Gaussian scores and Student-$t$ scores with $\nu\in\{3,10,30\}$, each finite-$\nu$ case paired with Gaussian or Student-$t$ gating; the Student scale matrix is $(\nu-2)/\nu$ times the original covariance. The maximum held-out-fold log likelihood, with fixed-order tie breaking, determines the choice. All datasets use this estimator and candidate family, although the learned choices can differ, and the final head is refitted on the full training moments. Only the statistical head is cross-fitted: the encoder, the adapter, and the linear map $W$ of the historical loss are trained on the full training inputs, so the selection is not protected against optimism at the representation level. The prediction is the arg-max of the chosen OSCL posterior; $W$ is not used at inference. The selection changes only the score family and gate of the final classifier; the combining-label risk remains Gaussian.

\section{Extended Results}\label{sec:extended}
This section gives the setup and complete results of the studies summarized in Section~5.4 of the main paper, and the five-seed baseline records behind the ablation of Table~4.

\subsection{Additional Seeds and Reproducibility}
The ablation of Table~4 uses seeds 42--46. For it, seeds 45 and 46 were added under the original order for all classic baselines, ConSense, EdgeCL, and WiMoT. In this study the baselines read the 9216-dimensional Mantis block alone (Section~\ref{sec:mantisonly}), so their values are not combined with Table~2. Over the five seeds, WiMoT attains $93.98\pm0.15$\% and $89.57\pm0.55$\% AFA on 3DO and WiMANS, ConSense $76.96\pm1.18$\% and $82.16\pm2.09$\%, and EdgeCL $88.87\pm1.00$\% and $86.85\pm1.90$\%. Relative to SI+ER, WiMoT changes AFA by $+3.51\pm0.25$ pp on 3DO (positive in five of five seeds) and $+1.05\pm1.17$ pp on WiMANS (four of five).

Two checks bound the run-to-run variation. Re-executing the original order at seed 42 reproduces fine-tuning identically on both datasets. Training WiMoT with seeds 42--44 on a second machine reproduces the WiMANS AFA of every seed and changes the 3DO and Widar3.0 AFA by at most 0.09 and 0.07 pp per seed, owing to nondeterministic GPU kernels.

The reproducibility package (Python package \texttt{wimot}) contains the code, the partition manifests, the job lists and locked coefficients of every table, and the validation scores of every candidate value of each coefficient selection (24 selections, 132 runs). It also contains the per-seed test accuracy matrices of all reported trajectories. For WiMoT these are 120 trajectories of the five-seed ablation, whose full-method rows include the three-seed records of Table~2; 9 of the reproducibility check; 30 of the five further domain orders; 18 with reduced labels; 45 of the covariance-storage study; 15 of the further Widar3.0 orders and rooms; and 54 of the encoder study. For the baselines with the 9728-dimensional input they are the 198 trajectories of Table~2 (twenty-one configurations and the joint-training reference), 420 of the order study, 180 with reduced labels, together with the retained training indices of every seed, domain, and fraction, 75 of the further Widar3.0 orders and rooms, and 270 of the encoder study; the 56 baseline trajectories of the additional seeds use the 9216-dimensional input. Stage-wise test predictions are provided for 1337 baseline trajectories with the 9216-dimensional input (all except the joint-training reference, the reduced-label runs, and the MOMENT and randomly initialized MantisV2 runs), and the class scores behind them are an archive attachment.

\subsection{Domain Orders}
WiMoT, the twelve classic baselines, ConSense, and EdgeCL were repeated under the five remaining permutations of the 3DO and WiMANS streams, with seeds 42--44, the selected coefficients, and the unchanged training recipes; CN-DPM, DUCT, and DualCP were not repeated. Partitions and seeds are those of the original order, and test evaluation started only after all training trajectories had finished. Table~\ref{tab:perorder} lists the per-order AFA behind Table~7 and the order rows of Fig.~4 of the main paper.

\begin{table}[t]
\centering\scriptsize
\setlength{\tabcolsep}{2.6pt}
\caption{AFA (\%) for each domain order; three-run means. Domains are numbered by their position in the original order; best classic: highest of the twelve classic baselines in that order}
\label{tab:perorder}
\begin{tabular}{llcccl}
\toprule
Data & Order & WiMoT & ConSense & EdgeCL & Best classic\\
\midrule
3DO & 123 & 94.02 & 76.31 & 89.68 & 90.25 (SI+ER)\\
 & 132 & 94.03 & 80.21 & 86.05 & 88.99 (SI+ER)\\
 & 213 & 96.54 & 74.39 & 88.69 & 89.73 (EWC+ER)\\
 & 231 & 95.31 & 76.62 & 86.28 & 88.94 (SI+ER)\\
 & 312 & 90.64 & 78.19 & 83.88 & 89.51 (LwF+ER)\\
 & 321 & 91.75 & 78.06 & 83.99 & 89.44 (DER++)\\
\midrule
WiMANS & 123 & 89.61 & 83.85 & 85.49 & 89.20 (MAS+ER)\\
 & 132 & 88.58 & 81.58 & 86.52 & 89.61 (MAS)\\
 & 213 & 89.92 & 84.26 & 85.60 & 88.07 (MAS+ER)\\
 & 231 & 88.89 & 81.38 & 86.42 & 87.86 (DER++)\\
 & 312 & 87.45 & 81.89 & 85.49 & 89.40 (SI+ER)\\
 & 321 & 88.37 & 80.35 & 84.67 & 89.30 (SI+ER)\\
\bottomrule
\end{tabular}
\end{table}

On 3DO, WiMoT leads in every order, by 1.13--6.81 pp. On WiMANS the order decides which method is ahead. Numbering the WiMANS rooms by their original position (1, classroom; 2, meeting room; 3, empty room), WiMoT leads in orders 123, 213, and 231 by 0.41--1.85 pp, whereas MAS is higher by 1.03 pp in order 132 and SI+ER by 1.95 and 0.93 pp in orders 312 and 321. Averaged over orders, its lead over SI+ER, the strongest order-averaged baseline, is 0.34 pp, within the variation across orders, against 4.69 pp on 3DO. Its WiMANS AFR is the lowest of all methods in four of the six orders; in the other two a single baseline forgets less, by 0.16 and 0.30 pp.

Retention depends more on the order than accuracy does. The 3DO AFR of WiMoT ranges from 0.32 to 8.15 pp across orders, and LwF+ER, which forgets least on 3DO when averaged over orders (0.02 against 3.89 pp), has a 5.17 pp lower AFA. ConSense and EdgeCL remain below the strongest classic baseline in every order; the order-averaged 3DO AFA of EdgeCL is 3.25 pp below its value in the original order, whereas its WiMANS AFA varies little across orders (SD 0.69 pp).

\subsection{Widar3.0: Main Stream, Further User Orders, and Rooms}
Compared with 3DO and WiMANS, the Widar3.0 stream changes the dataset and task, triples the number of domains, and requires seven further representation updates after the second domain. All twenty-one baselines keep the recipes and coefficients selected on 3DO and WiMANS, which need not be optimal for this longer stream. The strongest baseline is LwF+ER, with $82.05\pm1.07$\% AFA and 10.25 pp AFR. The strongest recent mechanisms are EdgeCL ($75.01\pm1.20$\%) and CN-DPM ($74.84\pm2.22$\%), from which WiMoT differs by $+20.15\pm1.64$ and $+20.32\pm1.83$ pp. DualCP+ER, DualCP, VanillaCP+ER, and VanillaCP forget least among the baselines (AFR 3.08--5.72 pp) but reach only 65.33--73.14\% AFA. Several non-CE baselines already start lower: after the first user, CN-DPM and EdgeCL attain 85.8 and 86.0\%, and ConSense 89.3\%. DUCT and DUCT+ER start at 95.1\% but decline to 37.41 and 40.74\% AFA, below fine-tuning (70.02\%), and DUCT varies by 7.26 pp across seeds; ConSense also ends below fine-tuning (64.62\%). Table~\ref{tab:widardomain} lists the final accuracy on each user.

\begin{table}[t]
\centering\scriptsize
\setlength{\tabcolsep}{2.4pt}
\caption{Widar3.0 final test accuracy (\%) per user after the ninth stage: three-run means}
\label{tab:widardomain}
\begin{tabular}{lccccccccc}
\toprule
User & 5 & 10 & 11 & 12 & 13 & 14 & 15 & 16 & 17\\
\midrule
WiMoT & 96.9 & 96.9 & 95.3 & 93.8 & 98.0 & 97.8 & 96.0 & 95.8 & 86.0\\
LwF+ER & 80.0 & 75.3 & 81.1 & 82.9 & 83.6 & 86.9 & 82.0 & 90.9 & 75.8\\
SI+ER & 85.8 & 78.0 & 78.0 & 84.0 & 81.1 & 84.7 & 78.9 & 80.9 & 81.1\\
CN-DPM & 82.2 & 67.1 & 71.6 & 75.8 & 74.2 & 72.9 & 68.9 & 80.7 & 80.2\\
EdgeCL & 73.3 & 61.3 & 73.6 & 81.6 & 76.2 & 74.4 & 73.3 & 81.3 & 80.0\\
ConSense & 67.8 & 48.0 & 58.0 & 67.6 & 68.2 & 67.1 & 62.2 & 62.4 & 80.2\\
\bottomrule
\end{tabular}
\end{table}

To test whether the Widar3.0 result depends on the user order or on Room~1, three further orders of the nine Room-1 users and two streams recorded in other rooms were run. The orders are the reverse order and two permutations drawn with a fixed generator seed. The other rooms are all Widar3.0 recording sessions outside Room~1 whose archives contain the same six gesture codes: Room~2 (users 1 and 6) and Room~3 (users 3, 7, 8, and 9), each forming a user-incremental stream in ascending user order (Section~\ref{sec:widarsupp}). For each stream, WiMoT and five baselines (fine-tuning, LwF, SI+ER, DER++, and EdgeCL) were trained with seeds 42--44 on one machine, and testing started after all trajectories of that stream had finished training. Table~6 of the main paper summarizes the results; the per-seed and per-user details are as follows.

In the reverse order and the two random orders, WiMoT attains $94.47\pm0.45$, $94.27\pm0.38$, and $94.79\pm0.23$\% AFA and exceeds SI+ER, the strongest baseline in each order, by $+9.58\pm0.28$, $+11.14\pm0.15$, and $+14.52\pm0.26$ pp in every seed; its AFR stays at 1.11--1.42 pp, against 10.33--14.33 pp for SI+ER. The low final accuracy on the last user of the original order (Table~\ref{tab:widardomain}) is a property of user 17 rather than of recency: user 17 ends at 85.3\% when it comes first (reverse order), 87.1\% when third, and 86.7\% when last (random orders), while every other user ends at 93.6\% or higher. In Room~3, WiMoT exceeds DER++ in every seed. Room~2 has two domains and therefore a single update; there the per-seed differences from SI+ER are $0.00$, $+0.34$, and $+1.01$ pp, where one of the 149 test windows per user corresponds to 0.67 pp, and the AFR is $-0.45$ pp for WiMoT against 3.80 pp for SI+ER; DER++ ($89.71$\%) is 2.91 pp below WiMoT.

\subsection{Reduced Labels in Later Domains}
The first domain was kept fully labeled, and in every later domain $\max(2,\lceil f n_c\rceil)$ training instances of each class $c$ with $n_c$ available were retained, for $f=0.25$ and $f=0.10$; the remaining arrivals of that domain are not used. Per later domain and class, this leaves 200 and 80 of 800 windows on 3DO, 11 and 5 of 42 records on WiMANS, and 19 and 8 of 74--75 instances on Widar3.0. The subsets are class-balanced random draws from the recorded sessions, made independently for each seed, domain, and fraction; they emulate a smaller labeling effort within the same sessions, not shorter calibration sessions. WiMoT, eight classic configurations (fine-tuning, SI, MAS, LwF, ER, SI+ER, LwF+ER, and DER++), ConSense, and EdgeCL were trained with seeds 42--44 and the recipes and coefficients of Table~2; the CE baselines reuse their first-domain initializations from the full-label runs. Validation and test partitions are unchanged. The common replay cache keeps its full-label capacity of 240/36/42 individuals, so it constitutes 27--29\% of each later update at $f=0.25$ and 44--50\% at $f=0.10$; this favors the replay baselines relative to the full-label comparison. All trajectories finished training before test evaluation. Table~\ref{tab:fewlabel} reports the results.

\begin{table*}[t]
\centering\footnotesize
\setlength{\tabcolsep}{4pt}
\caption{AFA (\%) with reduced labels in every later domain (mean $\pm$ sample SD, three runs); the first domain is fully labeled, and the best value of each column is in bold}
\label{tab:fewlabel}
\begin{tabular}{lcccccc}
\toprule
Method & \multicolumn{2}{c}{3DO} & \multicolumn{2}{c}{WiMANS} & \multicolumn{2}{c}{Widar3.0}\\
\cmidrule(lr){2-3}\cmidrule(lr){4-5}\cmidrule(lr){6-7}
Later-domain label fraction & 25\% & 10\% & 25\% & 10\% & 25\% & 10\%\\
Labels per class and later domain & 200 & 80 & 11 & 5 & 19 & 8\\
\midrule
\multicolumn{7}{l}{\textit{Without replay}}\\
\textbf{WiMoT} & $\boldsymbol{93.24\pm0.73}$ & $\boldsymbol{92.04\pm0.21}$ & $85.39\pm0.94$ & $82.72\pm2.16$ & $\boldsymbol{85.56\pm1.02}$ & $\boldsymbol{75.75\pm1.15}$\\
Fine-tuning & $84.19\pm1.44$ & $83.11\pm1.40$ & $80.25\pm2.83$ & $75.62\pm0.93$ & $64.22\pm0.49$ & $57.80\pm1.79$\\
SI (2017) & $87.90\pm0.60$ & $83.94\pm1.88$ & $80.66\pm2.80$ & $75.93\pm0.82$ & $65.60\pm0.99$ & $59.48\pm1.58$\\
MAS (2018) & $87.47\pm0.96$ & $86.67\pm0.76$ & $\boldsymbol{86.63\pm1.70}$ & $\boldsymbol{84.16\pm1.39}$ & $60.79\pm0.59$ & $56.32\pm1.58$\\
LwF (2016) & $83.95\pm1.32$ & $82.31\pm0.70$ & $82.10\pm0.93$ & $80.04\pm3.51$ & $69.33\pm0.98$ & $60.59\pm3.56$\\
ConSense (2025) & $75.32\pm2.33$ & $73.47\pm2.17$ & $76.85\pm1.54$ & $73.66\pm1.46$ & $54.79\pm2.79$ & $53.16\pm1.34$\\
\midrule
\multicolumn{7}{l}{\textit{With replay; common cache 240 / 36 / 42 as in Table~2}}\\
ER (2019) & $90.01\pm0.71$ & $89.09\pm0.59$ & $80.56\pm0.82$ & $78.81\pm1.55$ & $73.23\pm0.75$ & $62.27\pm1.39$\\
SI+ER (2017) & $90.44\pm0.52$ & $89.64\pm0.33$ & $80.45\pm1.28$ & $79.53\pm1.89$ & $75.70\pm1.18$ & $63.80\pm1.45$\\
LwF+ER (2016) & $86.46\pm0.39$ & $83.31\pm1.29$ & $80.76\pm1.58$ & $80.45\pm1.81$ & $72.17\pm0.37$ & $63.90\pm1.12$\\
DER++ (2020) & $87.79\pm0.76$ & $87.36\pm0.51$ & $80.45\pm2.36$ & $79.73\pm1.28$ & $73.36\pm2.19$ & $64.47\pm1.39$\\
EdgeCL (2026) & $85.69\pm2.71$ & $84.44\pm1.79$ & $81.69\pm0.36$ & $81.58\pm1.08$ & $69.41\pm1.78$ & $65.19\pm2.24$\\
\bottomrule
\end{tabular}
\end{table*}

On 3DO and Widar3.0, WiMoT exceeds the strongest baseline at both fractions in every seed. On 3DO, SI+ER is the strongest baseline, and the paired AFA difference is $+2.81\pm1.24$ pp at 25\% and $+2.40\pm0.33$ pp at 10\% of the labels, with 3.72 and 2.76 pp lower AFR. On Widar3.0, the strongest baseline is SI+ER at 25\% and EdgeCL at 10\% of the labels, and the differences are $+9.85\pm2.12$ and $+10.57\pm1.10$ pp, with 7.72 and 6.00 pp lower AFR. On WiMANS, WiMoT has the lowest AFR of all methods at both fractions, 2.31 and 3.09 pp lower than that of MAS, the strongest baseline in AFA; in AFA, MAS is higher by $1.23\pm2.64$ pp at 25\% (WiMoT ahead in one of three seeds) and $1.44\pm1.46$ pp at 10\% (behind in every seed). With 11 or 5 labeled records per class, the class moments of each new WiMANS domain rest on very few samples, which is a plausible cause that was not tested separately.

Fewer labels cost WiMoT 0.78 and 1.98 pp of AFA on 3DO, 4.22 and 6.89 pp on WiMANS, and 9.60 and 19.41 pp on Widar3.0, while the strongest reduced-label baseline changes by $+0.19$ and $-0.61$, $-1.54$ and $-4.01$, and $-5.68$ and $-9.83$ pp relative to its own full-label AFA. The lead therefore narrows from 3.77 pp with full labels to 2.81 and 2.40 pp on 3DO and from 13.11 pp to 9.85 and 10.57 pp on Widar3.0, but remains positive in every seed.

\subsection{Other Frozen Encoders}
Each encoder is applied channelwise to the same 512-sample channel series. Its per-channel embedding is the mean over the context-patch tokens of its last encoder layer; for TSPulse-r1, whose tokens have 24 coordinates, it is the flattened last layer (the model's full-embedding mode), and the randomly initialized MantisV2 uses the layer and token rule of the pretrained one. Embeddings wider than 512 coordinates are reduced to 512 by a PCA fitted only on the first-domain training channels of each dataset, which retains 97.9--99.9\% of their variance. The nine-group pooling, the amplitude descriptor, recipes, coefficients, and seeds are unchanged; none was retuned for the new encoders. For each encoder, WiMoT and five baselines (fine-tuning, LwF, SI+ER, DER++, and EdgeCL), all reading the features of that encoder together with the amplitude descriptor, were trained with seeds 42--44 on one machine and tested after all of its trajectories had finished training. Table~\ref{tab:backbone} extends Table~8 of the main paper by the identity of the strongest baseline and the number of seeds in which WiMoT is ahead, and Table~\ref{tab:encoders} lists the encoders.

\begin{table*}[t]
\centering\scriptsize
\setlength{\tabcolsep}{2pt}
\caption{AFA (\%) with other frozen encoders (mean $\pm$ sample SD, three runs), the strongest of five baselines (fine-tuning, LwF, SI+ER, DER++, EdgeCL), and the paired AFA difference of WiMoT from it (pp; number of seeds with higher AFA). Bold: WiMoT is higher than the strongest baseline}
\label{tab:backbone}
\begin{tabular}{lclclcl}
\toprule
& \multicolumn{2}{c}{3DO} & \multicolumn{2}{c}{WiMANS} & \multicolumn{2}{c}{Widar3.0}\\
\cmidrule(lr){2-3}\cmidrule(lr){4-5}\cmidrule(lr){6-7}
Encoder & WiMoT & Strongest baseline & WiMoT & Strongest baseline & WiMoT & Strongest baseline\\
\midrule
MantisV2 (pretrained) & $\boldsymbol{94.02\pm0.07}$ & $90.25$ SI+ER ($+3.77$, 3/3) & $\boldsymbol{89.61\pm0.47}$ & $88.58$ LwF ($+1.03$, 3/3) & $\boldsymbol{95.16\pm1.23}$ & $81.38$ SI+ER ($+13.78$, 3/3)\\
MantisV2 (random init.) & $\boldsymbol{96.03\pm0.46}$ & $88.14$ EdgeCL ($+7.89$, 3/3) & $\boldsymbol{78.29\pm1.39}$ & $76.85$ EdgeCL ($+1.44$, 3/3) & $\boldsymbol{87.33\pm1.08}$ & $56.35$ SI+ER ($+30.99$, 3/3)\\
\midrule
MOMENT-1-large & $\boldsymbol{94.82\pm1.22}$ & $87.53$ DER++ ($+7.30$, 3/3) & $78.50\pm0.99$ & $83.54$ SI+ER ($-5.04$, 0/3) & $\boldsymbol{86.20\pm0.73}$ & $69.58$ LwF ($+16.62$, 3/3)\\
TSPulse-r1 & $\boldsymbol{95.56\pm0.71}$ & $86.01$ DER++ ($+9.56$, 3/3) & $68.42\pm0.78$ & $72.94$ DER++ ($-4.53$, 0/3) & $\boldsymbol{84.69\pm0.30}$ & $58.30$ LwF ($+26.40$, 3/3)\\
Chronos-2 & $\boldsymbol{93.88\pm0.39}$ & $87.57$ DER++ ($+6.31$, 3/3) & $86.11\pm1.35$ & $89.51$ SI+ER ($-3.40$, 0/3) & $\boldsymbol{88.12\pm0.31}$ & $74.42$ LwF ($+13.70$, 3/3)\\
TimesFM 2.5 & $\boldsymbol{91.18\pm1.29}$ & $87.03$ DER++ ($+4.15$, 3/3) & $81.58\pm0.99$ & $84.26$ SI+ER ($-2.67$, 0/3) & $\boldsymbol{87.36\pm0.24}$ & $69.46$ DER++ ($+17.90$, 3/3)\\
TimesFM 3.0 & $\boldsymbol{93.75\pm0.03}$ & $87.06$ DER++ ($+6.69$, 3/3) & $80.86\pm2.16$ & $85.60$ SI+ER ($-4.73$, 0/3) & $\boldsymbol{87.43\pm2.03}$ & $68.81$ SI+ER ($+18.62$, 3/3)\\
\bottomrule
\end{tabular}\\[2pt]
\parbox{\textwidth}{\scriptsize The pretrained-MantisV2 row uses the WiMoT entries and the five baselines of Table~2; the strongest of the five can differ from the strongest of all twenty-one baselines. Coefficients of all methods were selected with MantisV2 features. Results can differ across GPUs and software stacks: between the two stacks used here, 33 of the 54 WiMoT encoder trajectories are identical and 21 differ (random-initialized MantisV2 9, TimesFM 3.0 6, Chronos-2 3, MOMENT 3) by $-0.31$ to $+0.93$ pp AFA per seed, without a change in the sign of any lead or in the strongest baseline (largest change of a mean lead: 0.62 pp, WiMANS with random-initialized MantisV2, $+1.44$ to $+2.06$ pp); the six Room-2 and Room-3 trajectories are identical.}
\end{table*}

\begin{table*}[t]
\centering\footnotesize
\setlength{\tabcolsep}{5pt}
\caption{Frozen encoders: parameters of the loaded checkpoint (millions), per-channel embedding width $D$, repository revision, weight digest (first 12 hex digits), and variance retained by the 512-coordinate PCA (3DO / WiMANS / Widar3.0)}
\label{tab:encoders}
\begin{tabular}{lrrlll}
\toprule
Encoder & Param. & $D$ & Revision & Weights & PCA variance (\%)\\
\midrule
MOMENT-1-large & 341.2 & 1024 & \texttt{ca58581b} & \texttt{a56928052ac6} & 99.11 / 98.75 / 98.87\\
TSPulse-r1 & 1.08 & 3312 & \texttt{2e64fcdc} & \texttt{57fa03b67d14} & 97.95 / 99.41 / 99.26\\
Chronos-2 & 119.5 & 768 & \texttt{29ec3766} & \texttt{ddcda3c7508b} & 99.86 / 99.93 / 99.90\\
TimesFM 2.5 (200M) & 231.3 & 1280 & \texttt{1d952420} & \texttt{2f776efe6245} & 99.17 / 99.45 / 99.38\\
TimesFM 3.0 & 330.7 & 1280 & \texttt{43046b85} & \texttt{a7592b0a8432} & 99.00 / 99.37 / 99.32\\
MantisV2, random init. & 4.19 & 512 & -- & -- & no PCA\\
\bottomrule
\end{tabular}
\end{table*}

Beyond the leads reported in the main paper, the per-encoder results show three things. First, replacing MantisV2 hurts the baselines more than WiMoT. The strongest baseline loses 2.11--4.24 pp on 3DO (to 86.01--88.14\%) and 6.96--25.03 pp on Widar3.0 (to 56.35--74.42\%), whereas WiMoT changes by $-2.84$ to $+2.01$ pp on 3DO (to 91.18--96.03\%) and by $-10.47$ to $-7.04$ pp on Widar3.0 (to 84.69--88.12\%). The lead is therefore larger with each of the six other encoders than with pretrained MantisV2 on 3DO, and with five of the six on Widar3.0 (Chronos-2: 13.70 against 13.78 pp). The same pattern appears in forgetting: with the six other encoders, the strongest baseline has 4.11--7.92 pp AFR on 3DO and 8.31--21.92 pp on Widar3.0, against $-0.53$ to 0.82 and 0.81--1.83 pp for WiMoT.

Second, on WiMANS the strongest baseline with the five other foundation models is SI+ER (MOMENT-1-large, Chronos-2, TimesFM 2.5, and TimesFM 3.0) or DER++ (TSPulse-r1), and it is higher than WiMoT in every seed. The highest WiMANS AFA of the study is nevertheless that of WiMoT with pretrained MantisV2 (89.61\%, against 89.51\% for SI+ER with Chronos-2). The coefficients of all methods were selected with MantisV2 features, and the small number of training records per class on WiMANS (Section~6.3 of the main paper) affects the moments of every encoder; the study does not separate these two factors.

Third, on 3DO the randomly initialized MantisV2, TSPulse-r1, and MOMENT-1-large give WiMoT a higher AFA (96.03, 95.56, and 94.82\%) than pretrained MantisV2 (94.02\%). This is consistent with Section~\ref{sec:amponly}, which shows that the 3DO class information is carried mainly by the amplitude statistics.

Weights were loaded from local copies of the Hugging Face repositories at the listed revisions, and every feature file records the SHA-256 digests of the weights, the prepared inputs, and the encoder state. The randomly initialized MantisV2 uses the vendor architecture with seed 20261014. Chronos-2 runs in float32 because its bfloat16 embeddings changed by up to 6\% with the batch composition; the other encoders run under bfloat16 autocast. For the five foundation models, re-encoding the first two recordings of every feature file in a separate batch changed their embeddings by at most $1.7\times10^{-5}$ of the mean embedding magnitude, and by zero for MOMENT-1-large and both TimesFM versions; extraction aborts above 0.05. TSPulse-r1 is run with its masking disabled, as the masking is otherwise random in evaluation mode. The TimesFM 3.0 Widar3.0 features were extracted on a second machine with the same code, weights, and PyTorch build, and their encoder-state and input digests equal those of the first machine. All training and testing of one encoder ran on one machine.

\subsection{Covariance Storage}
Packed storage keeps only the upper triangle of each symmetric covariance. Rank-$r$ storage keeps the $r$ leading eigenpairs of each class covariance and one isotropic residual variance, the mean of the remaining eigenvalues, from which the covariance is reconstructed before transport; it needs $513r+1$ numbers per class instead of $512^2$. All other statistics are unchanged. The study used seeds 42--44 on one machine; Table~\ref{tab:memory} reports the results.

\begin{table*}[t]
\centering\footnotesize
\setlength{\tabcolsep}{5pt}
\caption{Covariance storage: class-moment payload (MiB) and paired AFA difference from full storage (pp, mean $\pm$ SD over seeds 42--44; number of seeds higher/lower)}
\label{tab:memory}
\begin{tabular}{lcccccc}
\toprule
& \multicolumn{2}{c}{3DO} & \multicolumn{2}{c}{WiMANS} & \multicolumn{2}{c}{Widar3.0}\\
\cmidrule(lr){2-3}\cmidrule(lr){4-5}\cmidrule(lr){6-7}
Storage & MiB & $\Delta$AFA & MiB & $\Delta$AFA & MiB & $\Delta$AFA\\
\midrule
Full & 18.04 & $94.03$\,\% & 54.11 & $89.61$\,\% & 108.21 & $95.19$\,\%\\
Upper triangle & 9.05 & identical & 27.16 & identical & 54.32 & identical\\
Rank 128 & 4.54 & $+0.31\pm0.49$ (2/1) & 13.63 & identical & 27.26 & identical\\
Rank 32 & 1.16 & $+1.07\pm0.62$ (3/0) & 3.49 & $+0.00\pm0.62$ (1/1) & 6.97 & $-0.49\pm0.31$ (0/3)\\
Rank 8 & 0.32 & $+0.25\pm0.79$ (2/1) & 0.95 & $-0.41\pm0.78$ (1/2) & 1.90 & $-2.74\pm0.39$ (0/3)\\
\bottomrule
\end{tabular}\\[2pt]
\parbox{0.8\textwidth}{\scriptsize The Full row gives the AFA of the full-storage run of this study; ``identical'' means identical stage-wise test accuracies in every seed.}
\end{table*}

Packed storage halves the payload and, as expected from symmetry, reproduces every stage-wise test accuracy of full storage on all three streams. Rank-128 storage is lossless up to numerical precision on WiMANS and Widar3.0, where each class covariance is estimated from at most 42 and 75 training instances and therefore has rank below 128; on 3DO, with 800 windows per class and domain, it changes AFA by $+0.31\pm0.49$ pp. Further truncation is dataset-dependent. At rank 32, 3DO AFA rises by $1.07\pm0.62$ pp in every seed, WiMANS is unchanged on average, and Widar3.0 loses $0.49\pm0.31$ pp in every seed; at rank 8, the Widar3.0 loss grows to $2.74\pm0.39$ pp. The 3DO gain suggests that discarding small-variance directions regularizes the 3DO class covariances; this effect was not used to choose any configuration of the paper.

\subsection{Lead and Baseline Forgetting}
Table~\ref{tab:leadafr} lists the 42 settings of the correlation analysis in Section~5.4 of the main paper. For each setting, the lead is the three-seed mean AFA of WiMoT minus that of the baseline with the highest mean AFA among the baselines run in that setting (21 in the main settings, 14 in the further 3DO and WiMANS orders, 10 with reduced labels, and 5 in the further Widar3.0 orders and rooms and with other encoders). The Widar3.0 column includes the Room-2 and Room-3 streams.

\begin{table*}[t]
\centering\scriptsize
\setlength{\tabcolsep}{2.6pt}
\caption{AFA lead of WiMoT over the strongest baseline of each setting (pp), AFR of that baseline (pp), and mean AFR of all baselines of the setting (pp); three-seed means}
\label{tab:leadafr}
\begin{tabular}{lrrr@{\hspace{10pt}}lrrr@{\hspace{10pt}}lrrr}
\toprule
\multicolumn{4}{c}{3DO} & \multicolumn{4}{c}{WiMANS} & \multicolumn{4}{c}{Widar3.0}\\
\cmidrule(lr){1-4}\cmidrule(lr){5-8}\cmidrule(lr){9-12}
Setting & Lead & AFR$_{\mathrm{best}}$ & AFR$_{\mathrm{mean}}$ & Setting & Lead & AFR$_{\mathrm{best}}$ & AFR$_{\mathrm{mean}}$ & Setting & Lead & AFR$_{\mathrm{best}}$ & AFR$_{\mathrm{mean}}$\\
\midrule
Main & $+3.77$ & $5.64$ & $7.27$ & Main & $+0.41$ & $2.78$ & $4.44$ & Main & $+13.11$ & $10.25$ & $16.21$\\
Order 1-3-2 & $+5.04$ & $7.43$ & $9.12$ & Order 1-3-2 & $-1.03$ & $0.62$ & $3.69$ & Order 4-8-9-2-3-5-1-7-6 & $+11.14$ & $11.67$ & $17.26$\\
Order 2-1-3 & $+6.81$ & $7.24$ & $9.83$ & Order 2-1-3 & $+1.85$ & $1.85$ & $3.78$ & Order 6-7-4-3-1-2-5-8-9 & $+14.52$ & $14.33$ & $19.31$\\
Order 2-3-1 & $+6.36$ & $6.54$ & $9.18$ & Order 2-3-1 & $+1.03$ & $2.93$ & $3.08$ & Reverse order & $+9.58$ & $10.33$ & $15.08$\\
Order 3-1-2 & $+1.13$ & $4.19$ & $9.42$ & Order 3-1-2 & $-1.95$ & $-0.31$ & $1.81$ & Room 2 & $+0.45$ & $3.80$ & $8.59$\\
Order 3-2-1 & $+2.31$ & $3.26$ & $7.72$ & Order 3-2-1 & $-0.93$ & $1.54$ & $2.80$ & Room 3 & $+9.92$ & $12.63$ & $21.35$\\
10\% labels & $+2.40$ & $2.85$ & $5.56$ & 10\% labels & $-1.44$ & $2.47$ & $4.51$ & 10\% labels & $+10.57$ & $7.11$ & $11.93$\\
25\% labels & $+2.81$ & $4.06$ & $6.73$ & 25\% labels & $-1.23$ & $1.85$ & $4.88$ & 25\% labels & $+9.85$ & $8.83$ & $13.57$\\
Chronos-2 & $+6.31$ & $7.92$ & $6.45$ & Chronos-2 & $-3.40$ & $1.08$ & $0.83$ & Chronos-2 & $+13.70$ & $8.31$ & $12.72$\\
MantisV2 random & $+7.89$ & $4.24$ & $8.28$ & MantisV2 random & $+1.44$ & $-1.23$ & $1.88$ & MantisV2 random & $+30.99$ & $21.92$ & $24.73$\\
MOMENT-1-large & $+7.30$ & $4.11$ & $4.22$ & MOMENT-1-large & $-5.04$ & $-1.08$ & $-1.73$ & MOMENT-1-large & $+16.62$ & $10.33$ & $15.82$\\
TimesFM 2.5 & $+4.15$ & $7.04$ & $5.89$ & TimesFM 2.5 & $-2.67$ & $1.23$ & $0.93$ & TimesFM 2.5 & $+17.90$ & $14.08$ & $14.21$\\
TimesFM 3.0 & $+6.69$ & $6.42$ & $6.13$ & TimesFM 3.0 & $-4.73$ & $-1.54$ & $-1.27$ & TimesFM 3.0 & $+18.62$ & $13.67$ & $15.68$\\
TSPulse-r1 & $+9.56$ & $5.86$ & $4.84$ & TSPulse-r1 & $-4.53$ & $-2.78$ & $-2.59$ & TSPulse-r1 & $+26.40$ & $14.81$ & $19.13$\\
\bottomrule
\end{tabular}\\[2pt]
\parbox{\textwidth}{\scriptsize Orders of 3DO and WiMANS number the domains by their position in the original order; Widar3.0 orders number the Room-1 users as in Table~6 of the main paper.}
\end{table*}

Over all 42 settings, the lead correlates with the AFR of the strongest baseline (Pearson $r=0.94$, $p<0.001$; Spearman $\rho=0.94$) and with the mean baseline AFR ($r=0.89$, $\rho=0.89$). After the dataset means are removed, the correlations are $r=0.80$, $\rho=0.62$ and $r=0.60$, $\rho=0.35$ ($p<0.001$ and $p=0.023$ for the two Spearman coefficients). Within each dataset (14 settings), the Spearman correlation of the lead with the AFR of the strongest baseline is $0.40$ on 3DO ($p=0.16$), $0.59$ on WiMANS ($p=0.027$), and $0.75$ on Widar3.0 ($p=0.002$), and with the mean baseline AFR $-0.21$ ($p=0.46$), $0.70$ ($p=0.006$), and $0.49$ ($p=0.07$). The association is therefore present within WiMANS and Widar3.0 and weak within 3DO. The $p$-values are descriptive: the 42 settings share recordings, seeds, and, within a study, baselines, and they pool different baseline sets. Because forgetting lowers the final accuracy of the baseline that defines the lead, the correlation does not by itself identify a mechanism. WiMoT is ahead in 32 of the 42 settings (14 of 14 on 3DO and Widar3.0, 4 of 14 on WiMANS); since the encoder study contributes 6 of the 14 settings of each dataset, these counts are better read per study than as a total.

\section{Input Features of WiMoT and the Baselines}\label{sec:inputsource}
WiMoT reads the Mantis block and the amplitude descriptor through separate stems, whereas the baselines of Table~2 read their concatenation. This section examines whether the comparison depends on this input: Sections~\ref{sec:addamp} and \ref{sec:amponly} vary the input of four representative baselines and of WiMoT, and Section~\ref{sec:mantisonly} repeats all twenty-one baselines with the Mantis block alone.

\begin{table*}[t]
\centering\scriptsize
\setlength{\tabcolsep}{2.1pt}
\caption{Test AFA (\%, mean $\pm$ sample SD, seeds 42--44) with the Mantis block (9216), the concatenation of the Mantis block and the amplitude descriptor (9728), and the amplitude descriptor alone (512)}
\label{tab:inputsource}
\begin{tabular}{lccccccccc}
\toprule
& \multicolumn{3}{c}{3DO} & \multicolumn{3}{c}{WiMANS} & \multicolumn{3}{c}{Widar3.0}\\
\cmidrule(lr){2-4}\cmidrule(lr){5-7}\cmidrule(lr){8-10}
Method & 9216 & 9728 & 512 & 9216 & 9728 & 512 & 9216 & 9728 & 512\\
\midrule
WiMoT (Table~2) & -- & $94.02\pm0.07$ & -- & -- & $89.61\pm0.47$ & -- & -- & $95.16\pm1.23$ & --\\
WiMoT, amplitude stem$^{b}$ & -- & -- & $97.37\pm0.23$ & -- & -- & $25.82\pm1.25$ & -- & -- & $85.90\pm0.37$\\
WiMoT, amplitude basis$^{a}$ & -- & -- & $99.02\pm0.16$ & -- & -- & $32.51\pm2.06$ & -- & -- & $94.00\pm0.15$\\
\midrule
SI+ER & $90.24\pm0.22$ & $90.25\pm0.67$ & $97.86\pm0.59$ & $87.96\pm0.31$ & $87.86\pm0.64$ & $23.97\pm1.39$ & $81.73\pm0.62$ & $81.38\pm0.95$ & $52.54\pm1.37$\\
DER++ & $89.76\pm0.78$ & $89.81\pm0.89$ & $95.89\pm1.04$ & $88.27\pm0.53$ & $88.27\pm0.53$ & $24.90\pm1.39$ & $80.27\pm0.04$ & $80.25\pm1.01$ & $46.79\pm1.30$\\
EdgeCL & $89.86\pm0.29$ & $89.68\pm1.29$ & $82.41\pm1.20$ & $86.63\pm1.58$ & $85.49\pm2.64$ & $28.40\pm1.88$ & $74.77\pm0.38$ & $75.01\pm1.20$ & $27.36\pm0.33$\\
CN-DPM & $90.45\pm0.11$ & $88.48\pm0.50$ & $97.44\pm0.32$ & $85.08\pm0.36$ & $86.32\pm1.25$ & $27.88\pm1.70$ & $75.16\pm0.84$ & $74.84\pm2.22$ & $42.02\pm0.15$\\
\midrule
Joint training (non-continual) & $89.65\pm0.25$ & $90.64\pm0.76$ & $99.03\pm0.05$ & $90.95\pm0.36$ & $91.46\pm0.94$ & $28.19\pm0.71$ & $93.51\pm0.11$ & $94.84\pm0.41$ & $94.42\pm0.19$\\
\bottomrule
\end{tabular}\\[2pt]
\parbox{\textwidth}{\scriptsize $^{a}$Semantic stem and PCA shortcut read the amplitude descriptor; no foundation encoder. $^{b}$Only the fixed random amplitude stem. WiMoT reads both blocks through separate stems and is listed under 9728. The 9728 column equals Table~2; Section~\ref{sec:inputsetting} describes the machines used for the other columns.}
\end{table*}
\subsection{Setting}\label{sec:inputsetting}
SI+ER, DER++, EdgeCL, and CN-DPM were trained with three inputs: the 9216-dimensional Mantis block, the 9728-dimensional concatenation of the Mantis block and the amplitude descriptor (the input of Table~2), and the 512-dimensional amplitude descriptor alone. The concatenated input is normalized as in WiMoT, the Mantis block by first-domain coordinate standardization and division by $\sqrt{9216}$ and the amplitude block by its uncentered first-domain RMS times $\sqrt{512}$; the amplitude-only input uses the same amplitude scaling. The MLP baselines widen only their input layer, EdgeCL receives the amplitude descriptor as a nineteenth 512-dimensional token, and CN-DPM applies its own coordinate normalization to each input. For every new input, the SI+ER stabilizer, the EdgeCL radius, and the CN-DPM log-concentration are chosen from the candidates of Section~\ref{sec:baselines} by the equally weighted seed-42 validation AFA of 3DO and WiMANS; DER++ keeps $\alpha=\beta=0.5$. SI+ER selected 10 and CN-DPM $-10$ for all inputs; EdgeCL selected 0.01 for the amplitude-only input and 0.03 for the concatenated input. Seeds 42--44, partitions, domain orders, replay caches, and schedules are those of Table~2, and the 9728-dimensional column of Table~\ref{tab:inputsource} equals the corresponding entries of Table~2.

Two amplitude-only variants of WiMoT use the recipe and classifier of the main paper unchanged. In the first, the Mantis input is set to zero and the PCA shortcut is omitted, so all input information passes through the fixed random amplitude stem. In the second, the semantic stem and the first-domain PCA shortcut read the amplitude descriptor in place of the Mantis block, and no foundation encoder is evaluated. The first-domain 3DO amplitude descriptors have numerical rank 101, because they are interpolated from few channels; the PCA shortcut therefore keeps 101 components there and sets its remaining 155 coordinates to zero, whereas WiMANS and Widar3.0 keep 256 components. Both variants are analysis controls; WiMoT remains the configuration of Table~2. As a non-continual reference, the MLP of the classic baselines was trained with CE for 100 epochs on the union of the training partitions of all domains, with normalization fitted on this union, for each of the three inputs.

The runs used two machines with the same GPU model and software environment. The 9216- and 9728-dimensional baselines were trained on the first machine, except the 9728-dimensional EdgeCL, whose records are those of Table~2; the amplitude-only runs and the joint references were trained on the second. Both machines also ran WiMoT and the 9216-dimensional baselines, so every paired difference below uses runs from a single machine. WiMoT reproduces Table~2 on both machines in every seed. The 9216-dimensional runs of SI+ER, DER++, and CN-DPM are identical on the two machines and differ by at most 0.64 pp from the 9216-dimensional means of Table~\ref{tab:mantisonly}, which were obtained on a third machine. EdgeCL is not bitwise reproducible across GPUs: with identical code and seeds, its per-seed AFA differed by up to 3.06 pp between the two machines (9216-dimensional means of $87.65\pm0.95$, $86.32\pm2.32$, and $74.40\pm0.81$\% on the second), so EdgeCL differences of this size are not interpreted.

\subsection{Adding the Amplitude Descriptor}\label{sec:addamp}
Appending the amplitude descriptor to the Mantis block changes the AFA of SI+ER, DER++, and EdgeCL by $-1.14$ to $+0.24$ pp, within seed variation. For CN-DPM it lowers 3DO AFA by $1.97\pm0.45$ pp, in every seed, and raises its AFR from 3.86 to 8.18 pp; on WiMANS and Widar3.0 the changes are $+1.23\pm1.35$ and $-0.32\pm1.61$ pp. Giving SI+ER and DER++ the 128-dimensional output of the fixed random amplitude stem of WiMoT in place of the descriptor changed their AFA by $-0.74$ to $+0.41$ pp. The joint references gain 0.99, 0.51, and 1.33 pp from the concatenation. The CE-trained networks therefore make little use of the amplitude block when it is appended to the Mantis block, although the descriptor alone is informative on 3DO and Widar3.0 (Section~\ref{sec:amponly}). With the 9728-dimensional input, WiMoT exceeds SI+ER by $3.77\pm0.67$, $1.75\pm0.78$, and $13.78\pm0.85$ pp on 3DO, WiMANS, and Widar3.0, DER++ by $4.21\pm0.86$, $1.34\pm0.36$, and $14.91\pm0.23$ pp, EdgeCL by $4.34\pm1.36$, $4.12\pm2.78$, and $20.15\pm1.64$ pp, and CN-DPM by $5.54\pm0.47$, $3.29\pm1.25$, and $20.32\pm1.83$ pp, positive in every seed in all twelve comparisons.

\subsection{Amplitude Descriptor Alone}\label{sec:amponly}
On 3DO, the class information is carried mainly by the amplitude statistics. With the amplitude descriptor alone, SI+ER, CN-DPM, and DER++ reach 97.86, 97.44, and 95.89\%, above the 94.02\% of WiMoT in its main configuration, which is lower by $3.84\pm0.52$, $3.43\pm0.36$, and $1.87\pm0.97$ pp in every seed. Given the same amplitude-only input, however, the WiMoT variant whose basis reads the descriptor reaches $99.02\pm0.16$\%, close to the joint reference of 99.03\%, and exceeds these three baselines by $1.16\pm0.46$, $1.57\pm0.37$, and $3.13\pm0.88$ pp in every seed; it is also $5.00\pm0.10$ pp above the main configuration. The variant with the random amplitude stem only reaches $97.37\pm0.23$\%: $0.49\pm0.44$ pp below SI+ER (every seed), $0.07\pm0.38$ pp below CN-DPM (one of three seeds ahead), and $1.48\pm0.82$ pp above DER++ (every seed). The 3DO lead of Table~2 is thus obtained against baselines that read the Mantis block together with the amplitude descriptor; baselines that read only the descriptor exceed the main configuration of WiMoT, but not WiMoT given the same input.

On WiMANS, the amplitude descriptor carries little class information: every method stays between 23.97 and 32.51\% with this input, and the joint reference reaches 28.19\%, compared with a chance level of 11.1\% for nine classes. Recognition on WiMANS therefore relies on the foundation-encoder features, in line with the encoder study, where a randomly initialized MantisV2 lowers the AFA of WiMoT from 89.61 to 78.29\% (Table~\ref{tab:backbone}).

On Widar3.0, the amplitude descriptor suffices for joint training, which reaches 94.42\%, but the continual baselines reach only 27.36--52.54\% with it, with AFR between 43.92 and 58.44 pp. The WiMoT variant whose basis reads the descriptor reaches $94.00\pm0.15$\% with 0.64 pp AFR and exceeds SI+ER, DER++, EdgeCL, and CN-DPM by $41.46\pm1.42$, $47.21\pm1.35$, $66.64\pm0.30$, and $51.98\pm0.28$ pp in every seed, the largest differences among the three inputs. With the random amplitude stem alone, it reaches $85.90\pm0.37$\%. The configuration of Table~2 remains higher than the variant without a foundation encoder, by $1.16\pm1.08$ pp, in every seed.

\subsection{Mantis Block Alone for All Baselines}\label{sec:mantisonly}
Table~\ref{tab:mantisonly} repeats all twenty-one baselines with the 9216-dimensional Mantis block alone, with coefficients selected for that input, and compares them with the 9728-dimensional input of Table~2. The input has little effect on the strongest baseline, whose AFA changes by $-0.20$, $+0.62$, and $+0.17$ pp from the 9216- to the 9728-dimensional input on 3DO, WiMANS, and Widar3.0. Several recent methods are more sensitive: DualCP, whose threshold had to be changed from 0.1 to 0.85 (Section~\ref{sec:baselines}), differs by up to 24.38 pp on WiMANS, and DUCT+ER varies strongly across seeds with the 9728-dimensional input on that stream. Against the strongest baseline of each input, the lead of WiMoT is $+3.56\pm0.16$, $+1.03\pm0.47$, and $+13.28\pm0.97$ pp with the Mantis block alone and $+3.77\pm0.67$, $+0.41\pm0.64$, and $+13.11\pm1.22$ pp with the 9728-dimensional input. Against the better variant of every baseline it is $+3.56\pm0.16$, $+0.41\pm0.64$, and $+13.11\pm1.22$ pp, positive in every seed on 3DO and Widar3.0 and in two of three seeds on WiMANS. The leads on 3DO and Widar3.0 thus do not depend on the input of the baselines, and on WiMANS the AFA difference stays within the seed variation for either input.

\begin{table*}[t]
\centering\scriptsize
\setlength{\tabcolsep}{2.6pt}
\caption{Test AFA (\%, mean $\pm$ sample SD, seeds 42--44) of the twenty-one baselines with the 9216-dimensional Mantis block alone and with the 9728-dimensional input (Table~2), and the paired AFA difference of WiMoT from the strongest baseline (pp; seeds ahead). Bold: highest baseline of each column. The last two rows compare with the better of the two variants of every baseline, listed under 9728}
\label{tab:mantisonly}
\begin{tabular}{lcccccc}
\toprule
& \multicolumn{2}{c}{3DO} & \multicolumn{2}{c}{WiMANS} & \multicolumn{2}{c}{Widar3.0}\\
\cmidrule(lr){2-3}\cmidrule(lr){4-5}\cmidrule(lr){6-7}
Method & 9216 & 9728 & 9216 & 9728 & 9216 & 9728\\
\midrule
\textbf{WiMoT} & -- & $\boldsymbol{94.02\pm0.07}$ & -- & $\boldsymbol{89.61\pm0.47}$ & -- & $\boldsymbol{95.16\pm1.23}$\\
\midrule
Fine-tuning & $82.98\pm0.66$ & $84.19\pm1.13$ & $86.52\pm0.18$ & $86.93\pm1.17$ & $72.47\pm1.57$ & $70.02\pm0.38$\\
EWC & $82.97\pm0.61$ & $84.19\pm1.10$ & $86.32\pm0.18$ & $86.93\pm1.17$ & $72.47\pm1.24$ & $70.62\pm0.37$\\
SI & $89.69\pm0.55$ & $89.38\pm0.21$ & $87.14\pm0.71$ & $87.76\pm1.52$ & $77.53\pm0.69$ & $75.51\pm0.28$\\
MAS & $87.55\pm0.69$ & $87.72\pm1.13$ & $86.93\pm0.18$ & $88.17\pm1.28$ & $68.77\pm0.31$ & $68.44\pm0.58$\\
LwF & $84.45\pm0.29$ & $85.17\pm0.92$ & $87.24\pm0.78$ & $88.58\pm0.62$ & $79.46\pm0.42$ & $79.06\pm0.60$\\
ER & $89.76\pm0.77$ & $89.71\pm0.72$ & $87.45\pm0.18$ & $86.73\pm0.31$ & $76.49\pm0.67$ & $76.94\pm0.58$\\
EWC+ER & $89.73\pm0.81$ & $89.64\pm0.76$ & $87.35\pm0.31$ & $87.04\pm0.62$ & $76.52\pm0.78$ & $78.91\pm1.06$\\
SI+ER & $90.36\pm0.20$ & $\boldsymbol{90.25\pm0.67}$ & $88.17\pm0.47$ & $87.86\pm0.64$ & $\boldsymbol{81.88\pm0.76}$ & $81.38\pm0.95$\\
MAS+ER & $87.79\pm0.42$ & $89.28\pm0.96$ & $87.65\pm0.31$ & $\boldsymbol{89.20\pm1.07}$ & $71.14\pm0.11$ & $76.17\pm0.68$\\
LwF+ER & $87.20\pm0.66$ & $88.36\pm0.29$ & $88.37\pm0.18$ & $89.09\pm0.47$ & $81.41\pm0.34$ & $\boldsymbol{82.05\pm1.07}$\\
DER++ & $89.86\pm0.81$ & $89.81\pm0.89$ & $\boldsymbol{88.58\pm0.82}$ & $88.27\pm0.53$ & $80.37\pm0.22$ & $80.25\pm1.01$\\
A-GEM & $85.51\pm0.33$ & $86.22\pm0.75$ & $86.73\pm0.62$ & $86.63\pm0.71$ & $73.06\pm0.95$ & $72.91\pm0.50$\\
CN-DPM & $\boldsymbol{90.45\pm0.11}$ & $88.48\pm0.50$ & $85.60\pm0.64$ & $86.32\pm1.25$ & $75.80\pm0.85$ & $74.84\pm2.22$\\
\midrule
DUCT & $76.82\pm12.55$ & $84.80\pm1.64$ & $79.84\pm2.62$ & $74.49\pm7.86$ & $42.10\pm1.32$ & $37.41\pm7.26$\\
DUCT+ER & $84.95\pm0.82$ & $84.44\pm1.04$ & $79.01\pm3.21$ & $61.11\pm31.64$ & $37.63\pm7.16$ & $40.74\pm0.32$\\
DualCP & $77.59\pm0.31$ & $77.14\pm0.56$ & $68.31\pm0.94$ & $43.93\pm0.18$ & $53.51\pm0.64$ & $65.95\pm0.34$\\
DualCP+ER & $81.42\pm0.49$ & $81.92\pm0.29$ & $71.09\pm1.52$ & $57.00\pm0.99$ & $53.28\pm0.11$ & $65.33\pm0.30$\\
VanillaCP & $78.24\pm0.11$ & $77.81\pm0.04$ & $73.05\pm1.39$ & $67.70\pm0.64$ & $69.23\pm0.57$ & $73.14\pm0.31$\\
VanillaCP+ER & $82.69\pm0.06$ & $82.65\pm0.22$ & $73.87\pm1.39$ & $71.19\pm0.47$ & $69.28\pm0.54$ & $72.17\pm0.50$\\
ConSense & $77.05\pm1.28$ & $76.31\pm0.39$ & $81.17\pm2.23$ & $83.85\pm1.08$ & $56.00\pm1.37$ & $64.62\pm0.93$\\
EdgeCL & $89.23\pm0.98$ & $89.68\pm1.29$ & $86.93\pm1.58$ & $85.49\pm2.64$ & $74.89\pm0.83$ & $75.01\pm1.20$\\
\midrule
Strongest: baseline & CN-DPM & SI+ER & DER++ & MAS+ER & SI+ER & LwF+ER\\
Strongest: lead & $+3.56\pm0.16$ (3/3) & $+3.77\pm0.67$ (3/3) & $+1.03\pm0.47$ (3/3) & $+0.41\pm0.64$ (2/3) & $+13.28\pm0.97$ (3/3) & $+13.11\pm1.22$ (3/3)\\
Better variant: baseline & -- & CN-DPM & -- & MAS+ER & -- & LwF+ER\\
Better variant: lead & -- & $+3.56\pm0.16$ (3/3) & -- & $+0.41\pm0.64$ (2/3) & -- & $+13.11\pm1.22$ (3/3)\\
\bottomrule
\end{tabular}
\end{table*}

\section{Widar3.0 Protocol}\label{sec:widarsupp}
\subsection{Source and Partitions}
The main stream uses the Widar3.0 CSI archives of Room~1 recorded on 2018-11-30. Their SHA-256 digests are

{\tiny
\noindent\texttt{20181130\_user5\_10\_11.zip}\\
\texttt{82c8225aeec37258598e22c927f6c4b8a40211013fa2b741c926f6ef162be2f0}\\
\texttt{20181130\_user12\_13\_14.zip}\\
\texttt{fc47cb4d946265ce219bfff60cc2798a50dd36afda2b7894958eb0728cc1352a}\\
\texttt{20181130\_user15\_16\_17.zip}\\
\texttt{4c753d0aed8505cee6daf5440f78f7e92286836b634ed9b0cdf79d1cfac968f7}\par}

\noindent Each of the nine users is one domain, in the order 5, 10, 11, 12, 13, 14, 15, 16, 17. The six labels are gestures 1--6 of these archives (push and pull, sweep, clap, slide, and horizontal O and zigzag), which all nine users performed. Each user performed each gesture at five torso locations and five face orientations, five repetitions each, recorded by six receivers with three antennas. An instance is identified by user, gesture, torso location, face orientation, and repetition; its six receiver files are kept together. Repetitions 1--3 form the training partition, repetition 4 the validation partition, and repetition 5 the test partition, so the three partitions contain the same location-orientation grid but different performances, giving 450, 150, and 150 instances per user before exclusions. The partitions therefore separate repetitions within a session rather than recording sessions.

Each receiver file is parsed as Intel 5300 CSI with three receive antennas and one transmit antenna. Amplitudes of the 30 subcarriers are averaged within 300 equal packet-order bins per receiver, and the six receivers are stacked into $540\times300$ channels. Each instance is standardized by the mean and SD of all its entries. Instances in which any receiver records fewer than 300 packets are excluded: user12-1-3-5-4 from validation, and user13-5-5-3-1, user13-6-1-3-2, user14-4-1-1-3, user15-3-5-1-2, user15-5-4-5-2, user17-5-3-3-3, and user17-5-5-2-2 from training (user-gesture-location-orientation-repetition). All test partitions are complete with 150 instances. Mantis features and the amplitude descriptor are computed with the same frozen encoder, grouping, and descriptor as for WiMANS; the encoder identity is checked against the WiMANS record.

\subsection{Settings and Budgets}
All baselines use the coefficients selected on 3DO and WiMANS (Section~\ref{sec:baselines}) without change, and WiMoT and the baselines keep their 3DO/WiMANS recipes, with eight 50-epoch updates after the first domain. DualCP and VanillaCP calibrators are frozen after their domain, so the stage-$k$ predictor is evaluated as the final model restricted to its first $k$ calibrators and domain means; this is identical to the stage-$k$ model and avoids storing nine growing checkpoints. The history-aligned methods were checked at every update against the retained-identity records of the common reservoir. The common replay cache has $M=42$ training instances, seven per class, shared by all observed users; the replay share is $42/(N_t+42)$, between 8.54\% and 8.61\% for $N_t\in[446,450]$. EdgeCL keeps $\lfloor M/C\rfloor$ exemplars per class and divides them over the retained domains; when the division is not exact, the remainder goes to the most recent domains. For 3DO and WiMANS this rule coincides with the original equal split.

\subsection{Further Rooms}
The Room-2 and Room-3 streams use the Widar3.0 CSI archives with SHA-256 digests

{\tiny
\noindent\texttt{20181204.zip}\\
\texttt{84c08311715027b381633d78444b674ac2f4720dc847ed1ca207b43b5578dc8a}\\
\texttt{20181209.zip}\\
\texttt{5977c10525068ab5a037af1187befb7bf70f65f5d39329b6903ea00c36e811ca}\\
\texttt{20181211.zip}\\
\texttt{65c8a9db3902825c11fb269fa5f2eea9ed3fae963d8c33c54cd63b87a7e76a20}\par}

\noindent Room~2 contains user 1 (2018-12-04) and user 6 (2018-12-09), Room~3 users 3, 7, 8, and 9 (2018-12-11). Only gesture codes 1--6 enter the streams; according to the dataset documentation they denote the same six gestures as in Room~1. Partitions, binning, normalization, features, recipes, and coefficients are those of Room~1. Excluded instances (user-gesture-location-orientation-repetition) are, for Room~2, user6-3-1-1-5 (empty receiver file), user1-1-2-1-2, user1-1-5-5-1, and user1-4-4-1-5 (fewer than 300 packets), and, for Room~3, user8-1-1-1-1, user8-3-3-3-5, and user9-1-1-1-1 (empty receiver file), and user9-2-2-3-1, user9-2-4-3-2, user9-3-2-2-1, user9-3-4-2-3, user9-4-5-5-2, user9-5-2-3-2, and user9-2-2-5-5 (fewer than 300 packets). The training partitions contain 443--450 instances per user. The replay cache of the replay baselines follows the same capacity rule, seven instances per class. Features are computed with the same frozen encoder as for Room~1, and the encoder identity is checked against the Room-1 record.

\section{Plasticity and Stability in the Ablation}\label{sec:balance}
Table~3 of the main paper reports, for WiMoT and the baselines, the accuracy on the domain just learned (new-domain accuracy, plasticity), the accuracy on the earlier domains after the last update (retention), and AFR. Table~\ref{tab:balanceabl} gives the same quantities for the single-component ablation of Table~4, computed from the stage-wise test-accuracy matrices of seeds 42--46; the script \texttt{scripts/plasticity\_stability.py} of the reproducibility package recomputes them.

\begin{table*}[t]
\centering\scriptsize
\setlength{\tabcolsep}{2.6pt}
\caption{New-domain accuracy and earlier-domain accuracy (\%) and AFR (pp), as in Table~3 of the main paper, for the single-component ablation of Table~4; five runs, seeds 42--46}
\label{tab:balanceabl}
\begin{tabular}{l@{\hspace{4pt}}ccc@{\hspace{10pt}}ccc@{\hspace{10pt}}ccc}
\toprule
& \multicolumn{3}{c}{3DO} & \multicolumn{3}{c}{WiMANS} & \multicolumn{3}{c}{Widar3.0}\\
\cmidrule(lr){2-4}\cmidrule(lr){5-7}\cmidrule(lr){8-10}
Method & New & Earlier & AFR & New & Earlier & AFR & New & Earlier & AFR\\
\midrule
\textbf{WiMoT (full)} & 93.10 & 96.75 & 0.26 & 87.22 & 90.83 & $-0.28$ & 94.97 & 96.38 & 0.42\\
w/o SupCon & 87.50 & 97.22 & $-0.57$ & 80.37 & 85.37 & $-3.89$ & 88.52 & 94.02 & 0.15\\
w/o $\mathcal L_{\mathrm{hist}}$ & 93.09 & 96.58 & 0.62 & 86.94 & 89.35 & 0.83 & 94.93 & 95.67 & 1.10\\
w/o historical moments & 97.92 & 66.73 & 30.54 & 86.48 & 82.96 & 5.00 & 94.47 & 43.65 & 52.45\\
Frozen adapter ($t>1$) & 92.67 & 95.37 & 1.33 & 83.70 & 88.06 & 0.09 & 88.07 & 90.88 & 0.40\\
\bottomrule
\end{tabular}
\end{table*}

Each component acts mainly on the quantity that its role in Section~4.1 predicts. Removing supervised contrastive learning lowers the new-domain accuracy by 5.60, 6.85, and 6.45 pp, while AFR does not rise; because this row also trains the first domain without SupCon, Section~\ref{sec:sweep} separates the effect on later domains. Removing the historical loss leaves the new-domain accuracy within 0.28 pp, lowers the earlier-domain accuracy by 0.17, 1.48, and 0.71 pp, and raises AFR by 0.36, 1.11, and 0.68 pp. Without the stored historical moments, the classifier reads only the current domain: the new-domain accuracy changes by at most 0.74 pp on WiMANS and Widar3.0 and rises on 3DO, while the earlier-domain accuracy falls by 30.02, 7.87, and 52.73 pp. Freezing the adapter lowers the new-domain accuracy by 0.43, 3.52, and 6.90 pp, most on the longest stream.

\section{Classifier Replacement}\label{sec:readoutabl}
Table~4 of the main paper replaces the OSCL classifier (Section~4.5) by simpler classifiers fitted on the same adapters and the same stored moments. This section describes these classifiers and gives the complete results (Table~\ref{tab:readoutabl}). The WiMoT models are those of the ablation (seeds 42--46), so the OSCL rows coincide with the full-method rows of Table~4.

\emph{Implementation.} Every classifier uses only the class moments stored for the observed domains, transported through the adapter of the current update (Section~4.3); it uses no historical sample and no domain identity, and every observed domain has the same weight. \emph{Nearest class mean} assigns a sample to the class with the closest mean in Euclidean distance, where the class mean is the average of the transported domain means. \emph{LDA} forms for each class the mixture of its domain moments, whose covariance adds the between-domain scatter of the means to the average within-domain covariance, averages the class covariances into one shared covariance, and uses uniform class priors. \emph{QDA} uses one Gaussian per class with the same mixture moments, without domain gate or marginalization over domains. The covariances of LDA and QDA are regularized as $(1-\beta)\Sigma+\beta\operatorname{tr}(\Sigma)/d\,I$ with $\beta=\kappa d/(n-1+\kappa d)$ and $\kappa=0.25$, the pseudo-count prior of the OSCL classifier ($d$: feature dimension, $n$: number of samples behind the moment). This regularization was fixed in advance, and no alternative classifier was tuned separately; the rows with a fixed $\beta=10^{-3}$ show the effect of a different ridge. The \emph{linear probe} uses the linear map $W$ of the historical loss as the classification head. Since $W$ is trained to reproduce only the moments of earlier domains and is never fitted to the newest domain, its new-domain accuracy is low, and its AFR is negative on 3DO and WiMANS because its accuracy after the first update is low. Only its AFA and earlier-domain accuracy are informative, and it is therefore not included in Table~4.

\emph{Results.} On 3DO and Widar3.0 the AFA of OSCL is higher than that of every alternative classifier in all five seeds. The strongest alternative is LDA on 3DO (lower by $1.93\pm0.61$ pp) and QDA on Widar3.0 (lower by $3.84\pm0.96$ pp). On WiMANS the nearest class mean is lower by $1.73\pm0.77$ pp and QDA by $8.64\pm1.09$ pp (five of five seeds), whereas LDA is lower by $0.74\pm1.39$ pp, with per-seed differences of $+0.62$, $+1.23$, $+2.78$, $-0.93$, and $0.00$ pp; the corresponding differences in new-domain and earlier-domain accuracy are $1.39\pm1.64$ and $0.28\pm1.37$ pp. With the fixed ridge of $10^{-3}$ the AFA difference to LDA is $0.68\pm1.40$ pp (higher in three seeds, lower in two). On WiMANS OSCL is thus clearly ahead of the nearest class mean and QDA, but not separated from LDA.

\begin{table}[t]
\centering\scriptsize
\setlength{\tabcolsep}{2.4pt}
\caption{AFA, AFR (pp), and new- and earlier-domain accuracy (\%) with the OSCL classifier replaced by other classifiers fitted on the same adapters and transported moments; mean $\pm$ SD, seeds 42--46. The OSCL rows are the full-WiMoT rows of Table~IV; ridge rows use a fixed $\beta=10^{-3}$. $^{\dagger}$Fitted to earlier domains only (Section~\ref{sec:readoutabl})}
\label{tab:readoutabl}
\begin{tabular}{lcccc}
\toprule
Classifier & AFA & AFR & New & Earlier\\
\midrule
\multicolumn{5}{l}{\textit{3DO}}\\
OSCL (WiMoT) & \textbf{93.98$\pm$0.15} & 0.26$\pm$0.31 & 93.10$\pm$0.13 & 96.75$\pm$0.34\\
Nearest class mean & 80.89$\pm$0.89 & 14.04$\pm$0.98 & 88.91$\pm$0.97 & 78.95$\pm$0.41\\
LDA & 92.06$\pm$0.49 & 1.64$\pm$0.90 & 90.98$\pm$0.89 & 95.69$\pm$0.63\\
QDA & 91.00$\pm$0.70 & 1.23$\pm$0.26 & 89.53$\pm$0.98 & 95.47$\pm$0.23\\
Linear probe $W^{\dagger}$ & 85.77$\pm$0.49 & $-$9.97$\pm$0.48 & 70.80$\pm$0.97 & 98.22$\pm$0.35\\
LDA, ridge $10^{-3}$ & 92.04$\pm$0.47 & 1.61$\pm$0.99 & 90.95$\pm$0.91 & 95.72$\pm$0.67\\
QDA, ridge $10^{-3}$ & 89.64$\pm$0.73 & 0.21$\pm$0.59 & 86.94$\pm$0.85 & 95.57$\pm$0.34\\
\midrule
\multicolumn{5}{l}{\textit{WiMANS}}\\
OSCL (WiMoT) & \textbf{89.57$\pm$0.55} & $-$0.28$\pm$0.53 & 87.22$\pm$1.25 & 90.83$\pm$0.21\\
Nearest class mean & 87.84$\pm$0.80 & 0.56$\pm$1.28 & 86.11$\pm$2.29 & 88.06$\pm$0.60\\
LDA & 88.83$\pm$1.33 & $-$0.56$\pm$1.40 & 85.83$\pm$1.21 & 90.56$\pm$1.37\\
QDA & 80.93$\pm$1.33 & 1.11$\pm$1.81 & 77.50$\pm$2.85 & 81.94$\pm$1.57\\
Linear probe $W^{\dagger}$ & 85.62$\pm$1.10 & $-$8.80$\pm$2.45 & 73.89$\pm$2.23 & 88.52$\pm$1.06\\
LDA, ridge $10^{-3}$ & 88.89$\pm$1.33 & $-$0.65$\pm$1.16 & 86.02$\pm$1.11 & 90.56$\pm$1.11\\
QDA, ridge $10^{-3}$ & 63.46$\pm$1.85 & 11.76$\pm$4.76 & 62.59$\pm$2.52 & 63.98$\pm$3.91\\
\midrule
\multicolumn{5}{l}{\textit{Widar3.0}}\\
OSCL (WiMoT) & \textbf{95.24$\pm$0.88} & 0.42$\pm$0.49 & 94.97$\pm$0.35 & 96.38$\pm$0.82\\
Nearest class mean & 67.05$\pm$1.28 & 18.20$\pm$1.76 & 81.43$\pm$0.70 & 68.32$\pm$1.67\\
LDA & 84.43$\pm$0.94 & 2.18$\pm$0.53 & 82.58$\pm$0.57 & 87.08$\pm$0.97\\
QDA & 91.41$\pm$0.50 & 2.48$\pm$0.34 & 92.58$\pm$0.57 & 93.58$\pm$0.57\\
Linear probe $W^{\dagger}$ & 81.94$\pm$0.80 & 0.27$\pm$0.41 & 48.77$\pm$1.59 & 86.35$\pm$0.69\\
LDA, ridge $10^{-3}$ & 84.37$\pm$0.99 & 2.20$\pm$0.47 & 82.63$\pm$0.62 & 86.95$\pm$1.00\\
QDA, ridge $10^{-3}$ & 90.62$\pm$0.52 & 2.00$\pm$0.55 & 89.88$\pm$0.78 & 92.73$\pm$0.69\\
\bottomrule
\end{tabular}
\end{table}

\section{Sensitivity to the Loss Weights}\label{sec:sweep}
The loss of a later domain is $\lambda_{\mathrm{sc}}\mathcal L_{\mathrm{SC}}+\lambda_{\mathrm h}\mathcal L_{\mathrm{hist}}$, with $\lambda_{\mathrm{sc}}=0.5$ and $\lambda_{\mathrm h}=1$ fixed in advance for all datasets. Both weights were varied separately on seeds 42--44 of all three streams (Fig.~5 of the main paper, Table~\ref{tab:sweep}); the results were not used to adjust them.

\emph{Only the ratio matters.} AdamW is invariant to a joint rescaling of the loss, so the weights act only through the ratio $\lambda_{\mathrm{sc}}/\lambda_{\mathrm h}$. Settings with the same ratio, for example $(0.25,0.5)$, $(0.5,1)$, and $(1,2)$, gave identical accuracy matrices in all seeds. A $3\times3$ grid over $\lambda_{\mathrm{sc}}\in\{0.25,0.5,1\}$ and $\lambda_{\mathrm h}\in\{0.5,1,2\}$ contains only five different ratios and is therefore not reported separately. The two rows of Fig.~5 of the main paper are two sections of the same ratio axis, $\lambda_{\mathrm{sc}}$ in the upper row and $0.5/\lambda_{\mathrm h}$ in the lower row.

\emph{First domain.} In Fig.~5 of the main paper and the first two blocks of Table~\ref{tab:sweep}, $\lambda_{\mathrm{sc}}$ acts on the later domains only, and the first domain is trained with SupCon at the weight of the paper. The last block of Table~\ref{tab:sweep} applies $\lambda_{\mathrm{sc}}$ to the first domain as well; there, $\lambda_{\mathrm{sc}}=0$ reproduces the accuracy matrices of the row ``w/o SupCon'' of Table~4 in every seed 42--44, and $\lambda_{\mathrm h}=0$ those of the row ``w/o $\mathcal L_{\mathrm{hist}}$''.

\emph{Results.} Section~5.4 of the main paper summarizes the insensitivity of AFA for ratios between 0.1 and 2. The full results add three observations. First, setting a weight to zero lowers AFA on WiMANS (by 0.93 and 1.03 pp) and Widar3.0 (by 1.36 and 0.91 pp) but not on 3DO ($+0.09$ and $-0.08$ pp). Second, SupCon matters in both phases: removing it from the later domains lowers the Widar3.0 new-domain accuracy from 95.08 to 90.19\%, and also removing it from the first domain lowers AFA further on all three streams (last block of Table~\ref{tab:sweep}). Third, the historical loss controls forgetting on the longest stream: on Widar3.0 the AFR grows as $\lambda_{\mathrm h}$ decreases (0.31 pp at $\lambda_{\mathrm h}=2$, 0.58 pp at 1, 1.08 pp at 0.25, and 1.33 pp at 0), while the new-domain accuracy stays between 94.6 and 95.2\%.

\begin{table*}[t]
\centering\scriptsize
\setlength{\tabcolsep}{1.2pt}
\caption{AFA, AFR (pp), and new- and earlier-domain accuracy (\%) of WiMoT for different loss weights; mean $\pm$ SD, seeds 42--44. Bold: weights of the paper. In the last block $\lambda_{\mathrm{sc}}$ also acts on the first domain}
\label{tab:sweep}
\begin{tabular}{lcccc@{\hspace{4pt}}cccc@{\hspace{4pt}}cccc}
\toprule
 & \multicolumn{4}{c}{3DO} & \multicolumn{4}{c}{WiMANS} & \multicolumn{4}{c}{Widar3.0}\\
\cmidrule(lr){2-5}\cmidrule(lr){6-9}\cmidrule(lr){10-13}
Weight & AFA & AFR & New & Earlier & AFA & AFR & New & Earlier & AFA & AFR & New & Earlier\\
\midrule
\multicolumn{13}{l}{\textit{SupCon weight $\lambda_{\mathrm{sc}}$ (later domains), $\lambda_{\mathrm h}=1$}}\\
0 & 94.11$\pm$0.47 & $-$0.10$\pm$0.74 & 92.96$\pm$0.76 & 96.92$\pm$0.42 & 88.68$\pm$0.47 & $-$0.93$\pm$0.46 & 85.65$\pm$1.67 & 89.97$\pm$0.27 & 93.80$\pm$0.54 & $-$0.14$\pm$0.21 & 90.19$\pm$0.59 & 95.42$\pm$0.38\\
0.1 & 93.95$\pm$0.27 & 0.10$\pm$0.10 & 92.81$\pm$0.58 & 96.92$\pm$0.36 & 89.09$\pm$1.17 & $-$0.62$\pm$0.96 & 86.57$\pm$1.22 & 90.28$\pm$0.80 & 94.96$\pm$0.71 & 0.42$\pm$0.25 & 94.56$\pm$0.25 & 96.14$\pm$0.82\\
0.25 & 93.94$\pm$0.24 & 0.22$\pm$0.24 & 92.90$\pm$0.43 & 96.82$\pm$0.39 & 89.30$\pm$0.71 & 0.15$\pm$0.27 & 87.19$\pm$1.49 & 90.28$\pm$0.46 & 95.21$\pm$0.72 & 0.31$\pm$0.19 & 94.86$\pm$0.32 & 96.36$\pm$0.84\\
\textbf{0.5} & \textbf{94.02$\pm$0.07} & \textbf{0.32$\pm$0.33} & \textbf{93.07$\pm$0.10} & \textbf{96.82$\pm$0.46} & \textbf{89.61$\pm$0.47} & \textbf{0.00$\pm$0.46} & \textbf{87.50$\pm$1.22} & \textbf{90.74$\pm$0.00} & \textbf{95.16$\pm$1.23} & \textbf{0.58$\pm$0.58} & \textbf{95.08$\pm$0.42} & \textbf{96.31$\pm$1.13}\\
1 & 94.12$\pm$0.03 & 0.17$\pm$0.15 & 93.08$\pm$0.11 & 96.96$\pm$0.18 & 89.30$\pm$1.08 & $-$0.15$\pm$1.41 & 87.04$\pm$0.93 & 90.43$\pm$0.96 & 94.99$\pm$1.37 & 0.89$\pm$0.51 & 95.03$\pm$0.72 & 96.11$\pm$1.23\\
2 & 93.99$\pm$0.16 & 0.21$\pm$0.31 & 92.90$\pm$0.35 & 96.94$\pm$0.23 & 89.30$\pm$1.58 & $-$0.46$\pm$1.67 & 86.73$\pm$0.96 & 90.28$\pm$1.67 & 95.06$\pm$1.22 & 1.08$\pm$0.22 & 95.22$\pm$0.59 & 96.08$\pm$1.17\\
\midrule
\multicolumn{13}{l}{\textit{Historical-loss weight $\lambda_{\mathrm h}$, $\lambda_{\mathrm{sc}}=0.5$}}\\
0 & 93.94$\pm$0.60 & 0.43$\pm$0.30 & 92.97$\pm$0.85 & 96.76$\pm$0.28 & 88.58$\pm$1.11 & 0.31$\pm$0.71 & 86.57$\pm$1.22 & 89.35$\pm$0.93 & 94.25$\pm$0.70 & 1.33$\pm$0.22 & 94.86$\pm$0.67 & 95.33$\pm$0.66\\
0.25 & 93.99$\pm$0.16 & 0.21$\pm$0.31 & 92.90$\pm$0.35 & 96.94$\pm$0.23 & 89.30$\pm$1.58 & $-$0.46$\pm$1.67 & 86.73$\pm$0.96 & 90.28$\pm$1.67 & 95.06$\pm$1.22 & 1.08$\pm$0.22 & 95.22$\pm$0.59 & 96.08$\pm$1.17\\
0.5 & 94.12$\pm$0.03 & 0.17$\pm$0.15 & 93.08$\pm$0.11 & 96.96$\pm$0.18 & 89.30$\pm$1.08 & $-$0.15$\pm$1.41 & 87.04$\pm$0.93 & 90.43$\pm$0.96 & 94.99$\pm$1.37 & 0.89$\pm$0.51 & 95.03$\pm$0.72 & 96.11$\pm$1.23\\
\textbf{1} & \textbf{94.02$\pm$0.07} & \textbf{0.32$\pm$0.33} & \textbf{93.07$\pm$0.10} & \textbf{96.82$\pm$0.46} & \textbf{89.61$\pm$0.47} & \textbf{0.00$\pm$0.46} & \textbf{87.50$\pm$1.22} & \textbf{90.74$\pm$0.00} & \textbf{95.16$\pm$1.23} & \textbf{0.58$\pm$0.58} & \textbf{95.08$\pm$0.42} & \textbf{96.31$\pm$1.13}\\
2 & 93.94$\pm$0.24 & 0.22$\pm$0.24 & 92.90$\pm$0.43 & 96.82$\pm$0.39 & 89.30$\pm$0.71 & 0.15$\pm$0.27 & 87.19$\pm$1.49 & 90.28$\pm$0.46 & 95.21$\pm$0.72 & 0.31$\pm$0.19 & 94.86$\pm$0.32 & 96.36$\pm$0.84\\
4 & 93.94$\pm$0.28 & 0.10$\pm$0.15 & 92.79$\pm$0.53 & 96.92$\pm$0.40 & 89.20$\pm$1.35 & $-$0.15$\pm$1.17 & 86.73$\pm$1.17 & 90.28$\pm$1.22 & 95.06$\pm$0.83 & 0.36$\pm$0.38 & 94.64$\pm$0.35 & 96.28$\pm$0.92\\
\midrule
\multicolumn{13}{l}{\textit{SupCon weight $\lambda_{\mathrm{sc}}$ (all domains, including the first), $\lambda_{\mathrm h}=1$}}\\
0 & 91.85$\pm$0.57 & $-$0.38$\pm$0.07 & 88.29$\pm$0.81 & 97.40$\pm$0.38 & 84.47$\pm$2.59 & $-$4.63$\pm$2.58 & 80.25$\pm$4.20 & 85.34$\pm$1.93 & 92.27$\pm$0.56 & $-$0.06$\pm$0.17 & 88.08$\pm$0.79 & 94.06$\pm$0.41\\
0.1 & 94.05$\pm$0.24 & 0.18$\pm$0.20 & 93.00$\pm$0.59 & 96.92$\pm$0.41 & 89.30$\pm$1.28 & $-$0.46$\pm$1.39 & 86.88$\pm$0.96 & 90.59$\pm$0.96 & 95.04$\pm$0.80 & 0.36$\pm$0.32 & 94.58$\pm$0.29 & 96.19$\pm$0.91\\
0.25 & 94.00$\pm$0.21 & 0.18$\pm$0.25 & 92.94$\pm$0.38 & 96.85$\pm$0.41 & 89.51$\pm$1.11 & 0.00$\pm$0.00 & 87.35$\pm$1.87 & 90.43$\pm$0.96 & 95.21$\pm$0.72 & 0.25$\pm$0.17 & 94.83$\pm$0.29 & 96.39$\pm$0.82\\
\textbf{0.5} & \textbf{94.02$\pm$0.07} & \textbf{0.32$\pm$0.33} & \textbf{93.07$\pm$0.10} & \textbf{96.82$\pm$0.46} & \textbf{89.61$\pm$0.47} & \textbf{0.00$\pm$0.46} & \textbf{87.50$\pm$1.22} & \textbf{90.74$\pm$0.00} & \textbf{95.16$\pm$1.23} & \textbf{0.58$\pm$0.58} & \textbf{95.08$\pm$0.42} & \textbf{96.31$\pm$1.13}\\
1 & 94.13$\pm$0.06 & 0.28$\pm$0.09 & 93.14$\pm$0.13 & 96.96$\pm$0.22 & 89.30$\pm$1.08 & 0.00$\pm$1.67 & 87.19$\pm$0.96 & 90.43$\pm$0.96 & 95.09$\pm$1.28 & 0.86$\pm$0.29 & 95.11$\pm$0.77 & 96.19$\pm$1.06\\
2 & 93.94$\pm$0.14 & 0.21$\pm$0.19 & 92.82$\pm$0.28 & 96.96$\pm$0.22 & 88.99$\pm$1.28 & 0.15$\pm$1.41 & 86.73$\pm$0.96 & 89.97$\pm$1.17 & 95.11$\pm$1.22 & 1.06$\pm$0.29 & 95.25$\pm$0.52 & 96.14$\pm$1.13\\
\bottomrule
\end{tabular}
\end{table*}